\theoremstyle{plain}
\newtheorem{theorem}{Theorem}[section]
\newtheorem{proposition}[theorem]{Proposition}
\newtheorem{lemma}[theorem]{Lemma}
\newtheorem{corollary}[theorem]{Corollary}
\newtheorem{remark}[theorem]{Remark}
\theoremstyle{definition}
\newtheorem{definition}[theorem]{Definition}
\newtheorem{assumption}[theorem]{Assumption}
\newtheorem{example}[theorem]{Example}
\crefname{assumption}{Assumption}{Assumptions}
\renewcommand{\paragraph}{%
  \@startsection{paragraph}{4}%
  {\z@}{1ex \@plus 1ex \@minus .2ex}{-1em}%
  {\normalfont\normalsize\bfseries}%
}
\newcounter{mycounter}
\newcommand{\dd}{\mathrm{d}}
\newcommand{\tokenset}{\mathbb{V}}
\newcommand{\numsyntax}{n_s}
\newcommand{\capacity}{C}
\newcommand{\DKL}{D_{\mathsf{KL}}}
\newcommand{\redundency}{\mathsf{Red}}
\newcommand{\crossentropy}{H}
\newcommand{\entropyrate}{\hat{H}}
\newcommand{\KLrate}{\hat{D}_{\mathsf{KL}}}
\newcommand{\source}{\phi}
\newcommand{\codelength}{L}
\newcommand{\llm}{\boldsymbol{M}}
\newcommand{\Model}{\boldsymbol{M}}
\newcommand{\dataModel}{\phi_{\text{data}}}
\newcommand{\knwModel}{\phi_{\text{knw}}}
\newcommand{\knwModelNew}{\phi_{\text{nknw}}}
\newcommand{\synModel}{\phi_{\text{syn}}}
\newcommand{\instructionModel}{\phi_{\text{ins}}}
\newcommand{\knwfamily}{\mathbf{\Phi_{\text{knw}}}}
\newcommand{\synfamily}{\mathbf{\Phi_{\text{syn}}}}
\newcommand{\datafamily}{\mathbf{\Phi_{\text{data}}}}
\newcommand{\knwprobability}{\mathcal{P}_{\boldsymbol{\Phi_{\text{knw}}}}}
\newcommand{\synprobability}{\mathcal{P}_{\boldsymbol{\Phi_{\text{syn}}}}}
\newcommand{\dataprobability}{\mathcal{P}_{\boldsymbol{\Phi_{\text{data}}}}}
\newcommand{\knwdimension}{d_{\text{knw}}}
\newcommand{\syndimension}{d_{\text{syn}}}
\newcommand{\knwLip}{L_{\text{knw}}}
\newcommand{\synLip}{L_{\text{syn}}}
\newcommand{\syncover}{n_{\text{syn}}}
\newcommand{\knwcover}{n_{\text{knw}}}
\newcommand{\knwset}{\mathbb{K}}
\newcommand{\PYP}{\mathsf{PYP}}
\newcommand{\PYMM}{\mathsf{PYMM}}
\newcommand{\PYCRP}{\mathsf{PYCRP}}
\newcommand{\basemeasure}{\pi_{\text{knw}}}
\newcommand{\synprior}{\pi_{\text{syn}}}
\newcommand{\question}{\psi}
\newcommand{\questionset}{\Psi}
\newcommand{\answer}{\omega}
\newcommand{\ratedistortion}{\mathcal{D}}
\newcommand{\Len}{\mathsf{Len}}
\newcommand{\testerror}{\mathsf{Red}}
\newcommand{\ferror}{\mathsf{Red}}
\newcommand{\dataerror}{\mathsf{Red}_D}
\newcommand{\modelerror}{\mathsf{Red}_M}
\newcommand{\testdataerror}{\mathsf{Red}_D}
\newcommand{\testmodelerror}{\mathsf{Red}_M}
\newcommand{\Knwelem}{\boldsymbol{\kappa}}
\newcommand{\Synelem}{\boldsymbol{\xi}}
\DeclareMathOperator*{\E}{\mathbb{E}}
\newcommand{\jian}[1]{{\it \color{red}{Jian: #1}}}
\newcommand{\zhixuan}[1]{{\it \color{blue}{zhixuan: #1}}}
\newcommand{\eat}[1]{}
\title{Understanding LLM Behaviors via Compression: 
Data Generation, Knowledge Acquisition and Scaling Laws}
\author{%
  Zhixuan Pan$^{1,}$\thanks{Equal contribution.} \quad Shaowen Wang$^{1,}$\footnotemark[1] \quad Pengfei Liao$^{2}$ \quad Jian Li$^{1,}$\thanks{Corresponding author.} \\
  $^{1}$Institute for Interdisciplinary Information Sciences, Tsinghua University \\
  $^{2}$School of Computer Science and Engineering, Beihang University \\
  \texttt{panzx24@mails.tsinghua.edu.cn}, \texttt{wangsw23@mails.tsinghua.edu.cn} \\ \texttt{liaopf22@buaa.edu.cn}, \texttt{lapordge@gmail.com}
}
\begin{document}


\maketitle

\begin{abstract}
Large Language Models (LLMs) have demonstrated remarkable capabilities across numerous tasks, yet principled explanations for their underlying mechanisms and several phenomena, such as scaling laws, hallucinations, and related behaviors, remain elusive. In this work, we revisit the classical relationship between compression and prediction, grounded in Kolmogorov complexity and Shannon information theory, to provide deeper insights into LLM behaviors. By leveraging the Kolmogorov Structure Function and interpreting LLM compression as a two-part coding process, we offer a detailed view of how LLMs acquire and store information across increasing model and data scales -- from pervasive syntactic patterns to progressively rarer knowledge elements. Motivated by this theoretical perspective and natural assumptions inspired by Heap’s and Zipf’s laws, we introduce a simplified yet representative hierarchical data-generation framework called the Syntax-Knowledge model. Under the Bayesian setting, we show that prediction and compression within this model naturally lead to diverse learning and scaling behaviors observed in LLMs. In particular, our theoretical analysis offers intuitive and principled explanations for both data and model scaling laws, the dynamics of knowledge acquisition during training and fine-tuning, factual knowledge hallucinations in LLMs. The experimental results validate our theoretical predictions.
\end{abstract}
\newpage
\begin{spacing}{0.9} 
    \tableofcontents
\end{spacing}
\newpage

\section{Introduction}


Large Language Models (LLMs) have emerged as one of the most influential breakthroughs in modern artificial intelligence, achieving impressive performance across a multitude of tasks, ranging from fluent text generation, translations, and summarization to answering factual queries, and even performing complex reasoning. Despite these remarkable achievements, a theoretical understanding of what enables LLMs to generalize so effectively remains limited. Traditional learning theory frameworks have not yet fully explained why certain scaling laws hold, why in-context learning emerges, or when and why hallucinations arise in the output of these models.

One promising lens for gaining deeper insight into LLM behavior is the intrinsic connection between prediction and compression. Kolmogorov complexity and Shannon information theory have long established that optimal prediction of a data sequence is intimately tied to the most efficient compression of that sequence (see e.g.,
\cite{cover2006elements,li2008introduction, rissanen1984universal, hutter2005universal}).
From this perspective, a predictive model, particularly a large language model (LLM), can be viewed as a practical approximation of the Kolmogorov compressor (see e.g., \citep{Ilyatalk2023,deletanglanguage,li2025lossless}) of training data. 
See \Cref{sec:kolmogorov} and \Cref{app:prediction_compression} for more details.

\eat{
Following the above idea, we go one step further.
We draw inspiration from the {\em Kolmogorov structure function} \citep{li2008introduction,rissanen2005kolmogorov} and we view LLM training as constructing a {\em two-part (data-to-model) code} for the training data. In the first part (the model part, which corresponds to the LLM), the model encodes regularities or structures by adjusting its parameters to efficiently compress frequently recurring syntactic patterns, followed by relatively common knowledge, and subsequently progressing to increasingly rare knowledge elements. In the second part (the data part, which corresponds to the compressed code of the data, using LLM as the compressor
\footnote{
See \citep{rissanen1976generalized,cover2006elements} or \citep{deletanglanguage}
for the details of using an LLM (or any auto-regressive predictor) to compress
data using Arithmetic code.
}
), the residual from the model’s predictions, such as factual knowledge beyond the model’s capacity, or unpredictable intrinsic noise, are explicitly encoded. Consequently, when viewed as sophisticated compression engines, LLMs systematically learn to acquire and store more `compressible" patterns within their capacity constraints, explicitly encoding rarer information and unpredictable noise separately.
}

Following this line of thought, we interpret LLM training as constructing a {\em two-part (data-to-model) code} for the training data. The first part (the model compressor part) corresponds to the LLM itself, which adjusts its parameters to learn patterns and structural regularities for more efficient compression. The second part (the data part) is the compressed code of the data, generated by using the LLM as the compressor.
\footnote{For details on using an LLM or any auto-regressive predictor to compress data via Arithmetic coding, see \citep{rissanen1976generalized,cover2006elements,deletanglanguage}
or Appendix~\ref{app:llmcompressor}.
} 
An important concept for quantifying the compression capability of a model or compressor is the Kolmogorov structure function, originally introduced by Kolmogorov (see, e.g., \citep{li2008introduction,rissanen2005kolmogorov}). The structure function characterizes the compressibility of data as a function of the model’s capacity, providing a formal framework for analyzing how model complexity influences achievable compression.
As already indicated by prior work on the structure function (see e.g., \citep{vereshchagin2004kolmogorov,rissanen2005kolmogorov,lee2022relaxing}), a model of low complexity captures only the most prominent regularities in the data, whereas allowing model of higher complexity captures more nuanced structures. In the context of compressing language corpus under capacity constraints, the most efficient LLM-based compressor should initially focuses on compressing frequently recurring regularities such as syntactic patterns, then integrates relatively common knowledge, and eventually handles increasingly rare knowledge elements. Any “residual” (such as factual knowledge exceeding the model’s capacity or unpredictable noise) must be left out of the model and explicitly encoded in the second part. 
See \Cref{sec:kolmogorov} for formal definition
of Kolmogorov structure function
and Figure~\ref{fig:kol}(a) for a schematic illustration.

Motivated by the insight from the study of Kolmogorov Structure Function and natural assumptions inspired by {\em Heap’s law} \citep{heaps1978information} 
and {\em Zipf’s law} \citep{zipf2013psycho,zipf2016human}
(see \Cref{sec:relatedwork} for more details), 
we propose the Syntax-Knowledge model, a (simplified) hierarchical data-generative framework that decomposes the generation process into two components (see Figure~\ref{fig:kol}(b)). The first component, a {\em parametric Syntax Model}, captures the syntactic structures of language, allowing for random syntactic variations. The second component, a {\em  Knowledge Model}, encodes relevant world knowledge and is modeled using the nonparametric {\em Pitman-Yor Chinese Restaurant Process}. This choice reflects the growing nature of human knowledge and captures the fact that certain pieces of information occur disproportionately more frequently than others in the data. By examining how efficiently data generated by this model can be compressed (specifically, by minimizing perplexity, or equivalently coding redundancy), we clarify the learning behaviors of LLMs: highlighting that syntax model is learned first at a faster rate, and the (factual) knowledge elements are acquired according to the order of their frequency. Furthermore, we theoretically demonstrate how model performance scales with both data size and model capacity, thus providing intuitive explanations for the scaling laws observed in real-world LLM training.
Our contributions are summarized as follows:

\begin{enumerate}
    \item \textbf{Kolmogorov Structure Function Perspective.}
   By viewing LLM training through the lens of compression, in particular a two-part coding process in the Kolmogorov Structure Function, we propose a principled framework that interprets the scaling laws of LLMs as compression under varying model capacities. In particular, we observe a natural connection between the Kolmogorov Structure Function and model scaling laws in LLMs. This perspective enables the use of theoretical tools (such as universal coding, see \cref{subsec:universalcoding_codinggame} for a brief introduction) and insights from Kolmogorov complexity and Shannon information theory, and provides an theoretical basis for distinguishing between structural regularities and incompressible randomness of language. See \Cref{sec:kolmogorov}.

    \item \textbf{A Non-Parametric Hierarchical Data Generation Model.}
    In Section~\ref{sec:datageneration}, we introduce the \textbf{Syntax-Knowledge model}, a generative model of language. The model 
    is a hierarchical Bayesian mixture model, that separates language syntax (captured by a parametric model) from (factual) knowledge (represented by a nonparametric \emph{Pitman-Yor Chinese Restaurant Process}). This design naturally accommodates the growing nature and  power-law distributions of knowledge elements, motivated by Heap’s law \citep{heaps1978information} and Zipf’s laws 
    \citep{zipf2016human}, reflecting their disproportionate frequency in real-world data.

    \item \textbf{Data Scaling Law.}
    Within a Bayesian coding framework (defined in the universal coding context), we show that perplexity minimization applied to data generated by our Syntax-Knowledge model naturally leads to data scaling laws observed in real world LLMs. In particular, one can bound Bayesian redundancy of the optimal compressor/predictor 
    using the mutual information between the prior and
    the data, and provide an upper bound of data scaling law
    by    
    $\widetilde{O}\left(C_{\text{knw}}/N^{1-\alpha}+ C_{\text{syn}}/N\right)+H$, 
    (see \cref{cor:datascalinglaw} for details)
    where $H$ is the entropy of the model (incompressible part of the loss), $N$ is the size of the training data, $C_{\text{knw}}$ and $C_{\text{syn}}$
    are constants depending on the knowledge model and syntax model respectively, and $\alpha$ is the discount parameter of the Pitman-Yor Chinese Restaurant Process employed in our knowledge model.
    See Section~\ref{sec:baye data law}. 
    
    \item \textbf{Model Scaling Law.} 
    In Section~\ref{sec:model_scaling_law}, we extend our theoretical models to account for model scaling laws, under a slightly different set of assumptions. As a consequence, we offer a more fine-grained understanding of model scaling by decomposing the test loss according to the frequency of knowledge elements. This allows us to accurately predict which knowledge elements LLMs can acquire under capacity constraints during training, and which ones they are more likely to hallucinate(even though the model may have encountered them many times) as demonstrated in Figure~\ref{fig:accuracy_freq_modelsize}.
    See Figure~\ref{fig:loss_modelscaling_decomposition} 
    for both our theoretical prediction and the corresponding experimental results on model scaling behaviors. 
    \footnote{
    We follow the experimental setting of \cite{AllenZhu-icml2024-tutorial,allen2024physics}, but use a power law distribution of individuals. The experimental details can be found in Appendix~\ref{app:experiment}.
    }
    \begin{arxiv}
    \item \textbf{Fine-Tuning.}
    In Section~\ref{sec:instruction}, we provide an intuitive explanation of the learning behaviors during fine-tuning (for instruction-following or/and knowledge injection) using the theoretical insight from our Syntax-Knowledge framework. In particular, minimizing fine-tuning loss on instruction-specific datasets primarily leads the model to learn {\em new syntax model} first while retaining previously learned knowledge. 
    Our theoretical model also provides practical recommendations that are in line with the established best practices in the literature.
    \end{arxiv}
    
\end{enumerate}

\begin{figure}[t]
    \centering
    \begin{minipage}[b]{0.45\textwidth}
        \centering
        \includegraphics[width=\textwidth,scale=0.95]{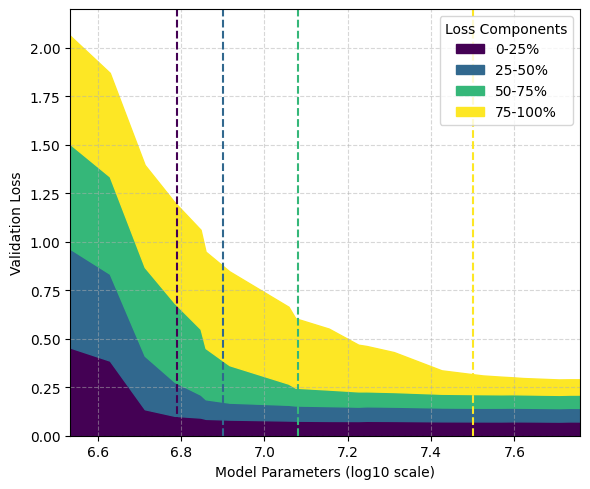}
    \end{minipage}
    \hfill
    \begin{minipage}[b]{0.45\textwidth}
        \centering
        \includegraphics[width=\textwidth,scale=0.95]{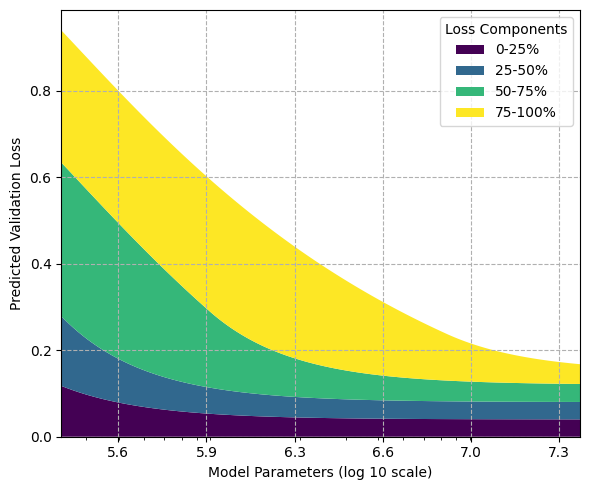}
    \end{minipage}
    \caption{
    Decomposition of validation loss on knowledge tokens by frequency class, as model size increases.
    (a) Empirical results on a power-law-distributed dataset: knowledge tokens are grouped into four frequency classes (from most to least frequent) and colored accordingly. We observe the following trend: smaller models capture only the most frequent knowledge (the loss of the most frequent class decreases the first), while larger models gradually acquire less frequent knowledge. Each vertical dashed line marks the model size beyond which further loss reduction for a given frequency class becomes negligible, indicating the irreducible part of the loss. (b) Theoretical prediction of the same loss decomposition 
    (the optimal solution of the constrained optimization problem~\eqref{eq:constrainedoptimization}
    in Section~\ref{sec:model_scaling_law}) with irreducible loss part
    (i.e., the $H(P_\phi)$ term in \eqref{eq:reddef1}), which reproduces this frequency-dependent acquisition order and plateauing behavior. 
    }  
    \label{fig:loss_modelscaling_decomposition}
\end{figure}


\section{Background}
\label{sec:background}

In this section, we review the necessary background on information theory, coding, perplexity minimization, and scaling laws in LLMs. 
Let $\tokenset$ denote the set of all possible tokens. 
We denote the training corpus as $X_{1:N} := X_1, X_2, \ldots, X_N$, where each $X_i = x_1^{(i)} x_2^{(i)} \cdots x_{l_i}^{(i)}$ is a sentence represented as a sequence of tokens, with each token $x_j^{(i)} \in \tokenset$. 

\paragraph{Perplexity Minimization in LLMs.}
In LLMs, the cross-entropy loss (or log-loss) serves as the metric for 
measuring how well the model predicts a given sequence of tokens. 
A model \( \Model \), which induces a predictive distribution \( P_{\Model} \), estimates the conditional probability of each token \( x_t \) given the preceding context \( x_{1:t-1} \) for each sentence.
We assume that all sentences in the training corpus $X_{1:N}$
are independently drawn from the source distribution $P_{\dataModel}$.
Given the training corpus $X_{1:N}$, the training objective is to minimize the \emph{empirical averaged cross-entropy loss}:
\begin{align}
\label{eq:perplexity_minimizatio}
    \mathcal{L}(\Model) 
     &   = -\frac{1}{N}\log P_{\Model}(X_{1:N}) =  - \frac{1}{N}
    \sum_{i=1}^N \log \, P_{\Model} (X_i) 
    = \mathbb{E}_{X\sim \widehat{P}_\phi}\left[-\log P_{\Model}(X)\right] \notag
    \\    & = \crossentropy(\widehat{P}_{\source} \| P_{\Model})
    =
    - \frac{1}{N}\sum_{i=1}^N \sum_{t}
    \log \, P_{\Model}(x_t^{(i)} \mid x^{(i)}_{1 : t-1}),
\end{align}
where $\widehat{P}_{\source}$ is the empirical measure of 
the underlying source distribution $P_{\dataModel}$,
and $\crossentropy(P\|Q) = \E_{X\sim P}[-\log Q(X)]$ 
is the standard cross-entropy loss.
The \emph{perplexity} (PPL) is then defined as 
\(\mathrm{PPL} := \exp\bigl(\mathcal{L}(\Model)\bigr)\), capturing the 
effective "number of choices" the model has at each token. 

\paragraph{Lossless Compression and Redundancy.}
The goal of lossless compression is to encode a sequence of tokens 
\(X=\displaystyle x_{1:n}\) (sampled from the source distribution $P_{\phi}$ parametrized by
\(\displaystyle \phi\)) into a binary sequence \(\displaystyle y_{1:m}\) of 
minimal \emph{expected} length, such that \(\displaystyle x_{1:n}\) can be 
recovered perfectly from \(\displaystyle y_{1:m}\). We use a binary source 
code \(\displaystyle c : \tokenset^* \to \{0, 1\}^*\), 
which encodes each possible 
sequence \(\displaystyle x_{1:n}\) into a binary codeword 
\(\displaystyle c(x_{1:n})\) of length \(\displaystyle \ell\bigl(c(x_{1:n})\bigr)\) bits. 
The objective is to minimize the expected code length
\begin{align}
\label{eq:codelength_crossentropy}
    \codelength(Q_c) \;:=\; 
    \mathbb{E}_{X \sim P_{\phi}}\bigl[\ell\bigl(c(X)\bigr)\bigr]
    \;=\; 
    \mathbb{E}_{X \sim P_{\phi}}\bigl[-\log Q_c(X)\bigr]
    \;=\; 
    \crossentropy(P_{\phi}\,\|\, Q_c).
\end{align}
where $Q_c(x) \propto 2^{-\ell(c(x))}$ is the {\em predictive probability corresponding to the code} $c$.
According to Shannon's source coding theorem~\cite{Shannon1948}, the average length of any lossless code is bounded below by the entropy \(H(P_\phi)\), and one can design an encoding scheme whose code length is \(H(P_\phi) + O(1)\) if $P_\phi$ is known. However, when the source distribution \(P_\phi\) is unknown, universal coding becomes necessary (see Appendix~\ref{subsec:universalcoding_codinggame} for more details). In this setting, the extra code length beyond \(H(P_\phi)\), referred to as the {\em redundancy} of the code \(c\), is defined as follows:
\begin{align}
\label{eq:reddef1}
    \redundency(Q_c, P_\phi) 
    = L(Q_c)-H(P_\phi) 
     = \crossentropy(P_\phi\,\|\,Q_c)-H(P_\phi) 
    = \DKL(P_\phi\,\|\,Q_c).
\end{align}
where $\DKL(P_{\phi}\,\|\,Q_c)
=\mathbb{E}_{x\sim P_{\phi}}\left[\log (P_{\phi}(x)/Q_c(x)) \right]$ is the Kullback–Leibler (KL) divergence between source distribution $P_{\phi}$ and predicted distribution $Q_c$.
Comparing equations \eqref{eq:perplexity_minimizatio} and \eqref{eq:codelength_crossentropy} reveals that obtaining a predictive model that minimizes perplexity (or cross-entropy loss) is essentially equivalent to finding a code with minimal expected length. (See Appendix~\ref{app:llmcompressor} for further details on the {\em equivalence between prediction and compression}.) Moreover, one can see from  \eqref{eq:reddef1} that \(H(P_\phi)\) constitutes the irreducible part of the loss, corresponding to the minimum achievable code length under Shannon’s source coding theorem. Therefore, the goal of minimizing the expected code length is essentially equivalent to minimizing the redundancy.

\paragraph{Scaling Laws in LLMs.}
The performance of LLMs, particularly the cross-entropy loss $\mathcal{L}$, has been observed to improve predictably with increases in model size, dataset size, and computational resources\citep{kaplan2020scaling,hoffmann2022training}. These empirical relationships are known as \emph{scaling laws}. A common formulation for the loss as a function of dataset size $D$ (e.g., number of tokens) and model size $M$ (e.g., number of parameters) $\mathcal{L}(D, M) \approx (D/D_0)^{-\alpha} + (M/M0)^{-\beta} + E$.
In these formulations, $D_0$ and $M_0$ represent characteristic scales for the dataset and model size respectively. The exponents $\alpha > 0$ and $\beta > 0$ determine how quickly the loss decreases as the dataset size and model size increase. The term $E$ represents the \emph{irreducible loss}, which is the minimum achievable loss that cannot be reduced by further scaling, potentially due to factors like the inherent entropy of the data.

\begin{figure}[t]
  \centering
  \begin{minipage}[t]{0.48\linewidth}
    \centering
    \includegraphics[width=\linewidth,scale = 0.95]{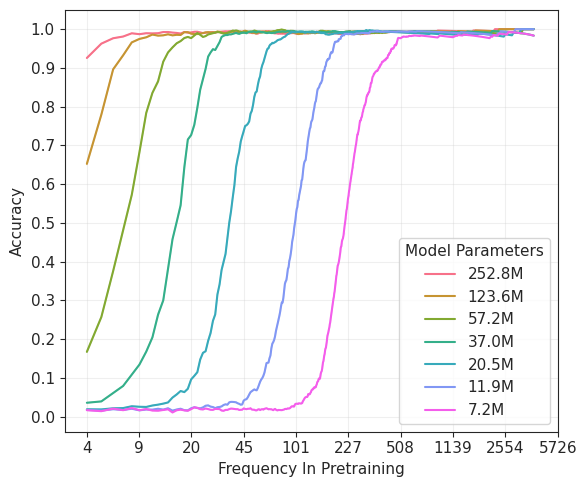}
  \end{minipage}
  \hfill
  \begin{minipage}[t]{0.48\linewidth}
    \centering
    \includegraphics[width=\linewidth,scale = 0.95]{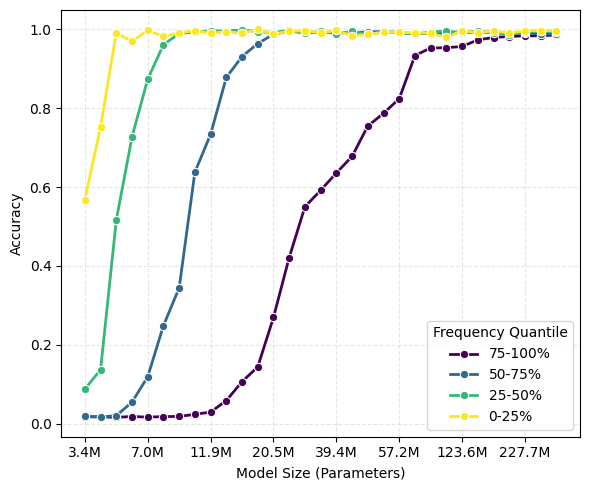}
  \end{minipage}
  \caption{
    (a) Accuracy of sufficiently trained models with different sizes across varying input frequencies. When the frequency falls below a model-specific threshold, small models inevitably hallucinate and fail to learn the corresponding facts. 
    (b) Accuracy of different frequency classes (split into four quantiles) under varying model sizes. As model size increases, the model progressively learns the more frequent data first, while infrequent data becomes learnable only at larger scales.
  }
  \vspace{-0.4cm}
  \label{fig:accuracy_freq_modelsize}
\end{figure}

\section{Kolmogorov Structure Function and Scaling Laws}
\label{sec:kolmogorov}
In this section, we briefly introduce
the concepts of Kolmogorov complexity
$K(\cdot)$ and Kolmogorov structure function
$h_X(\cdot)$
(see the classic book \cite{li2008introduction} for more details),
and how these concepts are connected to LLMs.
Although $K(\cdot)$ and $h_X(\cdot)$
are not computable, their properties directly 
motivate our theoretical modeling in latter sections.

\noindent
{\bf Kolmogorov complexity:}
The Kolmogorov complexity $K(X)$ of a string $X$ is defined as the length of the shortest binary program $p$ that outputs $X$ when run on a universal Turing machine $U$. Formally:
$
K_U(X) := \min\nolimits_{p} \{|p| : U(p) = X\}
$
where $U$ is a fixed universal Turing machine, $p$ is a binary program and $|p|$ denotes the length of the program $p$ in bits.
When there is no confusion, we omit the subscript $U$.
The Kolmogorov complexity $K(X)$ measures the information content of $X$ and is sometimes referred to as the {\em algorithmic complexity} of $X$.
$K(X)$ has been considered by many as {\em the ``ultimate" notion for the compression } 
of $X$
\footnote{
Kolmogorov complexity is {\em universal} in the following sense:
If $U$ is a universal TM, for any other TM $A$, there is a 
constant $c_A$ such that $K_U(X)\leq K_A(X)+c_A$ for all string $X$.
See \cite{li2008introduction}.
}
and modern LLMs can be thought as approximations of the Kolmogorov compressor (e.g., Ilya Sutskever's talk \citep{Ilyatalk2023}).

\noindent
{\bf Two-part description:}
One can describe the data $X$ by a two-part description:
the model description, and the {\em data-to-model} code describing
$X$ using the model. In particular, 
for any given lossless compressor $\llm$, one can see that
\begin{equation}
    K(X)\le K(\llm)+\codelength_{\llm}(X)+O(1), \label{universal optimal}
\end{equation}
where $\codelength_{\llm}(X)$ represents the compressed length of $X$ when using a lossless compressor $\llm$.  
When viewing an LLM as the compressor, 
the two-part description of the data $X$
consists of $K(\llm)$, the description of the LLM (the architecture and the parameters), and the data-to-model code of length
$\codelength_{\llm}(X)$.
Specifically, for an LLM that predicts the probability
$P_{\llm}(x_n\mid x_{1:n-1})$ of the
next token auto-regressively, 
the code length $\codelength_{\llm}(X)$ can be 
bounded by $-\log P_{\llm}(X)+O(1)$ since  
one can encode $X$ token by token using this many bits (via Arithmetic coding \citep{rissanen1976generalized,deletanglanguage}).
See Appendix~\ref{app:llmcompressor} for  details.


\noindent
{\bf Kolmogorov structure function:}
To better understand \eqref{universal optimal},
the structure of the data $X$,
and the interplay between the model size $K(\llm)$ and code length 
$\codelength_{\llm}(X)$,
we need the concept of {\em Kolmogorov structure function}, also introduced by \cite{Kolmogorov1974}.
\footnote{
See \cite{vereshchagin2004kolmogorov} for a detailed historic account.
}
It describes how well data $X$ can be “explained” or “compressed” by models of varying complexity.

\begin{definition}[\textbf{Kolmogorov structure function}]
In this paper, we need the probabilistic version of 
Kolmogorov structure function.
(see \cite{grunwald2003kolmogorov}).
If \(X\) is genuinely “typical” under certain probabilistic model 
\(P_{\Model}\) (parametrized by $\Model$), 
\(-\log P_{\Model}(X)\) represents the code length
of $X$ under model $\Model$, ignoring the integer constraint.
The Kolmogorov structure function is defined as
$$
h_X(\alpha) = \min\nolimits_{\Model}
\{
-\log P_{\Model}(X) : P_{\Model}(X) >0, K(P_{\Model})\leq \alpha.
\}
$$
\end{definition}

\noindent
Recall from Eq.~\eqref{eq:perplexity_minimizatio}
that $-\log P_{\Model}(X)$ is the empirical cross-entropy loss, which is also code length of data $X$ using predictive model $M$(up to an additive constant).
See Figure~\ref{fig:kol}
for a schematic diagram of Kolmogorov structure function
in the context of LLMs (or any probabilistic language models).
See Appendix~\ref{app:kolstructurefunc} for more details
about the set definition of Kolmogorov structure functions, 
and related concepts
including {\em random deficiency}, {\em minimum sufficient statistics}.
We explain several key aspects that motivate
our theoretical modeling in this paper.

\begin{enumerate}
\item 
\noindent
{\bf  Kolmogorov Structure Function and Scaling Laws}:  
The Kolmogorov Structure function measures the performance of an optimal compressor under a given model-size constraint. It provides deep insights into the structure of the data from the compression perspective as the complexity of compressor increases. 
\uline{Such relationship directly mirrors the {\em model scaling laws} observed in the context of LLMs} \citep{kaplan2020scaling,hoffmann2022training}, \uline{and $h_X(\alpha)$ can be viewed a {\em theoretical lower bound} 
of the empirical scaling laws (for all possible LLM architectures).}

\item 
{\bf Structure of the Data:}  
From the perspective of the Kolmogorov structure function \(h_X(\alpha)\), 
\uline{the structure of the data $X$ (its underlying regularities and randomness)
is progressively revealed as the model size (or complexity) $\alpha$ increases, from easily compressible patterns (syntax, widely used common knowledge) to increasingly incompressible components (rare knowledge, randomness).}
More concretely, with relatively small 
\(\alpha\), a small model can captures the most pervasive regularities, such as syntax in a linguistic dataset, because such patterns recur across every sentences (and therefore yield the greatest immediate reduction in coding cost or cross-entropy loss). As \(\alpha\) grows, the model gains enough capacity to encode less frequent regularities, including widely shared factual knowledge that appears relatively frequently in the data, albeit less frequently than the universal syntax. Next, it captures rarer forms of knowledge that appear in smaller subsets of the data. Finally, for very large \(\alpha\) exceeding the {\em minimal sufficient statistics}, the model starts to encode  “random noise".
We note that similar high-level ideas appeared in the study of
the shape of the structure function
\citep{vereshchagin2004kolmogorov,rissanen2005kolmogorov,li2008introduction}.

\item 
{\bf Minimal Sufficient Statistics and Randomness: }
The \emph{irreducible test loss} (e.g., the entropy term in \eqref{eq:reddef1} or \eqref{redundancy}) 
corresponds to the lowest possible test loss, ideally achieved at the \emph{minimal sufficient statistics}. Beyond this point, any further drop in \emph{training} loss merely reflects memorizing only pure randomness, such as different re-phrasings of the same fact. In our data generation model (see Section~\ref{sec:datageneration}), we adopt a probabilistic syntax model to capture such pure randomness in the language.

\item 
{\bf Redundancy and Scaling Laws:}  
A central quantity in Figure~\ref{fig:kol} is the 
{\em redundancy} of the code, given by the KL divergence \(\DKL(P_{\dataModel}\|P_{\llm}) = \crossentropy(P_{\dataModel}\|P_{\llm}) - H(P_{\dataModel})\), where \(P_{\llm}\) denotes the distribution predicted by the model and \(P_{\dataModel}\) the true data distribution. 
Hence, \uline{characterizing the redundancy curve $\DKL(P_{\dataModel}\|P_{\llm})$ (varying the description length of the model $\llm$) provides the idealized model scaling law.}

\item 
{\bf Growing Knowledge Models:} 
    Imagine that the data (consisting of sentences) has been generated sequentially. As the data size increases, it is natural to assume that the minimal sufficient statistics also grows, reflecting the expansion of “world knowledge” (e.g., newly discovered species, proteins, facts etc.). 
    This insight motivates our subsequent data model, in which the knowledge model grows as the data size (modeled using a {\em nonparametric} model). 
\end{enumerate}

\noindent
{\bf Remarks:}
Due to the non-computable nature of $K(X)$ and $h_X(\alpha)$ the quantitative bounds presented in later sections are developed within a probabilistic framework grounded in Shannon’s information theory. Nevertheless, Kolmogorov complexity remains conceptually essential to our approach for two key reasons.
(1) Insights from Kolmogorov complexity and Solomonoff’s model
(which is based on Kolmogorov complexity) fundamentally motivate our generative data model.
(2) Kolmogorov complexity and Shannon information theory are deeply interconnected: results in one domain often translate to the other under appropriate data assumptions. For example, for a sequence $X$ typical with respect to a source $P_{\phi}$, the Kolmogorov complexity $K(X)$ asymptotically approximates the Shannon code length $-\log P_{\phi}(X)$, implying that $\mathbb{E}_{X \sim P_{\phi}}[K(X)] \approx H(P_{\phi})$. Likewise, the Kolmogorov structure function serves as a non-probabilistic analogue of the {\em rate–distortion function}  in information theory~\citep{grunwald2003kolmogorov}
(we leverage rate–distortion function in 
deriving model scaling laws, see~\cref{appsec:model scaling}).

        
       
\begin{arxiv}
    \begin{figure}[t]
    \centering
    \begin{minipage}[b]{0.48\textwidth}
        \centering
        \includegraphics[width=\textwidth]{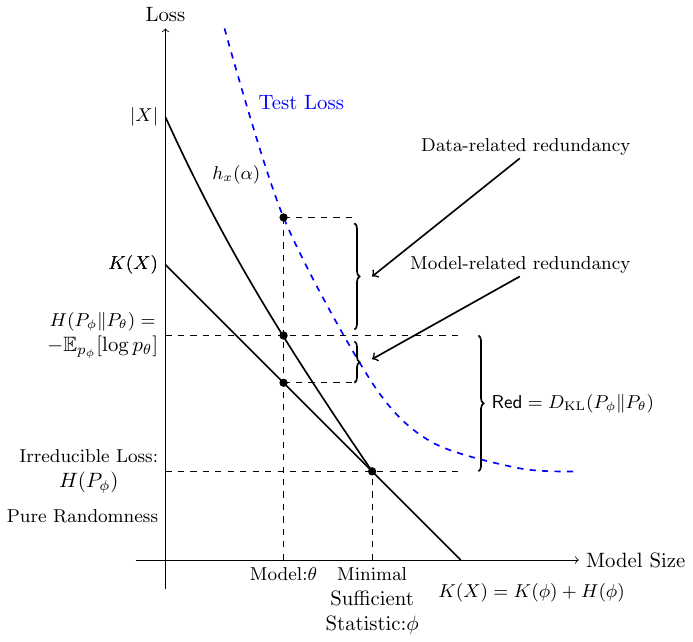}
    \end{minipage}
    \hfill
    \begin{minipage}[b]{0.48\textwidth}
        \centering
        \includegraphics[width=\textwidth]{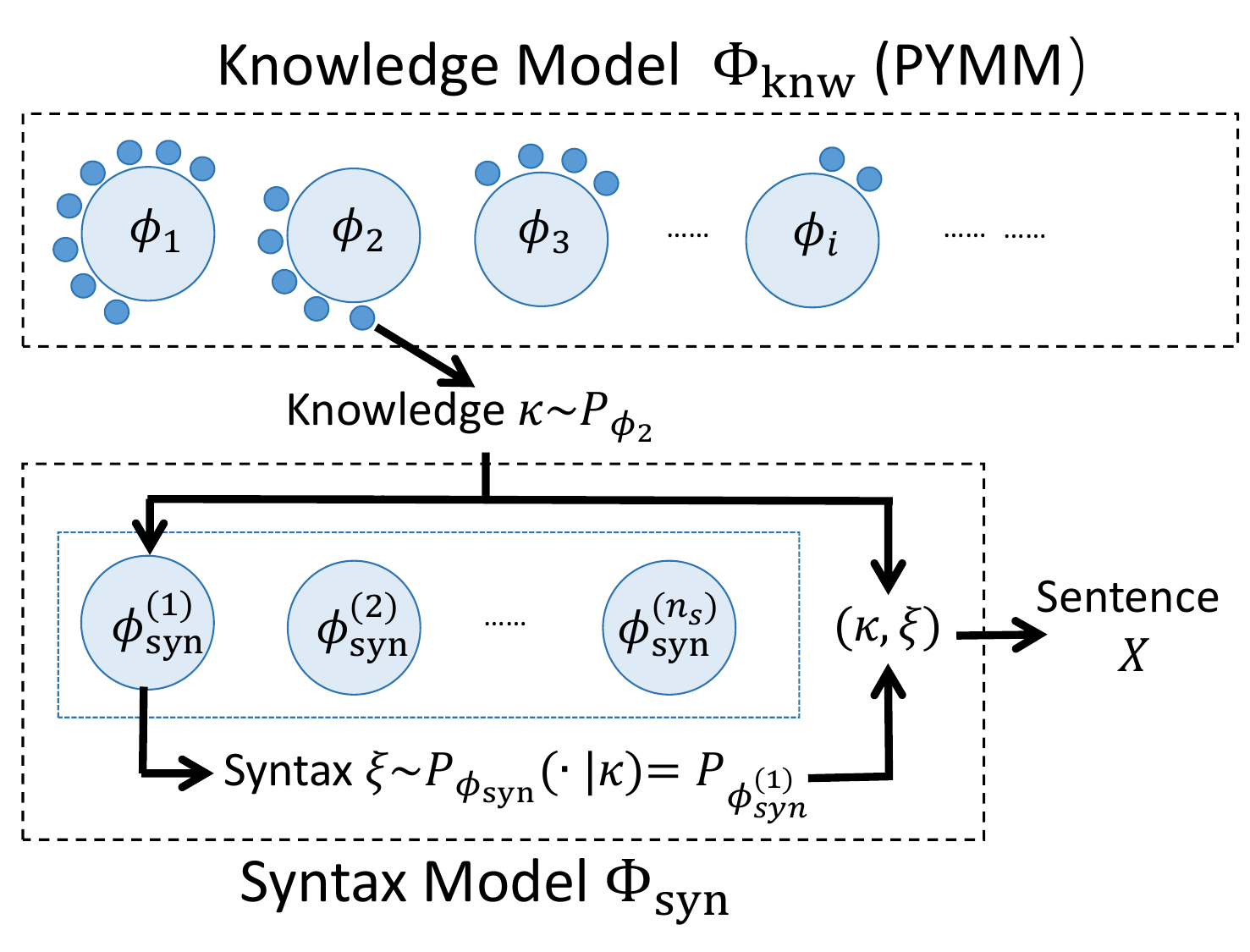}
    \end{minipage}
    \caption{(a) Kolmogorov Structure Function View of LLMs:
    The $\alpha$-axis represents model size, while the y-axis represents the loss of the model, corresponding to code length of data given the model. The anti-diagonal solid straight line is the sufficiency line ($x+y=K(X)$), which is the lower bound of the code length of all possible two-part codes. The upper solid curve represents $h_X(\alpha)$.
    The dashed blue curve is the test loss of some LLM. 
    (b) Illustration of the hierarchical 
    Syntax-Knowledge model.}
    \label{fig:kol}
    \vspace{-0.3cm}
\end{figure}
\end{arxiv}

\begin{nips}
    \begin{figure}
        \centering
        \includegraphics[width=0.5\linewidth]{Tikz_Figure.pdf}
        \caption{Kolmogorov Structure Function View of LLMs:
    The $\alpha$-axis represents model size, while the y-axis represents the loss of the model, corresponding to code length of data given the model. The anti-diagonal solid straight line is the sufficiency line ($x+y=K(X)$), which is the lower bound of the code length of all possible two-part codes. The upper solid curve represents $h_X(\alpha)$.
    The dashed blue curve is the test loss of some LLM. See more details about the test loss curve in Figure (b) in \Cref{fig:kol_3d}.}
    \label{fig:kol}
    \end{figure}
\end{nips}

\vspace{-0.3cm}
\section{A Hierarchical Data Generation Model}
\label{sec:datageneration}
\vspace{-0.2cm}

In this section, we introduce a hierarchical data generation framework, termed the {\em syntax–knowledge model}. The design of this model reflects several key insights derived from our analysis of the Kolmogorov structure function in \cref{sec:kolmogorov} and \cref{app:kolstructurefunc}. 
Our model is also deeply inspired by Solomonoff’s model (see \cite{Solomonoff1964a,cover2006elements}), 
which can be interpreted as a Bayesian mixture model with a countably infinite set of components—each corresponding to a distinct Turing machine. Similarly, our model adopts a Bayesian mixture model to represent knowledge; however, instead of treating each component as a Turing machine, we employ parametric models as the mixture components, avoiding computability and learnability issues.

Now we describe our syntax-knowledge model.
In this model, where each sentence in the training corpus is generated by a {\em syntax encoder} that encode a (factual) knowledge element, sampled from the {\em  knowledge model}. 
The syntax model (encoder) is parameterized by $\synModel$, the knowledge model is denoted as $\knwModel$, and the entire data model is denoted as 
$\dataModel=\{\synModel,\knwModel\}$. 

In our model, the syntax model $\synModel$
(e.g., a probabilistic CFG, an English/code grammar, or even a template-based format) does not grow with the size of the dataset and can be modeled using a finite-dimensional parameterized model $\synModel$. On the other hand, the knowledge model employs a {\em non-parametric} stochastic process to account for two empirically observed phenomena: 1) the unbounded growth of factual information as datasets grow (mirroring Heap's Law in lexical growth patterns \citep{heaps1978information}), and 2) the long-tailed frequency distribution of factual occurrences, analogous to Zipf's law in natural language \citep{zipf2013psycho,zipf2016human}. 

Motivated by the above idea, we leverage the nonparametric {\em Pitman–Yor process (PYP)} \citep{pitman1997two} for modeling the knowledge model $\knwModel$. A PYP is characterized by two real parameters, the {\em discount parameter} \(0 \leq \alpha < 1\) and the {\em concentration parameter} \(\beta > -\alpha\), and a base probability measure $\basemeasure$. 
We denote it as $\PYP(\alpha, \beta, \basemeasure)$.
A sample from the Pitman–Yor process \( \PYP(\alpha, \beta, \basemeasure) \) is a  random probability measure \( \knwModel = \sum_{i=1}^{\infty} p_i \delta_{\phi_i} \),  
which is a discrete distribution with countably infinite atoms,
where \( p = (p_1, p_2, \ldots) \) are the weights generated by the Pitman–Yor Chinese Restaurant Process ($\PYCRP$) (described below);
each atom \( \phi_i \sim \basemeasure \) is the  $i$-th cluster parameter independently drawn from the base measure \( \basemeasure \).

The weights $p = (p_1, p_2, \ldots)$ 
are generated by the Pitman–Yor Chinese Restaurant Process ($\PYCRP$), denote as \( p = (p_1, p_2, \ldots) \sim \PYCRP(\alpha, \beta) \).
$\PYCRP$ adopts a preferential attachment mechanism that 
naturally captures both the {\em sublinear scaling} of new factual discoveries and the {\em power-law distributed frequencies} of knowledge pieces (see Lemma~\ref{lem:PYCRP fre} in Appendix~\ref{subsec:PitmanYor}). 
$\PYCRP$ works as follows:
Imagining a restaurant where customers arrive sequentially, each choosing either to join an existing lively table or start their own.
The first customer sits at a new table.
Consider the \( n \)-th customer who just come to the restaurant.
Suppose \( N_k \) is the number of customers already at table \( k \), and \( K \) is the current number of occupied tables.
The \( n \)-th customer either joins an existing table \(k\) or starts a new table with the following probabilities: 
\begin{arxiv}
\[
\text{For the \(n\)-th customer:} \quad
\begin{cases}
\displaystyle \frac{N_k - \alpha}{n - 1 + \beta}, & \text{if joining an existing table \(k\),}\\[10pt]
\displaystyle \frac{\beta + \alpha K}{n - 1 + \beta}, & \text{if starting a new table.}
\end{cases}
\]
\end{arxiv}
\begin{nips}
The \(n\)-th customer joins an existing table \(k\) with probability
 $(N_k - \alpha)/(n - 1 + \beta)$,
 or starts a new table with probability
 $(\beta + \alpha K)/(n - 1 + \beta)$.
\end{nips}
The weight 
$
p=(p_1, p_2, \dots)=\lim\limits_{n\to \infty}(N_1/n,N_2/n,\cdots)
$
are defined as the relative sizes of the tables as the number of customers \( n \to \infty \).  
More details of $\PYMM$ can be found in Appendix~\ref{sec:PYMM}.

\noindent
{\bf Hierarchical Data Model:}
In the Pitman–Yor Mixture Model, the Pitman–Yor Process serves as a prior defined over the set of mixture distributions. 
Recall that a sample from the Pitman–Yor Process is a random probability measure of the form \( \knwModel = \sum_{i=1}^{\infty} p_i \delta_{\phi_i} \), 
where \( \{p_i\} \) are the mixture weights and \( \{\phi_i\} \) are the atoms. This can be viewed as a mixture distribution in which the \( i \)-th cluster is chosen with probability \( p_i \), 
and the corresponding model is parameterized by \( \phi_i \), referred to as the \( i \)-th knowledge cluster (table).

The parameters \( \phi_i \) are drawn from a base measure \( \basemeasure\), which acts as a prior over the cluster parameters. 
We assume \( \basemeasure \) is supported on a bounded parameter space \( \knwfamily = \{ \phi \in \mathbb{R}^{\knwdimension} : \|\phi\|_2 \le 1 \} \), 
ensuring that each \( \phi_i \in \knwfamily \) for all \( i \in \mathbb{N}^+ \).


Now, it is ready to describe the Syntax-Knowledge model, which generates a sentence \( X \) according to the following hierarchical framework:
\begin{enumerate}
    \item We first independently sample the latent parameters of the knowledge and syntax models:
    \begin{arxiv}
    \[
    \knwModel \sim \PYP(\alpha, \beta, \basemeasure),\quad 
    \synModel=\{\synModel^{(1)},\synModel^{(2)},\dots,
    \synModel^{(\numsyntax)}\},\quad 
    \synModel^{(i)} \overset{\text{i.i.d.}}{\sim} \synprior(\synModel),
    \]
    \end{arxiv}
    \begin{nips}
    $
    \knwModel \sim \PYP(\alpha, \beta, \basemeasure),\quad 
    \synModel=\{\synModel^{(1)},\synModel^{(2)},\dots,
    \synModel^{(\numsyntax)}\},\quad 
    \synModel^{(i)} \overset{\text{i.i.d.}}{\sim} \synprior(\synModel),
    $
    \end{nips}
    where $\synprior$ is the prior distribution of $\synModel^{(i)}$ and supported on a bounded parameter space \( \synfamily = \{ \phi \in \mathbb{R}^{\syndimension} : \|\phi\|_2 \le 1 \} \).\, ensuring that each \( \synModel^{(i)} \in \synfamily \) for all \( 1\le i\le \numsyntax  \).
    The value \( \numsyntax \) denotes the number of distinct syntax parameter vectors, indicating that different types of knowledge should be expressed through different syntactic patterns.
    \item We generate the sentence from both 
    $\synModel$ and $\knwModel$. In fact, we first sample
    an (abstract) knowledge element $\Knwelem$
    (corresponding to a customer) from a knowledge cluster (corresponding to a table) in $\knwModel$,
    \begin{arxiv}
    \footnote{
       In our paper, the abstract knowledge element \(\Knwelem\) is not given any concrete form, since it is only relevant for the data generation process. We simply assume that a sentence can be generated from \(\Knwelem\) combined with an appropriate syntax element. There is no need to specify a concrete architecture for the syntax or the knowledge models neither, as long as they satisfy certain assumptions (e.g., Assumption~\ref{ass: mutual information}), which suffice for our theoretical results.
    }
    \end{arxiv}
    use $\Knwelem$ to determine which syntax $\synModel^{(i)}$ (e.g., which template or format) to use, and then use the syntax encoder to generate the corresponding sentence. 
    \begin{arxiv}
         See Figure~\ref{fig:kol}(b) for an illustration. More details and discussions can be found in Appendix~\ref{app:datageneration}.
    \end{arxiv}
    \begin{nips}
        See Appendix~\ref{app:datageneration} for details and \Cref{fig:data_model} for the data model schematic.
    \end{nips}
    
\end{enumerate}

\section{Explaining Scaling Laws}
\setlength{\textfloatsep}{8pt}
\label{sec:mainscalinglaws}

\subsection{A Bayesian Coding Framework}
\label{sec:Bayesian Framework}
In this section, we 
explain LLM scaling laws by
adopting the Bayesian sequential prediction framework
(also called online Bayesian coding game). This Bayesian setting has been studied extensively in information theory (especially in the context of universal coding
or universal compression), statistics and machine learning literature (see e.g., \citep{clarke1990information,clarke1994jeffreys}
and more recent expositions \citep{duchi2024lecture,jeon2024information}).
More details can be found in \Cref{sec:baye game}.


Given the data-generating distribution \( P_{\dataModel} \), the redundancy of the model \( \Model \) (recall the definition from \eqref{eq:reddef1})
with respect to samples \( X_{1:N} \) i.i.d. drawn from \( P_{\dataModel} \) is:
\begin{align}
\label{redundancy}
    \redundency_N(\Model, \dataModel) 
&= \E\limits_{X_{1:N}\sim P_{\dataModel}} \Bigl[ -\log P_{\Model}(X_{1:N}) \Bigr] 
- \E\limits_{X_{1:N}\sim P_{\dataModel}} \Bigl[ -\log P_{\dataModel}(X_{1:N}) \Bigr] \\
&= \DKL(P_{\dataModel}^N \,\Vert\, P_{\Model}), \notag
\end{align}
where the first term represents the cumulative cross-entropy of the model \( \Model \), and the second term corresponds to the irreducible entropy of the ground-truth data distribution \( \dataModel \).



In Bayesian generative models, the data-generating parameter \( \dataModel \) is treated as a random variable drawn from a prior distribution \( \pi \). 
Let \( \datafamily \) denote the parameter space.
The \emph{Bayesian redundancy} of a model \( \Model \) is defined as the expected redundancy under the prior  \( \pi \):
\[
\redundency_N(\Model, \datafamily) := \int_{\datafamily} \pi(\dataModel)\, \redundency_N(\Model, \dataModel)\, \dd\dataModel.
\]
According to \Cref{lem:baye red} in Appendix~\ref{sec:baye game}, the {\em optimal} Bayesian redundancy (over all possible predictive models) 
is given by:
\[
\inf_{\Model} \int_{\datafamily} \pi(\dataModel) \redundency_N(\Model, \dataModel) \dd\dataModel = \int_{\datafamily} \pi(\dataModel) \DKL(P_{\dataModel}^N \Vert Q_{\pi}) \dd\dataModel = \mathbb{I}(X_{1:N};\dataModel),
\]
where \( Q_{\pi} = \int P_{\dataModel}^N \, \pi(\dataModel) \, \dd\dataModel \) denotes the marginal distribution over \( X_{1:N} \) obtained by first sampling \( \dataModel \sim \pi \), and then drawing \( X_{1:N} \overset{\text{i.i.d.}}{\sim} P_{\dataModel} \) conditioned on \( \dataModel \).

\subsection{Data Scaling Law (under the Bayesian framework)} 
\label{sec:baye data law}

Under the Bayesian sequential prediction framework, we derive an upper bound on the optimal Bayesian redundancy (which is equal to the mutual information $\mathbb{I}(X_{1:N};\dataModel)$ by Lemma~\ref{lem:baye red}) of our hierarchical data model $\dataModel$. 

We make the following natural assumption of the knowledge model:
if the parameters of two knowledge clusters are close,
the KL divergence between two induced distributions is small.

\begin{assumption}
\label{ass: mutual information}
The probability families $\synfamily$ and $\knwfamily$ satisfy the following: there exist positive constants \( \knwLip \) and \( \synLip \) such that for all \( \knwModel^{(1)}, \knwModel^{(2)} \in \knwfamily \) and \( \synModel^{(1)}, \synModel^{(2)} \in \synfamily \), we have:
\begin{align*}
    \DKL\left(P_{\knwModel^{(1)}}||P_{\knwModel^{(2)}}\right) \leq \knwLip \|\knwModel^{(1)}-\knwModel^{(2)}\|^2, \quad
    \DKL\left(P_{\synModel^{(1)}}||P_{\synModel^{(2)}}\right) \leq \synLip \|\synModel^{(1)}-\synModel^{(2)}\|^2.
\end{align*}
\end{assumption}

The constants $\knwLip$ and $\synLip$ may depend on the concrete form of the parametrization of $P_{\knwModel}$ and $P_{\synModel}$, and are typically related to the Fisher information (See \Cref{lem:KL and MSE} and \Cref{justify KL and MSE}). 
However, the particular form of the parametrization is not important for our later development.
The constants $\knwLip$ and $\synLip$ appear in logarithmic order in the upcoming \Cref{thm:data_scaling_Mutual_Information}. For the sake of clarity, we omit logarithmic terms in the statement of \Cref{thm:data_scaling_Mutual_Information}. 
Under the Bayesian setting and the above assumption, we can derive the following upper bound of the optimal Bayesian redundancy.

\begin{theorem}
\label{thm:data_scaling_Mutual_Information}
Under the Bayesian sequential prediction framework and \Cref{ass: mutual information}, the averaged optimal Bayesian redundancy (per sentence) of the hierarchical data model $\dataModel$  satisfies:
    \begin{align*}
    \inf_{\Model}\dfrac{1}{N}\redundency_N(\Model,\datafamily)=\dfrac{1}{N}\mathbb{I}(X_{1:N}; \dataModel) = 
    \widetilde{O}\left(\frac{\knwdimension}{N^{1-\alpha}}+\frac{ n_s\syndimension}{N}\right).
    \end{align*}
where $\knwdimension$ and $\syndimension$ are the parameter dimensions of the knowledge and syntax clusters, respectively, and $n_s$ is the number of distinct syntax clusters.
\end{theorem}

Using \eqref{eq:reddef1}, we can obtain the following decomposition
of the optimal Bayesian cross-entropy loss.

\begin{corollary}
\label{cor:datascalinglaw}
Suppose $\pi$ is the prior of $\dataModel$.
Under the same setting as Theorem~\ref{thm:data_scaling_Mutual_Information}, the averaged optimal Bayesian loss (per sentence) 
can be bounded as:
\begin{align*}
    \inf_{\Model}\frac{1}{N}
    \E_{\dataModel\sim\pi}\,
    \E_{X_{1:N}\sim P_{\dataModel}}
    \bigl[-\log P_{\Model}(X_{1:N})\bigr]
    =
    \widetilde{O}\left(\frac{\knwdimension}{N^{1-\alpha}}+\frac{ n_s\syndimension}{N}\right)+\dfrac{1}{N}H(X_{1:N}|\datafamily).
\end{align*}
where $H(X_{1:N}|\datafamily)=
\mathbb{E}_{\dataModel\sim \pi}[H(X_{1:N}\mid \dataModel)]$
is the irreducible part of the loss.
\end{corollary}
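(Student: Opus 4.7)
The plan is to derive the corollary as an essentially algebraic consequence of the pointwise redundancy identity (\ref{eq:reddef1}) combined with Theorem~\ref{thm:data_scaling_Mutual_Information}. The key observation is that Bayesian cross-entropy decomposes cleanly into Bayesian redundancy plus a prior-averaged conditional entropy term that is independent of the model $\Model$, so the infimum passes through the first summand while leaving the second unchanged.

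First, I would fix any $\Model$ and any $\dataModel\in\datafamily$, and apply the identity (\ref{eq:reddef1}) to the product distribution $P_{\dataModel}^{N}$ on sequences of $N$ sentences with code induced by $P_{\Model}$. This yields the pointwise relation
\begin{align*}
\E_{X_{1:N}\sim P_{\dataModel}}\!\bigl[-\log P_{\Model}(X_{1:N})\bigr]
= \redundency_N(\Model,\dataModel) + H(X_{1:N}\mid \dataModel),
\end{align*}
where $H(X_{1:N}\mid\dataModel) = \E_{X_{1:N}\sim P_{\dataModel}}[-\log P_{\dataModel}(X_{1:N})]$ is the entropy of the data under $P_{\dataModel}$. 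Next I would integrate against the prior $\pi$ over $\dataModel$; linearity of expectation together with the definition of $\redundency_N(\Model,\datafamily)$ given in Section~\ref{sec:Bayesian Framework} and the definition $H(X_{1:N}\mid\datafamily)=\E_{\dataModel\sim\pi}[H(X_{1:N}\mid\dataModel)]$ stated in the corollary give
\begin{align*}
\E_{\dataModel\sim\pi}\E_{X_{1:N}\sim P_{\dataModel}}\!\bigl[-\log P_{\Model}(X_{1:N})\bigr]
= \redundency_N(\Model,\datafamily) + H(X_{1:N}\mid\datafamily).
\end{align*}

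Since $H(X_{1:N}\mid\datafamily)$ is a constant that does not depend on the choice of $\Model$, taking the infimum over $\Model$ on both sides and dividing by $N$ yields
\begin{align*}
\inf_{\Model}\frac{1}{N}\E_{\dataModel\sim\pi}\E_{X_{1:N}\sim P_{\dataModel}}\!\bigl[-\log P_{\Model}(X_{1:N})\bigr]
= \frac{1}{N}\inf_{\Model}\redundency_N(\Model,\datafamily) + \frac{1}{N}H(X_{1:N}\mid\datafamily).
\end{align*}
Substituting the upper bound from Theorem~\ref{thm:data_scaling_Mutual_Information} on the first term on the right-hand side (which uses Assumption~\ref{ass: mutual information}) produces the claimed decomposition.

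The proof is genuinely routine: the only mild checkpoint is ensuring all quantities are well-defined (finiteness of the prior-averaged entropy, measurability so that Fubini applies when interchanging the expectation over $\dataModel$ with the expectation over $X_{1:N}$) and that the infimum indeed factorizes, which holds because $H(X_{1:N}\mid\datafamily)$ is constant in $\Model$. There is no substantive obstacle here; all the analytical heavy lifting -- the mutual information bound with the $N^{1-\alpha}$ scaling driven by the $\PYCRP$ and the $N^{-1}$ scaling driven by the finite-dimensional syntax prior -- has already been absorbed into Theorem~\ref{thm:data_scaling_Mutual_Information}.
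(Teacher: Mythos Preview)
Your proposal is correct and follows exactly the route the paper intends: the paper simply remarks that ``Using \eqref{eq:reddef1}, we can obtain the following decomposition of the optimal Bayesian cross-entropy loss,'' and your argument spells out precisely that decomposition plus the application of Theorem~\ref{thm:data_scaling_Mutual_Information}.
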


We now interpret \cref{thm:data_scaling_Mutual_Information} and \cref{cor:datascalinglaw}.
The left-hand side of \cref{cor:datascalinglaw} corresponds to the cross-entropy loss during LLM training, while the bound itself can be viewed as   the upper bound of the data scaling law (for the optimal model). On the right-hand side, the entropy term represents the irreducible constant observed in empirical scaling laws, and the first two terms correspond to the Bayesian redundancy, which diminishes as the data size increases.

Moreover, the two terms correspond respectively to the syntax and knowledge models. These two models exhibit significantly different learning behaviors: the redundancy for the syntax model decreases rapidly at a rate \(\Tilde{O}(N^{-1})\), whereas the redundancy for the knowledge model decreases more slowly at a rate \(\Tilde{O}(N^{\alpha-1})\). 
These differences highlight two distinct training phases: an early stage dominated by syntax redundancy reduction, and a later stage dominated by knowledge redundancy reduction.

Such a behavior is quite intuitive: Initially, the model primarily learns syntactic structures, since frequently recurring syntactic patterns yield substantial and immediate reductions in redundancy (and thus test cross-entropy loss). Moreover, as the syntax model can be captured by a parametric model, its redundancy decreases rapidly at the parametric learning rate \(\widetilde{O}(N^{-1})\). This is also consistent with standard Bayesian results for parametric models \citep{rissanen1986stochastic,clarke1994jeffreys}. As training progresses, the model gradually incorporates factual knowledge; however, knowledge elements with lower frequencies receive fewer training examples, resulting in a slower reduction in redundancy.

Finally, we note that a few recent studies have also 
derived the data scaling law from the power‑law data distribution 
in various stylized settings
\cite{hutter2021learning, michaud2023quantization, brill2024neural}. See \Cref{sec:relatedwork}
for more discussions.
\begin{arxiv}
Furthermore, the learning dynamic described above is similar in spirit with the phenomenon of {\em simplicity bias} in machine learning, where the model tends to learn simpler (or lower-complexity) structures first
(e.g., \citep{shah2020pitfalls,lyu2021gradient,xu2024overview,li2025feature}).    
\end{arxiv}

\begin{figure}[t]
    \centering
    \vspace{-2ex}
    \begin{minipage}[b]{0.48\textwidth}
        \centering
        \includegraphics[width=\textwidth,scale=0.8]{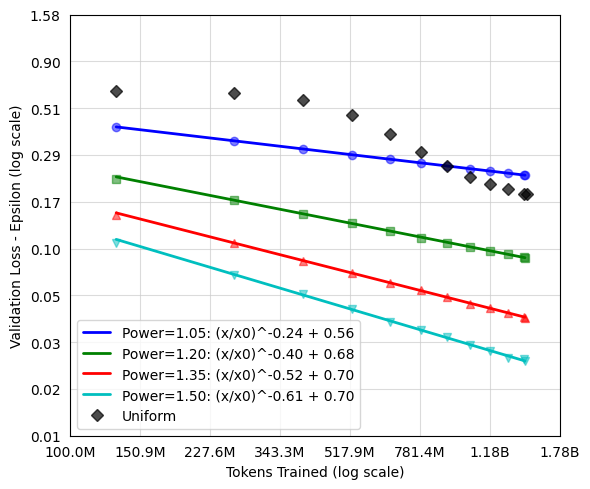}
    \end{minipage}
    \hfill
    \begin{minipage}[b]{0.48\textwidth}
        \centering
        \includegraphics[width=\textwidth,scale=0.8]{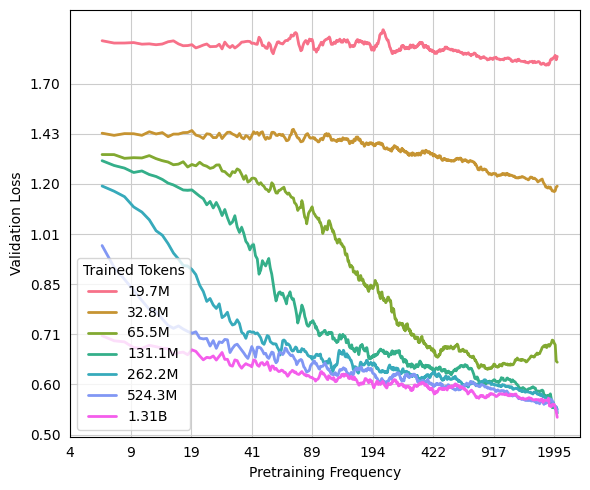}
    \end{minipage}
   \caption{
    (a) Validation loss as a function of training data size.  Models trained on data sampled from pretrained knowledge under various power-law distributions (i.e., \( p(i) \sim (x + b)^{\text{power}} \)) show clear power-law scaling of loss with data size, while uniform sampling does not. A more skewed data distribution leads to faster loss decay.  
    (b) Loss decomposition by data frequency class: high-frequency data is learned earlier, while lower-frequency data is acquired later during training.
    }
    \label{fig:data_scaling}
\end{figure}

\noindent
{\bf Experimental Validations:}  
We validate Theorem~\ref{thm:data_scaling_Mutual_Information} through experiments using datasets sampled from pretrained knowledge following various power-law distributions as well as a uniform distribution
(See the experimental details in \Cref{app:experiment}).  
In Figure~\ref{fig:data_scaling}(a), we observe that models trained with power-law sampled data exhibit a strong power-law relationship between validation loss and data size, accurately captured by the regression form \( \mathrm{loss} = (x/x_0)^{-\alpha} + \epsilon \). The fitted exponent $\alpha$ increases in magnitude as the data distribution becomes more skewed, indicating faster loss decay, which is consistent with our conclusion in \Cref{thm:data_scaling_Mutual_Information}. In contrast, models trained with uniformly sampled data deviate significantly from this power-law behavior.  We note our theory does not cover such uniform distribution, as
it cannot arise from our data modeling based on the PYCRP.

Figure~\ref{fig:data_scaling}(b) further decomposes validation loss by data frequency class over training steps. 
We find that during the initial training steps (from the topmost curve 
to the 2nd topmost curve), the loss reduces for data of all frequencies 
(although the loss reduction for higher frequency data is slightly larger).
This indicates the syntax learning phase.
Later, we observe that high-frequency data achieves more rapid loss reduction early in training, while medium- and low-frequency data improve later. This confirms a frequency-dependent learning dynamic: the model first captures more common patterns and then gradually incorporates rarer knowledge as training progresses, which is consistent
from the insight gained from Kolmogorov structure function
(see \Cref{sec:kolmogorov}).

\vspace{-0.15cm}

\subsection{Model Scaling Law}
\label{sec:model_scaling_law}

In this section, we derive a power-law characterization for the model scaling law. Our theoretical results here are derived under slightly simplified assumptions 
(but retain essential insights).
Specifically, we focus exclusively on the knowledge model and omit the syntax model, motivated by our earlier findings in Theorem~\ref{thm:data_scaling_Mutual_Information}, where we showed (and empirical observations confirm) that the syntax model is learned at a significantly faster rate. 

We continue modeling the knowledge model \( \knwModel = \sum_{i=1}^{\infty} p_i \delta_{\phi_i} \) as an infinite mixture model but now make the following assumption on the mixing probabilities \( p_i \). 

\begin{assumption}
\label{ass:powerlaw}
For the mixture model \( \knwModel = \sum_{i=1}^{\infty} p_i \delta_{\phi_i} \), the mixing probabilities \( p_i \) 
follow a power-law distribution:
$
p_i = \zeta(1/\alpha)^{-1} \, i^{-1/\alpha},
$
\begin{arxiv}
where \( \zeta(1/\alpha) = \sum_{i=1}^{\infty} i^{-1/\alpha} \) is the Riemann zeta function.
\end{arxiv}
\begin{nips}
where \( \zeta(1/\alpha) = \sum_{i=1}^{\infty} i^{-1/\alpha} \).
\end{nips}
\footnote{
This choice of exponent \(1/\alpha\) aligns with the asymptotic behavior of mixing weights in the Pitman–Yor Process \( \mathrm{PYP}(\alpha, \beta, \basemeasure) \) (see Lemma~\ref{lem:PYCRP fre}). 
}
\end{assumption}
In this section, we consider the optimal redundancy achievable by an omniscient model \( \Model^*_\capacity \) under the constraint $\capacity$. That is, we may construct the model \( \Model^*_\capacity \) from the true data distribution \( \dataModel \). The only constraint of 
\( \Model^*_\capacity \) is that the model capacity is at most $\capacity$ bits.
Due to the finite capacity, the model 
\( \Model^*_\capacity \) 
cannot memorize all the data nor the true distribution
\( P_{\dataModel} \), hence
must apply lossy compression to the true model.
In particular, We denote by 
$\ratedistortion_i(m_i)$
the redundancy incurred by answering questions from knowledge cluster $\phi_i$, and \( m_i \) denotes the constraint of the mutual information between
the model and $\phi_i$ (one may think it as the memory allocated to compress \( \phi_i \)).  
The minimal achievable redundancy under a given memory constraint can be characterized by a distortion-rate function, and we make the following assumption of  
$\ratedistortion_i(m_i)$.
\begin{arxiv}
(it is decrease as an exponential function of $m_i$).
\end{arxiv}
For the formal definition of 
$\ratedistortion_i(m_i)$
and the justification of 
the assumption, see Appendix~\ref{appsec:model scaling}.

\begin{assumption}
\label{ass:model scaling}
We assume that the distortion-rate function \( \ratedistortion_i(R) \) satisfies
\begin{arxiv}
    \begin{enumerate}
    \item there exists positive constant $c_3,c_4$ such that for all $i\in \mathcal{N}^+,R\le c_3$, the distortion-rate function $\ratedistortion_i(R)\ge c_4$,
    \item there exists positive constant $c_{\max},b_{\max}$ such that for all $i\in \mathcal{N}^+$, the distortion rate function $\ratedistortion_i(R) \le c_{\max} b_{\max}^{-R}.$
\end{enumerate}
\end{arxiv}
\begin{nips}
\\
    1. There exists $c_3,c_4$ such that for all $i\in \mathcal{N}^+,R\le c_3$, the distortion-rate function $\ratedistortion_i(R)\ge c_4$.
    
    2. There exists $c_{\max},b_{\max}$ such that for all $i\in \mathcal{N}^+$, the distortion rate function $\ratedistortion_i(R) \le c_{\max} b_{\max}^{-R}.$
\end{nips}

\end{assumption}

The redundancy minimization problem can be formulated as the following optimization problem:
\begin{align}
\label{eq:constrainedoptimization}
    \text{minimize} \quad & \mathbb{E}_i[\ratedistortion_i(m_i)] = \sum_{i=1}^{\infty} p_i \ratedistortion_i(m_i),   \\
    \text{subject to} \quad & \mathbb{I}(\Phi_0; \Model^*_\capacity ) \le \capacity, \quad m_i = \mathbb{I}(\phi_i; \Model^*_\capacity) \ge 0 \ \text{for all } i \in \mathbb{N}^+, \nonumber
\end{align}
where \( \Phi_0 = (\phi_1, \phi_2, \ldots) \).
If we further assume that \( \phi_i \perp \Phi_{-i} \mid \Model^*_\capacity \) where \( \Phi_{-i} = (\phi_{1},\ldots, \phi_{i-1}, \phi_{i+1}, \ldots) \), i.e., \( \phi_i \) is conditionally independent of the remaining cluster parameters given the model \( \Model^*_\capacity \).
In other words, conditional on the model \( \Model^*_\capacity \), knowing \(\phi_i\) does not provide any additional information about \(\phi_{j}\) for \(j \neq i\). 
Under this natural assumption, the model capacity constraint can be simplified as:
$
    \mathbb{I}(\Phi_0;  \Model^*_\capacity ) 
    = \sum_{i=1}^{\infty} \mathbb{I}
    (\phi_i;  \Model^*_\capacity ) \leq \capacity. 
$
See Appendix~\ref{appsec:model scaling} for the derivation.
Let \( \testmodelerror(\capacity) \) denote the optimal value of the optimization problem in \eqref{eq:constrainedoptimization}, representing the minimal achievable redundancy under the model capacity constraint.

\begin{theorem}
\label{thm:model scaling} 
 Under \Cref{ass:model scaling} and \Cref{ass:powerlaw}, the optimal value of the optimization problem under the given model size constraint satisfies:
\[\testmodelerror(\capacity) = \Theta(\capacity^{-1/\alpha+1}).\] 
Moreover, if we further assume that $D_k(R)=a_kb_k^{-R}$ for some constant $b_{\min}\le b_k\le b_{\max},a_{\min}\le a_k\le a_{\max}$,  
the contribution of the $k$'s cluster is
\begin{arxiv}
\[
p_k\ratedistortion_k(m_k^*)=\Theta(\min\{k^{-1/\alpha},C^{-1/\alpha}\}),
\]
\end{arxiv}
\begin{nips}
$
p_k\ratedistortion_k(m_k^*)=\Theta(\min\{k^{-1/\alpha},C^{-1/\alpha}\}),
$
\end{nips}
where $m_k^*$ is the solution of the optimization problem~\eqref{eq:constrainedoptimization}.
\end{theorem}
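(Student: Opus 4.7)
I would attack \eqref{eq:constrainedoptimization} via Lagrangian/KKT analysis: form
\[
\mathcal{L}(m,\lambda) = \sum_{i\ge 1} p_i \ratedistortion_i(m_i) + \lambda\Bigl(\sum_{i\ge 1} m_i - C\Bigr),
\]
and observe that each $\ratedistortion_i$ is convex (it is a distortion--rate function), so the problem is convex and the KKT conditions characterize the optimum. The stationarity condition reads $-p_i \ratedistortion_i'(m_i^*) = \lambda$ whenever $m_i^* > 0$, with complementary slackness forcing $m_i^* = 0$ once $-p_i \ratedistortion_i'(0) \le \lambda$. The first task is to upgrade the qualitative picture ``memory goes to the most frequent clusters first'' into a quantitative cutoff $k^*$ beyond which $m_i^*$ vanishes.

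For the tight statement, I would specialize to $\ratedistortion_k(R)=a_k b_k^{-R}$ with $a_k,b_k$ uniformly bounded. Stationarity then gives the closed form
\[
m_i^* = \frac{1}{\ln b_i}\Bigl[\ln\bigl(p_i a_i \ln b_i\bigr) - \ln\lambda\Bigr]_{+},
\]
so using $p_i = \zeta(1/\alpha)^{-1} i^{-1/\alpha}$ from \Cref{ass:powerlaw} the support $\{i : m_i^* > 0\}$ is an initial segment $\{1,\dots,k^*\}$ with $\ln k^* = \Theta(-\alpha\ln\lambda)$. Plugging $m_i^*$ into the capacity constraint and approximating the sum $\sum_{i=1}^{k^*}\ln(k^*/i)$ by $k^*$ (Stirling) yields the key relation $C = \Theta(k^*)$, i.e.\ the active cluster count grows \emph{linearly} in the bit budget. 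Substituting back, a direct computation shows that for every active $i \le k^*$ the contribution equals $p_i \ratedistortion_i(m_i^*) = \Theta((k^*)^{-1/\alpha}) = \Theta(C^{-1/\alpha})$, while for $i>k^*$ one has $m_i^*=0$ and $p_i \ratedistortion_i(0) = \Theta(i^{-1/\alpha})$ by \Cref{ass:model scaling}. Summing,
\[
\testmodelerror(C) \;=\; \underbrace{k^* \cdot \Theta(C^{-1/\alpha})}_{\text{active part}} \;+\; \underbrace{\sum_{i>k^*} \Theta(i^{-1/\alpha})}_{\text{tail, }\Theta((k^*)^{1-1/\alpha})} \;=\; \Theta(C^{1-1/\alpha}),
\]
and the per-cluster bound $p_k\ratedistortion_k(m_k^*)=\Theta(\min\{k^{-1/\alpha},C^{-1/\alpha}\})$ follows from the case split.

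To obtain the first (more general) conclusion of the theorem under only \Cref{ass:model scaling}, I would sandwich $\ratedistortion_i$ between an exponential upper envelope $c_{\max}b_{\max}^{-R}$ and the constant lower bound $c_4$ (valid for $R \le c_3$). The upper envelope lets me construct a feasible memory allocation (essentially the one computed above for the exponential case) whose cost is $O(C^{1-1/\alpha})$, giving the upper bound on $\testmodelerror(C)$. For the matching lower bound, I would use the fact that assigning $m_i \le c_3$ to cluster $i$ leaves per-cluster distortion $\ge c_4$, so any allocation that serves more than $\Theta(C)$ clusters is infeasible, forcing $\sum_{i>\Theta(C)} p_i \ratedistortion_i(m_i) \ge c_4 \sum_{i>\Theta(C)} p_i = \Omega(C^{1-1/\alpha})$ by the power-law tail sum.

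The main obstacle is the transition between the exponential special case and the general bound: the clean ``$p_i \ratedistortion_i(m_i^*)$ is constant on the active set'' equalization is an artifact of the exact exponential form, and under only two-sided envelopes one must be careful that the cutoff $k^*$ scales linearly in $C$ in both the upper and lower constructions (rather than off by a log factor). I expect this to be handled by choosing different constants inside the $\Theta(\cdot)$ for the two directions, using monotonicity/convexity of the distortion--rate functions together with the fact that $\sum_{i\le K} i^{-1/\alpha}$ and $\sum_{i>K}i^{-1/\alpha}$ are both $\Theta(K^{1-1/\alpha})$ for $0<\alpha<1$, which is what drives the final exponent.
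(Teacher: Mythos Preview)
Your proposal is correct and follows essentially the same approach as the paper: KKT analysis on the convex problem, explicit solution in the exponential case $\ratedistortion_k(R)=a_kb_k^{-R}$ with the Stirling computation $\sum_{i\le k^*}\ln(k^*/i)=\Theta(k^*)$ giving $C=\Theta(k^*)$, then sandwiching the general case between the exponential upper envelope $c_{\max}b_{\max}^{-R}$ (for the upper bound) and the pigeonhole argument ``at most $\lfloor C/c_3\rfloor$ clusters can receive $m_i>c_3$, so the tail $\sum_{i>\Theta(C)}p_i c_4$ survives'' (for the lower bound). The paper carries out exactly these steps, with the same per-cluster equalization $p_k\ratedistortion_k(m_k^*)=\Theta(\lambda)=\Theta(C^{-1/\alpha})$ on the active set and $\Theta(k^{-1/\alpha})$ on the inactive tail.
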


\noindent
{\bf Experimental Validation:} 
\begin{figure}[t]
    \centering
    \begin{minipage}[b]{0.48\textwidth}
        \centering
        \includegraphics[width=\textwidth,scale=0.95]{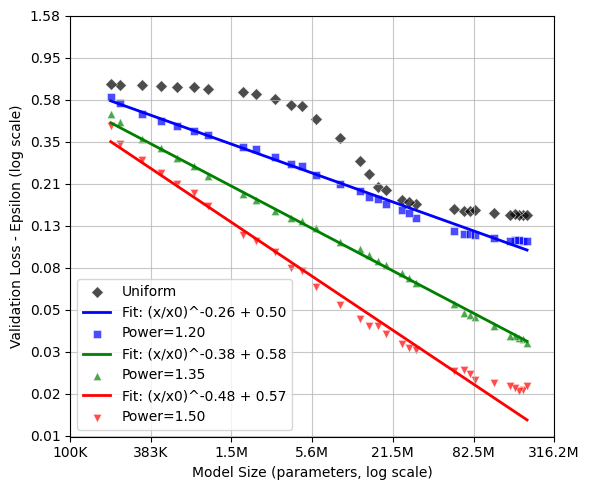}
    \end{minipage}
    \hfill
    \begin{minipage}[b]{0.48\textwidth}
        \centering
        \includegraphics[width=\textwidth,scale=0.95]{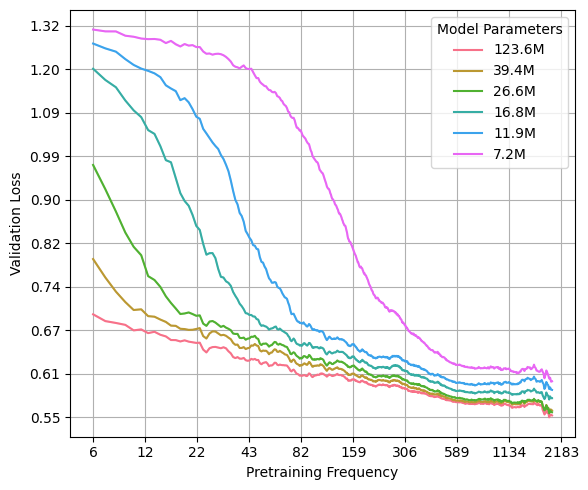}
    \end{minipage}
    \caption{
    (a) Validation loss as a function of model size (excluding token embedding head).  Regression analysis of model scaling: using the fitting form $\mathrm{loss} = (x/x_0)^{-\alpha} + \epsilon$, we find that when the data is generated from a power-law distribution, the loss decreases with model size following a power-law trend. In contrast, the loss does not exhibit power-law scaling with model size when the data is generated from a uniform distribution.
    (b) Validation loss for different frequency classes: larger models are able to better fit lower-frequency (rarer) data, while for a fixed model size, more frequent data is learned more effectively.}
    \label{fig:model_scaling}
    \vspace{-0.2cm}
\end{figure}
In Figure~\ref{fig:loss_modelscaling_decomposition}(a), we show the empirical decomposition of validation loss by knowledge token frequency class as model size increases. Figure~\ref{fig:loss_modelscaling_decomposition}(b) presents the corresponding theoretical prediction, derived from the optimal solution to the constrained optimization problem~\eqref{eq:constrainedoptimization}. The empirical results on power-law distributed data closely align with the theory: in both panels, high-frequency classes exhibit loss reduction with smaller models, while low-frequency classes only begin to improve as model capacity increases.

In Figure~\ref{fig:model_scaling}(a), we present the model scaling law across different data distributions. Consistent with our theoretical predictions, we observe that the more skewed the data distribution (i.e., the heavier the power-law tail), the larger the exponent in the fitted scaling law, indicating faster loss decay as model size increases. In contrast, 
{\em for data generated from a uniform distribution, the loss curve significantly deviates from a power law}: the loss remains nearly flat until the model reaches a critical capacity threshold, after which it drops sharply, and then plateaus again. 
This pattern arises for the following reason: initially, none of the individual items stand out and are fully learned, resulting in a flat loss. As the model grows, it begins to capture a subset of properties for most individuals, causing a rapid drop. Once all such properties have been learned, the loss flattens out again. A more detailed analysis of the learning dynamics for each property is presented in Appendix~\ref{para:data_hetero}. 
We also note that \citet{allen2024physics} investigated factual knowledge acquisition using the bioS dataset consisting of uniformly distributed individuals, and one of their main findings is that the amount of acquired knowledge (in bits) scales linearly with the model size. However, they did not examine the scaling behavior in terms of cross-entropy loss, which follows a very different scaling than power law (Figure~\ref{fig:model_scaling}(a)).

Comparing the learning curves of uniformly distributed data with those of power-law-distributed data (across both data scaling and model scaling) suggests that {\em it can be advantageous to have power-law-distributed data}, because the model can gradually learn knowledge in the order of frequency, which is more effective than the uniform case, where no one element stands out and the model lacks guidance on what to prioritize. 
Exploring how adjusting the frequency or mixing ratio of data can enhance (or accelerate) learning performance is an intriguing direction for future research (see \citep{gu2025data} for a recent study in this direction).

Our theorem also implies that for a fixed model capacity $\capacity$, if a knowledge element appears with a frequency below a certain threshold, the model will choose not to learn it, despite that the model may have seen it many times during training. This aligns with our experimental findings: hallucination tends to occur when the total amount of knowledge exceeds the model's capacity, specifically, when higher-frequency knowledge already saturates the model's capacity, lower-frequency elements are ignored. As shown in Figure~\ref{fig:accuracy_freq_modelsize}, for a 7.2M-sized model, knowledge that occurs fewer than 508 times is consistently hallucinated, regardless of the number of pretraining epochs.

\begin{arxiv}
    \subsection{Real-world Data Validation}
\label{sec:real_world_validation}

To validate the applicability of our theoretical framework to real-world scenarios, we conducted a series of experiments. We employed the same model architecture used in the synthetic data experiments and pre-trained our models on the large-scale Fineweb-edu dataset. We then analyzed and fitted the data scaling law and model scaling law observed during training. Furthermore, we evaluated the models' grasp of factual knowledge using the PopQA dataset, a popular question-answering benchmark focusing on long-tail knowledge. In our PopQA experiments, we used the object popularity (o\_pop) of a knowledge item in the training set as a proxy for its frequency.

By comparing the empirical exponents from the data and model scaling laws on Fineweb-edu against those from our synthetic experiments (which were generated using knowledge distributions with varying underlying power-law exponents, see discussions surrounding Figure~\ref{fig:data_scaling} and Figure~\ref{fig:model_scaling}), we infer that the knowledge distribution in Fineweb-edu is effectively characterized by a power-law exponent between $1.35$ and $1.5$. This observation provides a potential estimate for the exponent range of real-world knowledge distributions and is consistent with the priors in our theoretical model (e.g., Assumption~\ref{ass:powerlaw}). When fitting the model scaling law, we observed that the inherent irreducible loss for real human language (English, in this case) is considerably larger than that observed for synthetic language. This is intuitive, as the complexity and inherent stochasticity of natural language far exceed the synthetic setting. Note that the irreducible loss values derived from the model scaling law and the data scaling law differ due to our finite model and dataset sizes. In practice, the observed loss can be expressed as $\mathcal{L} = (D/D_0)^{-\alpha} + (M/M_0)^{-\beta} + \varepsilon$, so when fitting the model scaling law (by fixing data size), the estimated irreducible term effectively includes the data-dependent component, i.e., $(D/D_0)^{-\alpha} + \varepsilon$; similarly, when fitting the data scaling law, the irreducible loss includes the model-dependent term, $(M/M_0)^{-\beta} + \varepsilon$. In other words, the discrepancy in irreducible loss arises from the limited model size or data size in each respective setting.

Tests on the PopQA dataset further illuminated the characteristics of knowledge acquisition in real models. Consistent with findings from our synthetic data experiments (see Figure~\ref{fig:model_scaling}(b) and discussion around Figure~\ref{fig:accuracy_freq_modelsize}), we observed a ``knowledge cutoff'' phenomenon in real models as well. Specifically, even with sufficient training, smaller models struggle to learn knowledge that appears with low frequency in the training data. As model size increases, they become capable of learning knowledge at progressively lower frequencies.

\begin{figure}[t] 
    \centering
    \begin{minipage}[b]{0.32\textwidth}
        \centering
        \includegraphics[width=\textwidth,scale=0.95]{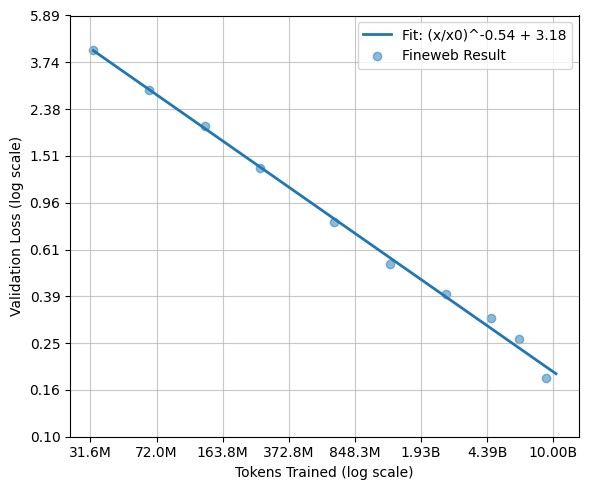}
    \end{minipage}
    \hfill
    \begin{minipage}[b]{0.32\textwidth}
        \centering
        \includegraphics[width=\textwidth,scale=0.95]{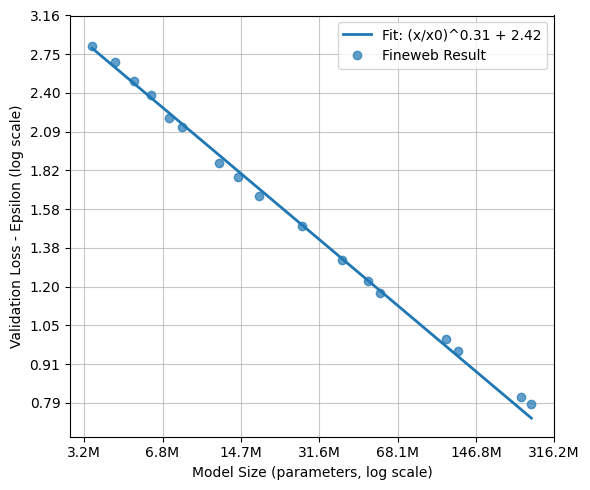}
    \end{minipage}
    \hfill
    \begin{minipage}[b]{0.32\textwidth}
        \centering
        \includegraphics[width=\textwidth,scale=0.95]{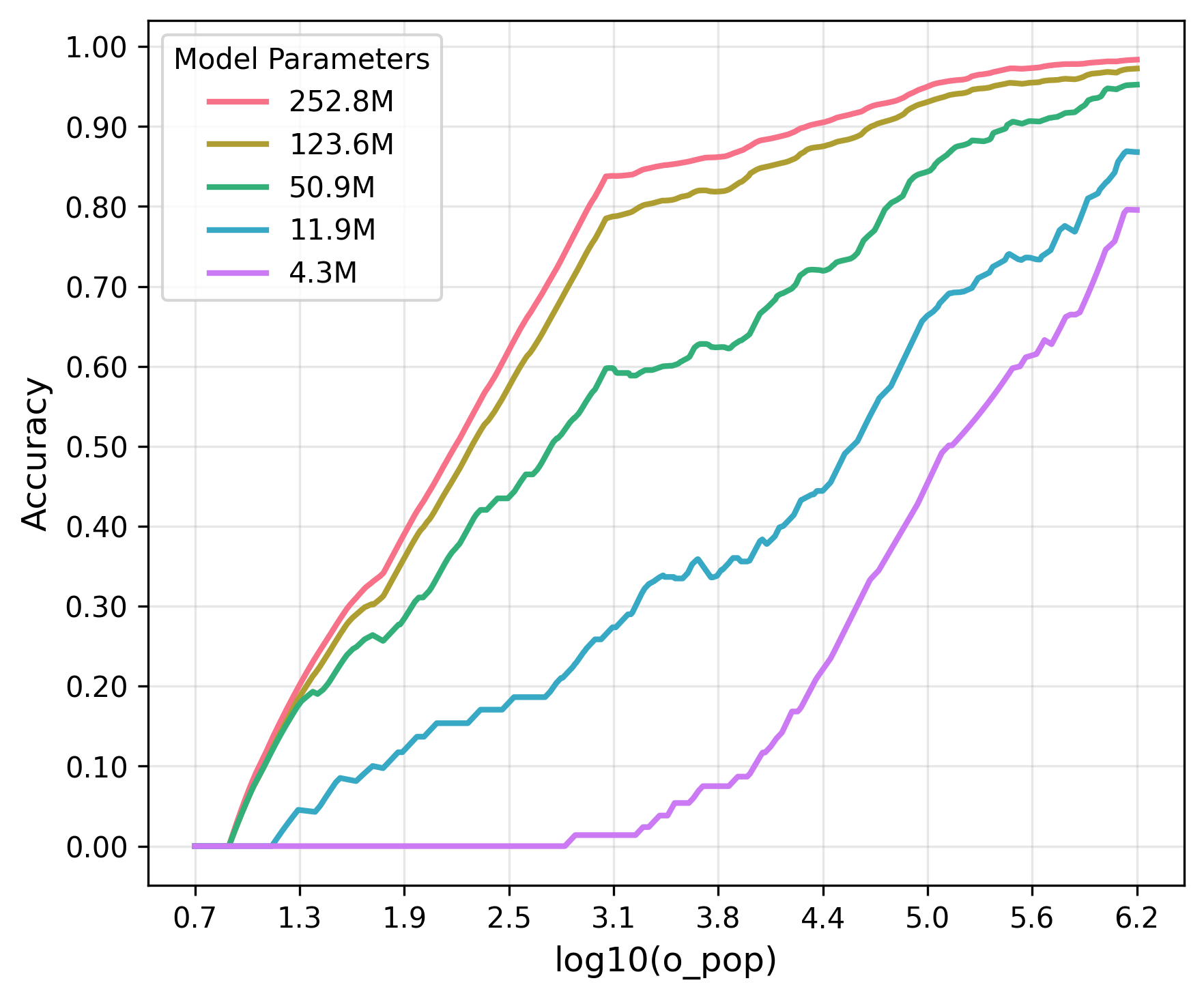}
    \end{minipage}
    \caption{
    Experimental validation on real-world data.
    (a) Data scaling law on Fineweb-edu: Validation loss as a function of training data size. The fit indicates a power-law relationship, consistent with a power-law distribution of knowledge in the dataset. The estimated exponent for this power-law distribution of knowledge is between $1.35$ and $1.5$.
    (b) Model scaling law on Fineweb-edu: Validation loss as a function of model size. The plot shows a power-law decrease in loss with increasing model parameters, and also suggests a substantially larger irreducible loss component for natural language compared to synthetic data.
    (c) Knowledge cutoff on PopQA: Model performance (e.g., accuracy) on PopQA questions, categorized by the frequency of the underlying knowledge in the Fineweb-edu training set. Smaller models exhibit a clear inability to answer questions about low-frequency knowledge, while larger models demonstrate improved performance on rarer knowledge, effectively lowering the frequency cutoff for knowledge acquisition.
    }
    \label{fig:real_world_validation}
    \vspace{-0.4cm}
\end{figure}
\end{arxiv}

\section{Discussions and Related Work}
\label{sec:relatedwork}

\noindent
{\bf Prediction and Compression:}
The link between prediction and compression is fundamental in both Shannon's probabilistic information theory and Kolmogorov's algorithmic information theory, forming the basis for efficient encoding and decoding of data \citep{cover2006elements,mackay2003information,li2008introduction}. In particular, the better one can predict the distribution of next symbol, the better one can compress the data (via  arithmatic code) and vise versa \citep{rissanen1976generalized,rissanen1983universal,deletanglanguage,li2025lossless}.  
The connection of Kolmogorov's theory to LLMs and artificial intelligence was outlined in 
Hutter's theory of universal intelligence \citep{hutter2005universal}
and Ilya Sutskever's talk at Simons institute \citep{Ilyatalk2023}.
\citep{deletanglanguage} advocate viewing language prediction as a compression problem and show that modern LLMs serve as powerful general-purpose compressors, outperforming traditional text compression tools such as gzip.

\vspace{0.1cm}
\noindent
{\bf Heap's, Zipf's Laws and Generative Models of 
Power Laws:}
Heap’s Law \citep{heaps1978information} 
an empirical relationship stating that the  vocabulary size grows sublinearly ($\propto N^\beta$, for some $\beta$ between 0.4 and 0.7) with the size of a corpus $N$, revealing that new words appear at a diminishing rate. Zipf’s Law 
\citep{zipf2016human}, meanwhile, states that 
\(f(r)\), the frequency of the \(r\)th frequent word, $\propto \frac{1}{r^{\alpha}}$, leading to a heavily skewed distribution dominated by leading terms. These complementary observations expose the
disproportionate frequency of the words, especially “long tail” of rare words, and these long-tail effects are observed not only in linguistics but also in phenomena like city populations, animal populations, website traffic, etc.

One foundational theoretical model that produces
power-law patterns is the classic Simon’s model \citep{simon1955class}, which posits that each new element is more likely to replicate already-popular elements, thus creating a “rich-get-richer” effect. 
Bayesian nonparametric models, such as the Chinese Restaurant Process (CRP) and its generalization, the Pitman–Yor CRP (PYCRP), can also yield power-law distributions while benefiting from the property of exchangeability (the probability of any particular clustering remains unchanged by the order in which data points are observed).

\vspace{0.1cm}
\noindent
{\bf Syntax-Knowledge Modeling of Language:}
There is a body of literature that
explicitly separates or conceptually distinguishes syntax and knowledge models within language models. Here we only mention a few representive ones. \citep{dyer-etal-2016-recurrent} proposed a RNN-based model that learn syntactic structures (in the form of parse trees) alongside the generation of words. They did not explicitly incorporate a separate knowledge model. 
\citep{kusner2017grammar} proposed
Grammar Variational Autoencoder combining variational autoencoders (VAEs) with formal grammars to generate syntactically valid structured data, explicitly separates the syntactic and semantic elements.
\citep{konstas2017neural}
proposed the neural Abstract Meaning Representation model that, in some pipelines, first generates a syntactic skeleton, then integrates semantic content from AMR.
Recently, \citep{gong2025disentangling} proposed a stylized data model that comprises disentangled two-type features, to capture respectively syntax and semantics in language, and analyzed the Transformer's learning process, demonstrating that a two-stage dynamic emerges where the syntax component is learned first, which aligns with our \cref{thm:data_scaling_Mutual_Information}.

\vspace{0.1cm}
\noindent
{\bf Scaling Laws:}
The study of neural scaling laws began with the observation that the population loss of trained deep neural networks follows a power-law relationship with respect to dataset size and model parameters. Early work by \cite{rosen2020feldconstructive} introduced a joint error function that captured these dependencies, laying the groundwork for empirical analyses. Henighan et al.\citep{henighan2020scaling} subsequently expanded scaling laws to a broader range of architectures and tasks, while Kaplan et al.\citep{kaplan2020scaling} demonstrated their robustness at vastly larger scales, showing that the loss scales as $L \propto (N/N_{\text{min}})^{-\alpha_N}(D/D_{\text{min}})^{-\alpha_D}$ where \(N\) is the number of parameters, \(D\) is the dataset size, and $\alpha_N$, $\alpha_D$ are scaling exponents. The Chinchilla study \citep{hoffmann2022training} later refined this framework by fitting their scaling law, and then identifying the compute-optimal frontier, showing that many prior models were undertrained, and demonstrating that scaling both parameters and data yields superior performance under fixed compute budgets. 
On the theoretical front, Bahri et al.\citep{bahri2024explaining} distinguished between variance-limited and resolution-limited regimes, identifying four distinct scaling behaviors. Sharma et al.\citep{sharma2020neural} then linked scaling exponents to the intrinsic dimension of data manifolds, highlighting the role of data geometry in performance. 
More recently, Havrilla et al.\citep{havrilla2024understanding} employed statistical and approximation theory to explain transformer scaling laws for low-dimensional data, and Daliri et al.\citep{daliri2024unlocking} established convergence guarantees for 1-bit quantized networks, extending scaling principles to extreme weight-precision settings. 
Recent work has begun to incorporate the learning rate schedule into scaling laws. Empirically, \citep{luo2025multi} proposed a multi-power law to predict entire LLM loss curves under unseen schedules, while \citep{li2025functional} theoretically introduced a {\em functional scaling law} in the kernel regression setting to capture the full loss dynamics under arbitrary learning rate schedules.

Several works have also aimed to explain the origin of power-law behavior observed in scaling laws. Hutter \citep{hutter2021learning} analyzed data scaling law under a stylized setting and showed that when the data distribution follows a power law, the resulting loss curve follows a power-law decay, rather than the common $1/N$ rate. Subsequent works \citep{michaud2023quantization, brill2024neural} extended the setting in \citep{hutter2021learning} and include aspects of model scaling, but still assumed a form for the loss on each data category, without considering the specifics of the learning algorithm. Maloney et al. \citep{maloney2022solvable} derived a power-law form for data scaling of the loss using random matrix theory, linking it to the power-law structure in the spectral properties of the data distribution. 
As in earlier studies, we also derive that a power‑law data distribution is the primary driver of the scaling law in our theoretical model. However, our work differs in the following crucial aspects. 
First, we explicitly link LLMs to compression and the scaling laws to the Kolmogorov structure function, a powerful view that can incorporate more complicated forms of language (rather than just facts). Second, we enrich the data‑generation process with a syntax component, separating from underlying knowledge and yielding a more realistic model of language generation. Third, leveraging the coding/compression view, we rigorously quantify the redundancy for both Bayes‑optimal models and any learning algorithm under mild regularity assumptions. 

\vspace{0.1cm}
\noindent
{\bf Bayesian Mixture Code:}
A common approach to constructing a universal code is to employ the \emph{Bayesian mixture code}.
It is well known that this Bayesian mixture code minimizes the Bayes risk/ redundancy 
\citep{aitchison1975goodness} and the Bayes or minimax risk/redundancy can be characterized by the mutual information \(I(\theta; X)\), which is closely tied to channel capacity \citep{gallager1979source,cover2006elements}
(see also \citep{duchi2024lecture}); in the fixed-dimensional case, the worst (capacity-achieving) prior is the Jeffreys prior \citep{clarke1994jeffreys}. Recently, Jeon et al. \citep{jeoninformation} derived a Bayes risk/redundancy upper bound for the family of deep transformers and proposed a Bayesian meta-learning model to explain in-context learning.
\footnote{Although their work did not mention universal coding explicitly, 
their main result can be interpreted in terms of universal coding and redundancy.
}
Our data-generation model draws inspiration from their meta-learning model but differs in two key aspects: we explicitly separate the syntax model 
from the knowledge model, 
and we model the growing and power-law nature of knowledge. 
Jeon and Roy \citep{jeon2024information}
proposed to employ an infinitely wide neural network as a nonparametric data-generation model to explain scaling laws, which is similar to our modeling in spirit. However, their theory predicts that the loss scales as 
\(\widetilde{O}(1/M)\) and \(\widetilde{O}(1/N)\), where \(M\) denotes the model size and \(N\) the data size, which does not capture the power-law scaling behavior observed in LLM practice (with exponents less than 1).

\vspace{0.1cm}
\noindent
{\bf Knowledge Acquisition:}
Researchers have explored multiple mechanisms for external knowledge acquisition in LLMs, including pretraining on massive datasets \citep{chang2024large}, which further reveals a power-law relationship between training steps and the forgetting of memorization and generalization of factual knowledge, and shows that LLMs trained with duplicated training data are more prone to forgetting. In addition, recent studies find that continued pretraining contributes little to factual recall unless overfitting occurs \citep{hoffbauer2024knowledge}, whereas targeted knowledge augmentation—via paraphrasing and diverse retrieval contexts—can more reliably inject domain-specific knowledge into LLMs for RAG applications \citep{bhushan2025systematic}. \citet{mallen2023not} also suggests that LLMs struggle to memorize less popular, long-tail facts even at large scales, and that retrieval augmentation can outperform much larger unassisted models while also improving efficiency. Recently, \citet{gu2025data} demonstrated that knowledge acquisition can exhibit phase transitions with respect to data mixing and model size. In particular, their findings indicate that data with a mixing ratio below a certain threshold (depending on the model size) cannot be effectively learned, a result that is consistent in spirit with our model scaling law (see the proof of Theorem~\ref{thm:model scaling}, in which there is a similar threshold).
Furthermore, \citet{ou2025llms} examined the internal representation of knowledge by introducing the concept of “knowledge circuits,” while addressing challenges related to ensuring factual accuracy and mitigating biases in scaled models. In addition, \citet{allen2024physics} provide a quantitative perspective by estimating the number of knowledge bits a model can store, showing that LLMs are capable of storing up to 2 bits of factual knowledge per parameter and analyzing how architectural choices, training regimes, and data properties influence this capacity. \citet{lu2024scaling} also discussed the relationship between fact knowledge capacity and model size and training epochs, finding they exhibit a linear and negative exponential law relationship.

\vspace{0.1cm}
\noindent
{\bf Cause of Hallucination:} While we focus on hallucinations arising from limited model capacity, prior work has identified a variety of causes. These include erroneous, outdated, or domain-incomplete data in the pretraining corpus \citep{zhang2023vibe}; biased data distributions \citep{ladhak2023pre}; instruction fine-tuning on unfamiliar or underrepresented data \citep{kang2025unfamiliar}; and knowledge shadowing \citep{zhang-etal-2025-law}, where dominant knowledge within the model suppresses less prominent information, leading to the generation of fabricated or inaccurate details.
\cite{kalai2024calibrated} propose a distributional model (on facts) and 
prove that hallucinations must occur at a certain rate if the model satisfies a statistical calibration condition. 

\vspace{0.1cm}
\noindent
{\bf Solomonoff's Universal Predictor:}
We would like to mention that Solomonoff introduced a universal Bayesian mixture over all \emph{Turing-computable} predictors \citep{Solomonoff1964a,Solomonoff1964b}, known as the \emph{Solomonoff predictor},
which is a major inspiration of the design of our knowledge model.
It is known that Solomonoff's predictor
achieves universally optimal prediction and compression rates in expectation for any computable sequence-generation process \citep{RathmannerHutter2011}. 
The connection to LLMs and meta-learning is also alluded in 
\citep{deletanglanguage, graulearning}.
However, this predictor remains a purely theoretical model and is not computable in practice. Exploring how modern LLMs might approximate the Solomonoff predictor (or a constrained version of it) is an intriguing direction for future research.

\section{Concluding Remarks}
\label{sec:Concluding Remarks}

\begin{arxiv}
We propose a nonparametric generative framework for language, the Syntax-Knowledge model. It has a language syntax encoder,
captured by a parametric model, and a factual knowledge model, captured by the nonparametric 
Pitman-Yor mixture model. Our theoretical analysis under this model demonstrates that the optimal compressors (corresponding to LLMs), first learn to compress frequently occurring syntactic patterns, then progressively acquire knowledge elements in order of frequency. This mechanism explains both data and model scaling laws, as well as the effects of fine-tuning (for either instruction following or knowledge injection). Our findings are consistent with experimental results and several phenomena reported in the literature.

Importantly, our approach is {\em data-centric}: we construct a generative model of language and derive scaling laws under the optimal predictive model (as suggested by Kolmogorov structure function). This perspective sidesteps the mathematical intractability of directly analyzing complex architectures such as multi-layer transformers and the optimization dynamics, while still capturing several important statistical behaviors of LLMs.
\end{arxiv}

There are several exciting directions to extend this work:
\begin{enumerate}
\item
While our analysis has focused on factual knowledge, real-world datasets contain far more diverse forms of information. An important future direction is to incorporate richer knowledge structures—such as relational, procedural, or commonsense knowledge—along with compositional reasoning and inference mechanisms into our theoretical framework.

\item
Our data-centric model could also shed light on how data filtering, domain mixing, and synthetic data generation influence model scaling behaviors. For instance, in data mixing, samples from multiple domains are typically combined in each batch under specific ratios \citep{liuregmix,yedata,chenaioli},. Adjusting these ratios effectively changes the underlying data distribution, which can be modeled as altering the generative probabilities of different clusters in our syntax–knowledge model. 

\item
As shown in our experiments (in \cref{fig:property_capacity_frequency}), power-law data distributions lead to smoother learning curves compared to uniform ones. It would be intriguing to further investigate which data distributions are most suitable for efficient learning, a question that could also inform the design of synthetic training datasets.

\item
Another interesting challenge is understanding how LLMs can approximate universal predictors (e.g., the Solomonoff predictor, which is also based on Kolmogorov complexity) within practical computational constraints (see, e.g., \citep{graulearning,lee2022relaxing}). Bridging these theoretical frameworks with real-world LLMs could deepen our understanding of their behaviors and pave the way for developing models that are more controllable and more reliable.
\end{enumerate}

\begin{arxiv}

\section{Acknowledgment}

We would like to thank Wenjie Huang for providing constructive feedback for an earlier version of the paper. The work is sponsored by Doubao Fund.

\end{arxiv}

\bibliography{ICML2025}
\bibliographystyle{colm2025_conference}


\newpage
\appendix
\onecolumn

\section{Prediction and Compression}
\label{app:prediction_compression}

\subsection{LLMs as Compressors}
\label{app:llmcompressor}

For completeness, we provide a brief introduction to 
{\em arithmetic coding} and explain how Large Language Models (LLMs) can serve as lossless data compressors using this method. More details can be found in, e.g., \citep{rissanen1976generalized, cover2006elements, deletanglanguage,li2025lossless}.

Suppose we have an autoregressive LLM $\llm$ that predicts the next-token probability \(P_{\llm}(x_n \mid x_{1:n-1})\). Arithmetic coding is a popular method for lossless data compression, encoding each symbol (or token) based on its predicted probability. Specifically, arithmetic coding represents a data sequence as an interval within the real line between 0 and 1, sequentially refining this interval based on conditional probabilities. The process is as follows:

\begin{enumerate}
\item \textbf{Initialization:} Start with the interval \([0, 1)\).
\item \textbf{Subdivision:} Divide the current interval into subintervals, each proportional to the probabilities assigned to symbols in the alphabet.
\item \textbf{Encoding tokens:} For each symbol in the input sequence, refine the current interval to the corresponding subinterval associated with that symbol.
\item \textbf{Output:} After processing the entire sequence, select the shortest binary number (base 2) that lies within the final interval as the encoded output.
\end{enumerate}

\begin{example}
Consider an alphabet with symbols \(A\), \(B\), and \(C\), having probabilities \(0.5\), \(0.3\), and \(0.2\), respectively. The initial interval \([0, 1)\) is divided as follows:
\[
A: [0, 0.5),\quad B: [0.5, 0.8),\quad C: [0.8, 1).
\]

To encode the message ``\(AB\)'', start with the interval \([0, 1)\). First, narrow it to \(A\)'s range \([0, 0.5)\). When processing the second symbol \(B\), subdivide the interval \([0, 0.5)\) according to the conditional probabilities:
\[
A: [0, 0.25),\quad B: [0.25, 0.4),\quad C: [0.4, 0.5).
\]

Thus, the interval for ``\(AB\)'' is \([0.25, 0.4)\). We select the number \(0.25\), whose binary representation is \(0.01\).
\end{example}

The efficiency of arithmetic coding directly relates to the predictive accuracy of the underlying LLM. Formally, we have the following proposition
\citep{rissanen1976generalized}
(see also \cite[Ch. 13]{cover2006elements}).

\begin{proposition}
\label{prop:arithmeticcode}
Let \(P_{\llm}\) be the probability distribution predicted by an LLM $\llm$ for a data sequence \(x_{1:n}\). Using arithmetic coding, the code length 
$\codelength(x_{1:n})$ required to encode the sequence \(x_{1:n}\) satisfies:
$$
\codelength(x_{1:n}) \leq -\log P_{\llm}(x_{1:n}) + O(1) = -\sum_{i=1}^n \log P_{\llm}(x_i \mid x_{1:i-1}) + O(1).
$$
\end{proposition}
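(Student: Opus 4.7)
The plan is to trace through the standard analysis of arithmetic coding, keeping careful track of the constant overhead that gets absorbed into the $O(1)$ term. First, I would formalize the encoding map. Using the chain rule, define the cumulative probabilities $F(x_{1:n}) = \sum_{y_{1:n} < x_{1:n}} P_{\llm}(y_{1:n})$ under the lexicographic order induced by an (arbitrary but fixed) ordering of $\Voc$. By induction on $n$, the interval
\[
I(x_{1:n}) := \bigl[F(x_{1:n}),\; F(x_{1:n}) + P_{\llm}(x_{1:n})\bigr)
\]
is obtained by the sequential refinement procedure described in the proposition, and crucially has length exactly $P_{\llm}(x_{1:n}) = \prod_{i=1}^{n} P_{\llm}(x_i \mid x_{1:i-1})$.

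Next, I would bound the number of bits needed to identify a point in $I(x_{1:n})$. Set $\ell := \lceil -\log_2 P_{\llm}(x_{1:n}) \rceil + 1$. Any half-open interval of length $P_{\llm}(x_{1:n})$ contains at least one dyadic interval of the form $[k \cdot 2^{-\ell}, (k+1)\cdot 2^{-\ell})$, since $2^{-\ell} \leq P_{\llm}(x_{1:n})/2$. Emit the $\ell$-bit binary expansion of such a $k/2^{\ell}$ as the codeword. This already gives $\ell \leq -\log P_{\llm}(x_{1:n}) + 2$ bits per sequence.

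To make the coding scheme lossless in the streaming sense (so that codewords for consecutive sequences can be concatenated and uniquely decoded), I would invoke the standard trick of converting the codeword into a prefix-free code, e.g.\ by prepending a self-delimiting description of its length using an Elias-style code or simply reserving a fixed end-of-stream symbol; this adds at most an additive constant (independent of the sequence and its length beyond logarithmic-in-$\ell$ overhead, which is itself $O(1)$ when we only care about the final bound up to an additive constant for a single complete sequence). Combining, the total code length satisfies
\[
\codelength(x_{1:n}) \;\leq\; \ell + O(1) \;\leq\; -\log P_{\llm}(x_{1:n}) + O(1) \;=\; -\sum_{i=1}^{n} \log P_{\llm}(x_i \mid x_{1:i-1}) + O(1),
\]
where the last equality is the chain-rule factorization of $P_{\llm}(x_{1:n})$.

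The main subtlety, and the only place where genuine care is required, is the prefix-free/termination step: the naive dyadic-approximation argument gives a code that is not immediately self-delimiting, and one must verify that a uniformly bounded number of extra bits suffices regardless of $n$ and of the specific token probabilities (which can be arbitrarily small). The remaining steps -- the interval-length identity via the chain rule and the existence of a short dyadic point inside any sufficiently long interval -- are elementary and essentially computational.
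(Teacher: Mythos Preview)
The paper does not supply its own proof of this proposition; it is stated as a classical result with citations to Rissanen (1976) and Cover \& Thomas (Ch.~13). Your proposal is correct and is exactly the standard Shannon--Fano--Elias/arithmetic-coding argument that those references give.

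One small simplification: your worry that ``the naive dyadic-approximation argument gives a code that is not immediately self-delimiting'' is unnecessary. The construction you describe already yields a prefix-free code. Distinct sequences $x_{1:n}$ correspond to disjoint intervals $I(x_{1:n})$, and a codeword $c_1$ is a prefix of a codeword $c_2$ if and only if the dyadic interval indexed by $c_1$ contains the dyadic interval indexed by $c_2$; since each chosen dyadic interval lies inside its own $I(x_{1:n})$, containment between dyadic intervals for different sequences is impossible. Hence the bound $\ell \le -\log P_{\llm}(x_{1:n}) + 2$ already suffices without any extra self-delimiting overhead, and the final paragraph of your proposal can be dropped.
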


Consequently, viewing an LLM $\llm$ as a compressor, the total description length of the data \(x\) includes two parts: the complexity \(K(\llm)\) required to describe the model itself (architecture and parameters), and the data encoding length \(\codelength(x)\), bounded as in
Proposition~\ref{prop:arithmeticcode}.

\citep{deletanglanguage,li2025lossless} showed that modern LLMs 
(such as Chinchilla 70B) can serve as powerful general-purpose compressors, significantly outperforming traditional text compression tools such as gzip and LZMA2.

\subsection{Universal Coding and The Coding Game}
\label{subsec:universalcoding_codinggame}
In this appendix, we briefly introduce the concepts 
of universal coding and the coding game and 
how these concepts are connected to perplexity 
minimization in LLMs.

\subsubsection{Universal Coding}

The celebrate Shannon’s source coding theorem (\cite{Shannon1948}) establishes a fundamental limit on achievable code rates by stating that for any code, the average code length $L$
is at least
$H(P)$, 
where 
\(\displaystyle 
H(\phi) =
H(P)=\mathbb{E}_{x \sim P}\bigl[-\log_2 P(x)\bigr]
\)
is the Shannon entropy of the source distribution $P$. Moreover, if we know the source distribution $P$,
we can encode the source message with average code length approaching $H(P)$.

However, real-world data sources often cannot be fully
characterized by a single, known distribution.
Designing coding schemes that adapt effectively to unknown distributions is known as \emph{universal coding}. 
Universal coding
seeks a single coding scheme \(c\) that achieves near-optimal performance
in terms of average code length over every distribution in a given family $\Theta$ (see e.g. \cite[Ch. 13]{cover2006elements}).
The additional cost compared to 
the entropy $H(P)$
is called the \emph{redundancy} of the code $c$.
suppose $\ell(c(x))$ is the length of the code $c(x)$
and $Q_c(x)=2^{-\ell(c(x))}$
is defined to be the predictive probability corresponding to code $c$ ($Q_c$ is a valid probability distribution by Kraft inequality).
The redundancy of code $c$ (or distribution $Q_c$) 
is formally defined as follows:
\begin{align}
\label{eq:reddef}
    \redundency(Q_c, P) & =
    \mathbb{E}_{x \sim P}
    \bigl[\ell\bigl(c(x)\bigr)\bigr]
    -H(P)
    =\mathbb{E}_{x \sim P}
    \bigl[-\log Q_c(x)\bigr]
    -H(P)
     \\
    & = \crossentropy(P\,\|\,Q_c)-H(P) 
    = \DKL(P\,\|\,Q_c). \notag
\end{align}

Hence, finding an efficient universal code that minimizes the redundancy is equivalent to finding a predictive probability that minimizes the cross-entropy.
A central goal in the study of universal coding
is to design efficient universal coding schemes
that can achieve sublinear redundancy
($\redundency = o(n)$ for sequence of length $n$).
Efficient universal coding schemes and tight redundancy bounds have been studied extensively in the information theory literature for a wide variety of distribution families (see, for example, \cite{cover2006elements,rissanen1984universal,clarke1994jeffreys,ziv1978compression,davisson1981efficient,atteson1999asymptotic,feder1996hierarchical}).

\subsubsection{A Coding Game and the Bayesian (Mixture) Strategy}
\label{sec:baye game}
In this subsection, we introduce a coding game
that has many connections to online learning, information theory and Bayesian statistics. 
We mostly follow the exposition in \cite{duchi2024lecture}.

Consider the following online Bayesian coding game.
Suppose $P_\theta$ is a distribution indexed
by $\theta$. Here the set of indices is $\Theta$ and the prior distribution of $\theta$ is $\pi(\theta)$
defined over $\Theta$.
The player's goal is model the distribution $P_\theta$
as well as possible using a distribution $Q$.
The nature chooses the distribution $P_\theta$
(according to the prior)
and sample $n$ random variables $x_i \sim P_\theta$
sequentially.
At step $i$, the player observes the history 
$x_{1:i-1}$,
chooses the distribution $Q(x_i\mid x_{1:i-1})$,
and suffers the log loss
$-\log Q(x_i\mid x_{1:i-1})$.
The overall objective is to minimize the Bayesian log-loss as follow:
$$
\inf_Q \int_\Theta \pi(\theta) \E_{x_{1:n}\sim P_\theta}
\left[\log \frac{1}{Q(x_{1:n})}\right] 
\dd \theta
=
\inf_Q \int_\Theta \pi(\theta) \sum_{i=1}^n\E_{P_\theta}
\left[\log \frac{1}{Q(x_i\mid x_{1:i-1})}\right] 
\dd \theta.
$$
In view of Proposition~\ref{prop:arithmeticcode},
the above objective can also be thought as 
minimizing the Bayesian code length (hence the name
of the game).

By Shannon's source coding theorem,
we can see that the average 
code length cannot be smaller than 
$$
\int_\Theta \pi(\theta) \E_{x_{1:n}\sim P^n_\theta}
\left[\log \frac{1}{P^n_{\theta}(x_{1:n})}\right] 
\dd \theta
=
\int_\Theta \pi(\theta) H(P^n_\theta) 
\dd \theta.
$$
Here the superscript $n$
in $P^n_\theta$ indicates the distribution over 
a sequence of $n$ random variables.
Subtracting this lower bound, the objective becomes
minimizing 
the {\em Bayesian redundancy}:
\begin{align}
& \,\,\, \inf_Q \int_\Theta \pi(\theta) \E_{x_{1:n}\sim P_\theta}
\left[\log \frac{1}{Q^n(x_{1:n})}
-\log \frac{1}{P^n_{\theta}(x_{1:n})}
\right] 
\dd \theta \notag\\
=\,\,\, &
\inf_Q\int_\Theta \pi(\theta) \DKL(P^n_\theta\| Q^n) \dd \theta
\,\,\,=\,\,\,
\inf_Q\int_\Theta \pi(\theta) \redundency(Q^n, P^n_\theta) \dd \theta
\,\,\,\overset{\Delta}{=}\,\,\,
\inf_Q\redundency_n(Q, \Theta)
\end{align}

\noindent
{\bf Bayesian Strategy:}
For a probability family \(\{P_\theta\}_{\theta \in \Theta}\), 
a common approach to constructing a universal code is to employ the \emph{Bayesian strategy} (also called {\em Bayesian mixture code}).
We place some prior distribution \(\pi\) defined over \(\Theta\),
and consider the  Bayesian mixture measure defined as 
$$
Q^n_{\pi}(x_{1:n}) \;=\; \int_\Theta P^n_\theta(x_{1:n})\,\pi(\theta)\,\dd\theta.
$$
At time step $i$, the Bayesian strategy
which uses the Bayes (posterior) estimator 
$$
Q_{\pi}(x_k = x \mid x_{1:k-1}) \;=\; \frac{Q^{k}_{\pi}(x_{1:k})}
{Q^{k-1}_{\pi}(x_{1:k-1})}
$$
as the next-token predictor. 
It is a classic result that this Bayesian mixture code minimizes the Bayesian redundancy 
$
\redundency_n(Q, \Theta) \;=\; \int_\Theta \DKL\bigl(P_\theta \,\big\|\, Q\bigr)\,\pi(\theta)\,\dd\theta
$
\citep{aitchison1975goodness}. In fact, the Bayes redundancy can be characterized by the mutual information \(I(\theta; X)\), which is closely tied to channel capacity \citep{gallager1979source,clarke1994jeffreys,cover2006elements}. 
See also \citep[Ch. 19]{duchi2024lecture} and 
\cite{jeoninformation}. For reader's convenience, we summarize
the results as the following lemma.

\begin{lemma}[\citep{clarke1994jeffreys,duchi2024lecture,jeon2024information}]
\label{lem:baye red}
The minimum Bayesian redundancy is attained by the Bayesian mixture code $Q_{\pi}$, and is equal to the mutual information between random variable $\theta$ (from the prior $\pi$ over $\Theta$) and the data $x_{1:n}$.
\begin{align*}
\inf_Q\redundency_n(Q, \Theta)=
\inf_Q\int_\Theta \pi(\theta) \DKL(P^n_\theta\| Q^n) \dd \theta
=\int_\Theta \pi(\theta) \DKL(P^n_\theta\| Q_{\pi}^n) \dd \theta
= I(x_{1:n}; \theta).
\end{align*}
Here, $\theta\in \Theta$ is sampled from the prior $\pi$,
and $x_{1:n}$ are sampled from $P^n_{\theta}$.
\end{lemma}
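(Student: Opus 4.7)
The plan is to establish a ``compensation identity'' that rewrites the Bayesian redundancy as a sum of two nonnegative terms, only one of which depends on the player's choice $Q$. Concretely, for any $Q$ with $Q^n \gg Q^n_\pi$, I would start from the pointwise decomposition
$$
\log\frac{P^n_\theta(x_{1:n})}{Q^n(x_{1:n})} \;=\; \log\frac{P^n_\theta(x_{1:n})}{Q^n_\pi(x_{1:n})} + \log\frac{Q^n_\pi(x_{1:n})}{Q^n(x_{1:n})},
$$
take expectation under $x_{1:n}\sim P^n_\theta$ to obtain KL divergences, then integrate against the prior $\pi(\theta)$. Applying Fubini to swap the order of integration in the second term and using the definition $Q^n_\pi(x_{1:n}) = \int_{\Theta}\pi(\theta)P^n_\theta(x_{1:n})\,\dd\theta$, the inner $\theta$-integral collapses, yielding
$$
\int_{\Theta}\pi(\theta)\,\DKL(P^n_\theta\|Q^n)\,\dd\theta \;=\; \int_{\Theta}\pi(\theta)\,\DKL(P^n_\theta\|Q^n_\pi)\,\dd\theta \;+\; \DKL(Q^n_\pi\|Q^n).
$$

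Next, since $\DKL(Q^n_\pi\|Q^n)\ge 0$ with equality iff $Q^n = Q^n_\pi$ almost everywhere, the infimum over $Q$ is attained precisely at the Bayesian mixture code $Q^n_\pi$, and the minimum value equals $\int_{\Theta}\pi(\theta)\,\DKL(P^n_\theta\|Q^n_\pi)\,\dd\theta$. Any $Q$ that fails to dominate $Q^n_\pi$ incurs infinite redundancy, so this restriction is without loss of generality. Finally, I would identify this minimum with the mutual information: under the joint law of $(\theta, x_{1:n})$ with $\theta\sim\pi$ and $x_{1:n}\mid\theta\sim P^n_\theta$, the marginal of $x_{1:n}$ is exactly $Q^n_\pi$ and the conditional given $\theta$ is $P^n_\theta$. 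Hence
$$
\int_{\Theta}\pi(\theta)\,\DKL(P^n_\theta\|Q^n_\pi)\,\dd\theta \;=\; \E\!\left[\log\frac{P^n_\theta(x_{1:n})}{Q^n_\pi(x_{1:n})}\right] \;=\; I(x_{1:n};\theta),
$$
which is the standard ``conditional-minus-marginal'' form of mutual information.

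The main obstacle is essentially notational rather than conceptual: this is the classical Gallager--Ryabko identity, and the only care required is measure-theoretic. One must verify that Fubini's theorem applies (the integrands are of definite sign after separating positive and negative parts, so Tonelli suffices), argue the absolute-continuity reduction above, and handle the case where $\Theta$ is uncountable by interpreting $\pi(\theta)P^n_\theta$ as a joint probability measure rather than relying on densities. No additional structural assumption on the family $\{P_\theta\}_{\theta\in\Theta}$ is needed beyond what is implicit in the problem statement.
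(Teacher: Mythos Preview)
Your proposal is correct and takes essentially the same approach as the paper: both derive the identity $\int_{\Theta}\pi(\theta)\,\DKL(P^n_\theta\|Q^n)\,\dd\theta = \int_{\Theta}\pi(\theta)\,\DKL(P^n_\theta\|Q^n_\pi)\,\dd\theta + \DKL(Q^n_\pi\|Q^n)$ via Fubini and the defining property $\int_\Theta \pi(\theta)P_\theta(x)\,\dd\theta = Q_\pi(x)$, then read off the minimizer and identify the optimal value with $I(x_{1:n};\theta)$. The paper's proof simply computes the difference $\redundency_n(Q_\pi,\Theta)-\redundency_n(Q,\Theta)=-\DKL(Q_\pi\|Q)$ directly (and omits the measure-theoretic caveats you mention), but this is the same argument rearranged.
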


\begin{proof}
We provide a simple proof for completeness.
The proof is given in the single‐observation case.  
The exact same argument applies in the \(n\)-observation case, by replacing \(x\) by \(x_{1:n}\).
We first show that minimum Bayesian redundancy is attained by the Bayesian mixture code $Q_{\pi}$.
For any distribution $Q$, we can see that  
\begin{align*}
\redundency_n(Q_{\pi}, \Theta) - \redundency_n(Q, \Theta) &\;=\;
    \int_{\Theta}\!\pi(\theta)\,
      \Bigl[\DKL\!\bigl(P_{\theta}\,\|\,Q_{\pi}\bigr)
            -
            \DKL\!\bigl(P_{\theta}\,\|\,Q\bigr)\Bigr]
   \,d\theta \\
   & \;=\;
   \int_{\Theta}\!\pi(\theta)\,
   \sum_{x} P_{\theta}(x)\,\ln\!\frac{Q(x)}{Q_{\pi}(x)}
   \,d\theta 
   \;=\; 
  \sum_{x} \Bigl[\ln\!\frac{Q(x)}{Q_{\pi}(x)}\Bigr]
            \int_{\Theta}\!\pi(\theta)\,P_{\theta}(x)\,d\theta \\
   & \;=\; 
   \sum_{x}\,Q_{\pi}(x)\,\ln\!\frac{Q(x)}{Q_{\pi}(x)} 
   \;=\; 
   -\,\DKL\!\bigl(Q_{\pi}\,\|\,Q\bigr)
   \;\;\le\;\; 0.
\end{align*}

The second part is simply rewriting the the definition of mutual information, as follows:
\[
\begin{aligned}
  I(X; \theta)
  &=  \mathbb{E}_{(\theta,X)}\!\Bigl[\ln \tfrac{P(X\mid \theta)}{P(X)}\Bigr]
  = \mathbb{E}_{\theta\sim \pi,X\sim P_{\theta}}\Bigl[
       \ln \frac{P_\theta(X)}{\int_{\Theta} \pi(\theta')\,P_{\theta'}(X)\,d\theta'}
     \Bigr] \\
  &= \int_{\Theta} \pi(\theta)\!\int_{X} 
        P_{\theta}(X)
        \,\ln \frac{P_\theta(X)}{Q_{\pi}(X)}
     \,dX\,d\theta \\
  &= \int_{\Theta} \pi(\theta)\,\DKL\!\Bigl(P_{\theta} \,\Big\|\, Q_{\pi}\Bigr)\,d\theta \;=\; \redundency(Q_{\pi},\Theta) .
\end{aligned}
\]
This proves the lemma.
\end{proof}

\section{More on Kolmogorov Structure Functions}
\label{app:kolstructurefunc}
In this appendix, we 
briefly review Kolmogorov structure function
and related concepts.
See \cite[Ch 5.5]{li2008introduction} or \citep{vereshchagin2004kolmogorov,rissanen2005kolmogorov}
for more details.

We consider the case where the data and the code
are expressed in binary bits.
One can describe every binary string $X$ by a two-part description:
the model description in the form of a finite set $S$ that 
contains $X$, and the data-to-model code describing $X$
given $S$ (using $K(X\mid S)$ bits).

In Kolmogorov complexity, we say that a binary string $X$ 
of length $n$ is {\em random} if its Kolmogorov complexity $K(X)=n\pm O(1)$
(i.e., it is impossible to compress $X$ significantly).
Extending this idea, one can define the
{\em randomness deficiency} of $X$ with respect to a set $S$
(such that $X\in S$) as
$$
\delta(X\mid S)\dot{=}\log |S| - K(X\mid S).
$$

\begin{definition}[\textbf{``Best Fit" function}]
It is also called the {\em minimal randomness deficiency} function,
which is defined as:
$$
\beta_X(\alpha) = \min\nolimits_S
\{
\delta(X\mid S) : S \ni X; K(S)\leq \alpha.
\}
$$
\end{definition}
We say $X$ is a {\em typical} or {\em random} element
of a finite set $S$, or $S$ is a fitting model for $X$,
if $X\in S$ and the randomness deficiency
$\delta(X\mid S)=O(1)$.
Basically, it says that to describe $X$ in $S$,
one essentially needs $\log |S|\pm O(1)$ bits
(meaning $X$ is not very special in $S$).
This definition parallels the concept of typical set 
in information theory \cite[Ch. 3]{cover2006elements}.
If $\delta(X\mid S)$ is small enough,
$X$ satisfies {\em all properties} of low complexity
with high probability for the elements
of $S$ (see \citep{vereshchagin2004kolmogorov} for the details).

\begin{definition}[\textbf{Kolmogorov structure function}]
Formally, it is defined as:
$$
h_X(\alpha) = \min\nolimits_S
\{
\log |S| : S \ni X; K(S)\leq \alpha,
\}
$$
for any $\alpha>0$.
The set \(S\) can be viewed as a candidate “typical set” for the data \(X\). One may understand $S$ as the {\em model} of $X$
(the complexity of $S$, 
should be at most $\alpha$)
and the term \(\log\lvert S\rvert\) measures how many bits are needed to single out \(X\) within \(S\). 
\end{definition}

From the two-part description, it is easy to see that
$
K(X)\leq K(S)+\log |S|+O(1).
$
Hence, we have that 
$$
K(X)\leq \alpha+h_X(\alpha) +O(1).
$$
Hence, $h_X(\alpha)$ cannot be below the {\em sufficiency line} $L\,: \, L(\alpha)+\alpha = K(X)$ (by more than an additive constant).
For those $\alpha$'s such that 
$K(X)\leq \alpha+h_X(\alpha) +O(1)$,
we say the corresponding model {\em sufficient statistics},
and the smallest such $\alpha$ the {\em minimum sufficient statistics}
\citep{gacs2001algorithmic}.
The notion of sufficient statistics here parallels the same notion in the probabilistic setting: a sufficient statistic for the data captures all essential and relevant information and suffices for answering any downstream questions. Any additional information beyond the sufficient statistic is treated as “random noise” with respect to these tasks. For instance, the exact phrasing of a fact may be irrelevant for any downstream tasks unless it is so widely quoted that the particular wording appears frequently in the data, in which case it belongs to the  the sufficient statistic, rather than the noise.

In this paper, we use the probabilistic extension of the above definition
(see \cite{grunwald2003kolmogorov,vereshchagin2004kolmogorov}).
As Kolmogorov himself pointed out, the finite set model class is equivalent (up to small additive terms) to the model class of probability density functions
\citep{vereshchagin2004kolmogorov,gacs2001algorithmic}
If \(X\) is genuinely “typical” under the model distribution \(P_{\Model}\) (parametrized by $\Model$), then \(\log\lvert S\rvert\) parallels the 
negative log-likelihood, \(-\log P_{\Model}(X)\) (which is the code length
of $X$ under model $\Model$, ignoring the integer constraint).
Hence, we define 
$$
h_X(\alpha) = \min\nolimits_{\Model}
\{
-\log P_{\Model}(X) : K(P_{\Model})\leq \alpha.
\}
$$

The shape of the structure function $h_X(\alpha)$ has been studied by several authors
\citep{vereshchagin2004kolmogorov,rissanen2005kolmogorov,li2008introduction}.
A major results in this line of research is that $h_X(\alpha)$
can essentially assume all possible shapes as along as it satisfies monotonicity, $h_X(0)=|X|$, $h_X(K(X))=0$
and above the sufficiency line
(up to logarithmic additive terms).

\subsection{Kolmogorov Structure Functions with Power-Law Shape}

An instructive question to consider is:
{\em What structure in the data \(X\) causes the structure function \(h_X(\alpha)\) to follow a power-law shape?} Indeed, as pointed out by \cite{hutter2021learning}, different data structures can lead to different learning curves, which in our context correspond to the curve of the Kolmogorov structure function. To build intuition, first consider the example below that naturally yields a piecewise structure.

\begin{example}
Consider a data sequence \(X = (S_1, S_2, \dots, S_m)\), where each element \(S_j\) is a 0/1 string with length $n$. These segments are generated from a mixture of three distinct random sources $P_{\text{mix}}$, defined by probability distributions \(P_1\), \(P_2\), and \(P_3\).
\begin{itemize}
    \item \textbf{Source 1:} \(\text{Source}(P_1)\). 60\% of the segments are drawn from this source,
    \item \textbf{Source 2:} \(\text{Source}(P_2)\). 30\% of the segments are drawn from this source,
    \item \textbf{Source 3:} \(\text{Source}(P_3)\). 10\% of the segments are drawn from this source.
\end{itemize}
We begin by defining the entropy rate. For a random process $P$, the entropy rate $\entropyrate(P)$ is defined as:
$ \entropyrate(P) := \lim\limits_{n \to \infty} \frac{1}{n} \mathbb{E}[-\log P(X_1, \dots, X_n)] $
where $(X_1, \dots, X_n)$ is a sequence drawn from $P$. This limit is known to exist for any stationary and ergodic process \citep{cover2006elements}. 
Throughout this section,
We assume that the entropy rate, as well as the analogously defined cross-entropy rate $\entropyrate(P\|Q)$ and KL-divergence rate $\KLrate(P\|Q)$, exist.

For simplicity of discussion, we further assume that the entropy rate of these three sources and the model capacity required to describe each source are approximately the same. We denote this capacity for describing a single source as \(C_S\). 
Moreover, we assume that all three sources cannot be approximated by any model $M$ whose capacity is below their own complexity.
\footnote{
The precise form of the models is not important 
in our discussion. But such a model exists:
For instance, a keyed noisy XOR process $P_K$:
$
X_t = \left( \bigoplus_{i=1}^l (X_{t-i} \land K_i) \right) \oplus Z_t
$
where $K = (K_1, \dots, K_l)$ is an incompressible secret key string, $\land$ is the logical AND operator, and \(Z_t\) is a Bernoulli noise variable with \(P(Z_t=1) = \epsilon\) for some very small \(\epsilon > 0\). 
}

We consider the asymptotic case as $n\rightarrow \infty$. Based on the Asymptotic Equipartition Property \citep{cover2006elements}, we can use the entropy rate $\entropyrate(P)$ as the approximation for the average codelength required to compress a segment from source $P$. Similarly, we use the cross-entropy rate $\entropyrate(P\| M)$ as the approximation for the average codelength required to compress a segment from source $P$ using model $M$. 
As $n\rightarrow \infty$, the entropy rate of the mixture of the three sources can be decomposed as\citep{beirami2014universal}
: $\entropyrate(P_{\text{mix}})= 0.6\entropyrate(P_1)+0.3\entropyrate(P_2)+0.1\entropyrate(P_3)$.
\footnote{Corollary 3 in \citep{beirami2014universal} provides the asymptotic decomposition for the entropy of a mixture of parametric sources when the number of mixture sources is sufficiently small relative to the sequence length. The total mixture entropy $H_n(P_{\text{mix}})$ for a sequence of length $n$ is given by:$ H_n(P_{\text{mix}}) = \sum_{i=1}^K \pi_i H_n(P_i) + H(\pi) + o(1) $
where $H_n(P_{\text{mix}})$ is the total entropy of the mixture source, $H(\pi)$ is the entropy of the mixture weights and $\pi = (\pi_1, \dots, \pi_K)$ are mixture weights. When we consider the entropy rate, the terms on the right-hand side that are independent of $n$, namely $H(\pi) + o(1)$, vanish. We thus obtain the decomposition of the entropy rate $ \entropyrate(P_{\text{mix}}) = \sum_{i=1}^K \pi_i \entropyrate(P_i) $.}

We now analyze the behavior of the structure function \(h_X(C)\) for the entire sequence \(X\) as the allowed model capacity \(C\) increases.

At low capacity \(C < C_S\), the set of allowed models $\{M: K(M) \le C\}$ does not include any model with capacity sufficient to describe a single source. By our assumption, the available model $M_{\text{low}}$ is a poor approximation for all three sources, meaning the cross-entropy rates $\entropyrate(P_i\|M_{\text{low}})$ for $i=1,2,3$ are all significantly higher than the true entropy rates $H(P_i)$. Consequently,  $h_X(C)$ remains significantly higher than the true entropy rate of the mixture.

As \(C\) increases, it reaches the first threshold $C_S + O(1)$. The set of allowed models $\{M: K(M) \le C\}$ now includes models for the individual sources (e.g., $M_i \approx P_i$ for $i=1,2,3$). Clearly, the best model for the whole data $X$ will be $M_1$, as it captures the largest component, resulting the largest redundancy reduction. Consequently, 60\% of segments from $P_1$ are compressed efficiently, achieving average codelength of $\entropyrate(P_1\|M_1)\approx H(P_1)$. However, the segments from $P_2$ and $P_3$ are compressed inefficiently using $M_1$. The average structure function is thus:
\begin{align*}
\dfrac{1}{nm}h_X(C_S+O(1)) & \approx 0.6 \entropyrate(P_1) + 0.3 \entropyrate(P_2\| M_1) + 0.1 \entropyrate(P_3\| M_1)) \\ & = 0.6 \entropyrate(P_1) + 0.3\entropyrate(P_2)+ 0.1\entropyrate(P_3) + 0.3 \KLrate(P_2\| M_1) + 0.1\KLrate(P_3\|M_1)
\\ & \approx \entropyrate(P_{\text{mix}})+0.3 \KLrate(P_2\| P_1) + 0.1\KLrate(P_3\|P_1).   
\end{align*}
This value is now lower, but still higher than the true entropy rate because 40\% of the data originating from $P_2, P_3$ cannot be compressed efficiently.

As \(C\) increases further, it reaches the second threshold \(2C_S + O(1)\). The set of allowed models $\{M: K(M) \le C\}$ now includes models that are mixtures of two sources. The best available model will be $M_{1,2}\approx P_{1,2}$, where $P_{1,2}=\frac{0.6P_1+0.3P_2}{0.9}$ covers the two largest components. This model efficiently compresses 90\% of the data, but still compresses the 10\% from $P_3$ inefficiently. The average structure function is thus: 
\begin{align*}
    \dfrac{1}{nm}h_X(2C_S+O(1))  \approx 0.9\entropyrate(P_{1,2})+0.1\entropyrate(P_3\|M_{1,2})
     \approx \entropyrate(P_{\text{mix}})+ 0.1\KLrate(P_3\|P_{1,2}).
\end{align*}
This value is now lower, but still higher than the true entropy rate because 10\% of the data originating from $P_3$ can not be compressed efficiently.

As \(C\) increases further, it reaches the third threshold \(3C_S + O(1)\). The set of allowed models $\{M: K(M) \le C\}$ now includes the full three-source mixture model $M_{\text{mix}}\approx P_{\text{mix}}$. Consequently, all of the data can be efficiently compressed, achieving an average structure function: 
$$
\dfrac{1}{nm}h_X(3C_S+O(1))\approx\entropyrate(P_{\text{mix}}\|M_{\text{mix}}) \approx \entropyrate(P_{\text{mix}}),
$$ 
which is the true entropy rate of the data source.

As \(C\) beyond \(3C_S+O(1)\) (the minimal sufficient statistics), no new structural compression is possible. While $h_X(C)$ will continue to decrease as more complex models can merely serve to memorize the pure
randomness within the specific data $X$
($h_X(C)$ coincides with the sufficiency line).
\qed
\end{example}

\begin{figure}
    \centering
    \includegraphics[width=0.7\linewidth]{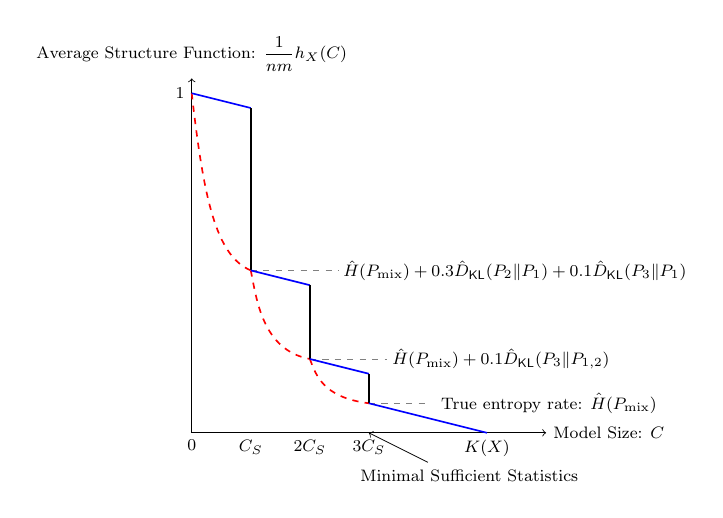}
    \caption{Illustration of the structure function $h_X(C)$ for a finite mixture of sources.  The gentle decrease (blue line) represents memorization rather than structural compression. The sharp vertical drops (black line) represent structural compression, which occurs when the model capacity $C$ becomes sufficient to capture a new source component (at $C_S$, $2C_S$, etc.). The red dashed line represents the structure function curve if the mixture components could be approximated by low-complexity models.}
    \label{fig:placeholder}
\end{figure}

The preceding example illustrates how a finite mixture of sources creates a piecewise structure. We can now build upon this intuition to understand the power-law case by generalizing from a finite to an infinite mixture model.

Consider that the data \(X\) consists of segments generated by an infinite mixture of components, where the \(i\)-th component distribution $P_i$ is associated with a mixture probability \(\pi_i\), satisfying $\sum_{i=1}^\infty \pi_i = 1$. We assume these that components are ordered by their probability, such that \(\pi_1 \ge \pi_2 \ge \dots\). We define the mixture of the $j$-th to $k$-th components as $P_{j:k} = \frac{\sum_{i=j}^{k} \pi_i P_i}{\sum_{i=j}^{k} \pi_i}$. We make the same assumption that
the entropy rate $\entropyrate(P_{j:k})$ can be approximated as: $\entropyrate(P_{j:k})\approx \sum\limits_{i=j}^k \pi_i\entropyrate(P_i).$

Following the spirit of the previous example, a model with a limited capacity \(C\) can only capture a finite number, say \(M(C)\), of these components. The best available model at this capacity level will naturally capture the mixture of the \(M(C)\) most probable components, as this yields the largest compression gain. The model then compresses the data originating from the remaining tail components inefficiently.
The average structure function is thus:
\begin{align*}
    \dfrac{1}{|X|}h_X(C) & \approx
 \left(\sum\limits_{i=1}^{M(C)}\pi_i \right) \entropyrate(P_{1:M(C)}) + \left( \sum\limits_{i=M(C)+1}^{\infty}\pi_i\right)  \entropyrate\left(P_{M(C)+1:\infty}\|P_{1:M(C)}\right)
 \\ & = \left(\sum\limits_{i=1}^{M(C)}\pi_i \right) \entropyrate(P_{1:M(C)}) +  \sum\limits_{i=M(C)+1}^{\infty}\pi_i \left(\entropyrate\left(P_{i}\right)+\KLrate(P_i\|P_{1:M(C)})\right)
 \\ & \approx \entropyrate(P_{1:\infty}) + \sum\limits_{i=M(C)+1}^{\infty}\pi_i\KLrate(P_i\|P_{1:M(C)})
\end{align*}


Furthermore, we introduce two simplifying assumptions. First, the description length of each component is assumed to be a constant, i.e., $M(C)=\Theta(C)$. Second, the redundancy associated with uncaptured components is assumed to be a constant, i.e., $\KLrate\left(P_i\|P_{1:M(C)}\right)=\Theta(1)$ for all $i$.

Then the redundancy $R(C)$ is thus:
\begin{align*}
    R(C):=\dfrac{1}{|X|}h_X(C)-H(P_{1:\infty})\approx \left(\sum\limits_{i=M(C)+1}^{\infty}\pi_i\right)\KLrate\left(P_{M(C)+1:\infty}\|P_{1:M(C)}\right) = \Theta\left(\sum\limits_{i=M(C)+1}^{\infty}\pi_i\right).
\end{align*}

The crucial insight is as follows: If the mixture probabilities \(\pi_i\) themselves follow a power-law distribution, for instance:
\[
\pi_i \propto i^{-\beta} \quad (\text{for } \beta > 1).
\]
Then the redundancy also follows a power law:
\begin{align}
\label{eq:kol_model_scaling}
    R(C)=\Theta\left(\sum_{i=M(C)+1}^{\infty} i^{-\beta} \right) = \Theta\left(\int_{M(C)}^\infty x^{-\beta} dx \right)= \Theta\left(M(C)^{1-\beta}\right)=\Theta\left(C^{1-\beta}\right).
\end{align}
Conversely, if we empirically observe that the structure function's redundancy follows a power law, we can infer that the underlying data structure also exhibits a power law. This bidirectional relationship can serve as a theoretical bridge connecting empirically observed phenomena in language, such as Zipf's Law and Heaps' Law, with the model scaling observed in large language models. 

We also note that the Kolmogorov structure function is closely related to the rate-distortion function in Shannon's information theory \cite{grunwald2003kolmogorov}. The rate-distortion function can be viewed as the probabilistic counterpart to the Kolmogorov structure function, which is defined in terms of algorithmic complexity. Following this conceptual link, our study of model scaling laws in \Cref{appsec:model scaling} leverages the rate-distortion framework to rigorously prove the conclusions corresponding to the power-law intuition developed above. More specifically, in \Cref{thm:model scaling}, we considered the case of $\pi_i \propto i^{-1/\alpha}$ and used rate-distortion theory to derive an optimal model redundancy of $\Theta(C^{-1/\alpha + 1})$ for a model size constraint $C$, which aligns with our preceding result \eqref{eq:kol_model_scaling}.

\eat{
Sometimes, it is more convenient to the following closely related "MDL" function, used in 
the Minimum Description Length (MDL) theory
(see e.g., \cite{li2008introduction,rissanen1978modeling,
barron1998minimum,grunwald2007minimum}):

\begin{definition}[\textbf{"MDL" function}]
    The "MDL" function $\lambda_{X}(\alpha)$ denotes the length of the minimal two-part description length for \(X\) consisting of the model code length \(K(S)\) and the compressed length of \(X\) given model \(S\) with the constrain that the Kolmogorov complexity of the model $S$ is not larger than $\alpha$. Formally,
$
\lambda_X(\alpha) = \min_S \{\Lambda(S) :  K(S) \leq \alpha\},
$
where \(\Lambda(S) = K(S) -\log P_S(X)  \geq K(X) - O(1)\) is the total length of two-part code of \(X\) with help of model \(M\). The smallest 
$\alpha$ that satisfies $\lambda_X(\alpha)+O(1)=K(X)$ is called \textbf{Minimal Sufficient Statistics} of $X$ (\cite{li2008introduction}).
\jian{notation inconsistent with previous discussions}
\zhixuan{We need to use M to replace S?}
\end{definition} 
}
\section{Details of the Syntax-Knowledge Model}
\label{app:datageneration}


In this section, we propose a hierarchical data generation model, called the {\em syntax-knowledge} model. In this model, where each sentence in the training set is generated by a {\em syntax encoder} that encode a (factual) knowledge element, sampled from the {\em knowledge model}. 
In our model, the syntax model (e.g., a probabilistic CFG or English grammar) does not grow indefinitely with the size of the dataset. Therefore, we assume that the distribution of the syntax model can be parametrized using a finite-dimensional parameter $\synModel$. On the other hand, the knowledge model employs a {\em non-parametric} stochastic process to account for two empirically observed phenomena: 1) the unbounded growth of factual information as datasets grow (mirroring Heap's Law in lexical growth patterns \citep{heaps1978information}), and 2) the long-tailed frequency distribution of factual occurrences, analogous to Zipf's law in natural language \citep{zipf2013psycho,zipf2016human}. 

To operationalize this above idea, we leverage the {\em Pitman–Yor Mixture Model(PYMM)} \citep{pitman1997two} for modeling the knowledge model. The preferential attachment mechanism 
in Pitman-Yor Process naturally captures both the {\em sublinear scaling} of new factual discoveries and 
the power-law distributed frequencies of knowledge pieces.

\vspace{0.1cm}
\noindent
{\bf Notations:} We denote the conditional probability \( p(x \mid \phi) \) as \( p_\phi(x) \), where \( \phi \) is the latent variable that determines this probability distribution.
The syntax model is parameterized by $\synModel$, the knowledge model is denoted as $\knwModel$, and the entire data model is denoted as 
$\dataModel=\{\synModel,\knwModel\}$. 


The syntactic elements generated by the syntax model are denoted by 
\( \Synelem_{1:N} \), and the knowledge elements generated by the knowledge model are denoted by \( \Knwelem_{1:N} \). 
The corpus consists of sentences \( X_{1:N} \), where each sentence \( X_i \) is generated by the syntax model from a syntax–knowledge pair \( (\Synelem_i, \Knwelem_i) \). 
Now, we provide the details below.



\subsection{The Non-Parametric Knowledge Model}
\label{sec:PYMM}

We model the knowledge model as a Pitman-Yor Mixture Model ($\PYMM$), which is a {\em nonparametric Bayesian} model. 
We first introduce the Pitman-Yor Process, from which the $\PYMM$ is constructed.

\noindent
{\bf Pitman-Yor Process:}
The {\em Pitman-Yor Process (PYP)}, also known as the {\em Pitman-Yor two-parameter Poisson–Dirichlet process}, extends the Dirichlet Process (DP) and the Chinese Restaurant Process \citep{pitman1997two}. It has been applied
to model a growing number of topics in the literature of 
topic modeling \citep{teh2006hierarchical, goldwater2011producing}. A Pitman–Yor Process (PYP) is characterized by two real-valued parameters: the \emph{discount parameter} \( 0 \leq \alpha < 1 \) and the \emph{concentration parameter} \( \beta > -\alpha \), along with a base probability measure \( \basemeasure \). We denote the process as \( \PYP(\alpha, \beta, \basemeasure) \).

A sample from the Pitman–Yor process \( \PYP(\alpha, \beta, \basemeasure) \) is a  
random probability measure \( \knwModel = \sum_{i=1}^{\infty} p_i \delta_{\phi_i} \),  
which is a discrete distribution with countably infinite atoms, where:
\begin{itemize}
    \item Each atom \( \phi_i \sim \basemeasure \) is the $i$-th cluster parameter,  independently drawn from the base measure \( \basemeasure \);
    \item \( \delta_{\phi_i} \) denotes the Dirac delta measure centered at \( \phi_i \);
    \item \( p = (p_1, p_2, \ldots) \) are the weights generated by the Pitman–Yor Chinese Restaurant Process ($\PYCRP$) (described below).
\end{itemize}

Imagining a restaurant where customers arrive sequentially, each choosing either to join an existing lively table or start their own, resulting in a naturally evolving clustering structure.

\begin{itemize}
    \item The first customer sits at a new table.
    \item 
Suppose \( N_k \) is the number of customers already at table \( k \), and \( K \) is the current number of occupied tables.
The \( n \)-th customer either joins an existing table \(k\) or starts a new table with the following probabilities:
\[
\text{For the \(n\)-th customer:} \quad
\begin{cases}
\displaystyle \frac{N_k - \alpha}{n - 1 + \beta}, & \text{if joining an existing table \(k\),}\\[10pt]
\displaystyle \frac{\beta + \alpha K}{n - 1 + \beta}, & \text{if starting a new table,}
\end{cases}
\]
\end{itemize}
As the number of customers \( n \to \infty \), the relative sizes of the tables (i.e., the proportion of customers at each table) converge in distribution to a random probability vector(\cite{pitman2006combinatorial},\Cref{thm:PYP rank fre}), denoted as:
\[
p=(p_1, p_2, \dots)=\lim\limits_{n\to \infty}(N_1/n,N_2/n,\cdots) \sim \PYCRP(\alpha, \beta)
\]
This probability vector $p=(p_1, p_2, \dots)$ asymptotically exhibits a power-law distribution(see \Cref{lem:PYCRP fre}).

\noindent
{\bf Pitman–Yor Mixture Model (PYMM):}
In the Pitman–Yor Mixture Model, the Pitman–Yor Process  
\( \PYP(\alpha, \beta, \basemeasure) \) 
serves as a prior over the space of mixture distributions. 
In particular, a random probability measure 
\( \knwModel = \sum_{i=1}^{\infty} p_i \delta_{\phi_i} \), 
sampled from \( \PYP(\alpha, \beta, \basemeasure) \), 
can be interpreted as a discrete mixture model: to generate a knowledge element from \( \knwModel \), 
one first selects index \( i \in \mathbb{N} \) with probability \( p_i \), 
then draws a sample from the conditional distribution \( P_{\phi_i} \), 
where \( \phi_i \sim \basemeasure \) is the $i$-th cluster parameter.

The knowledge elements generated by the knowledge model are abstract representations that capture the factual component underlying a sentence. 
We assume that the support of the knowledge element lies in a discrete set \( \knwset \) (i.e., each knowledge element
$\Knwelem\in \knwset$). 
Although the cardinality \( |\knwset| \) may be potentially very large, we assume that \( \log |\knwset| \) 
is quite small (since \( \log |\knwset| \) is roughly the number of bits or tokens required to encode the knowledge element), and particularly much smaller compared to \( N^\alpha \) for any $\alpha>0$.


We assume that the base measure \( \basemeasure \), from which the knowledge cluster parameters \( \phi_i \) are drawn, is supported on the bounded parameter space 
\( \knwfamily = \{ \knwModel \in \mathbb{R}^{\knwdimension} : \|\knwModel\|_{2} \le 1 \} \).
 Given the parameter spaces \( \knwfamily \), we define corresponding parametric families of probability distributions:
$
\knwprobability = \left\{ P(\cdot\mid \knwModel) : \knwModel \in \knwfamily \right\},
$
where \( P(\cdot\mid\knwModel) \) denotes the conditional distribution of the knowledge model given parameter \( \knwModel \).

\subsection{Parametric Syntax Model}
The syntax model generates syntactic elements conditioned on knowledge elements. 
Each knowledge element \( \Knwelem \in \knwset \) deterministically selects an index \( i \in \{1, \dots, \numsyntax\} \), corresponding to a latent variable \( \synModel^{(i)} \) that parameterizes the conditional syntax model. 

Conditioning the generation of syntax on a knowledge element \( \Knwelem \) is thus equivalent to conditioning on the corresponding latent variable:
\[
P(\cdot \mid \synModel, \Knwelem) = P(\cdot \mid \synModel^{(i)}).
\]
Accordingly, the overall syntax model is parameterized as:
\[
\synModel = \{ \synModel^{(1)}, \synModel^{(2)}, \dots, \synModel^{(\numsyntax)} \}.
\]

The syntactic elements are abstract representations capturing surface-level variation such as word choice (e.g., synonyms), phrase structure (e.g., active vs. passive voice), and other randomness that does not affect the underlying semantics.

We assume that the prior distribution \( \synprior \), from which the conditional syntax model parameters \( \synModel^{(i)} \) are drawn, is supported on the bounded parameter space 
\( \synfamily = \{ \synModel \in \mathbb{R}^{\syndimension} : \|\synModel\|_{2} \le 1 \} \) 
for all \( 1 \le i \le \numsyntax \).
 Given the parameter spaces \( \synfamily \), we define corresponding parametric families of probability distributions:
\[
\synprobability = \left\{ P(\cdot\mid \synModel) : \synModel \in \synfamily \right\},
\]
where \( P(\cdot\mid \synModel) \) denotes the conditional distribution of the syntax model given parameter \( \synModel \).





\subsection{Hierarchical Syntax-Knowledge Data Model}
We propose that the Syntax–Knowledge model generates a sentence \( X \) according to the following hierarchical Bayesian framework
(see Figure~\ref{fig:kol}):
\begin{enumerate}
\item Sample the latent variables for the knowledge and syntax models:
    \[
    \knwModel \sim \PYP(\alpha,\beta,\basemeasure), \quad 
    \synModel = \{ \synModel^{(1)}, \synModel^{(2)}, \dots, \synModel^{(\numsyntax)} \}, \quad 
    \synModel^{(i)} \overset{\text{i.i.d.}}{\sim} \synprior(\synModel).
    \]
    
    \item Sample a knowledge element $\Knwelem$:
    \[
    \Knwelem \sim P(\Knwelem \mid \knwModel).
    \]
    
    \item Sample a syntactic element $\Synelem$ conditioned on the sampled knowledge element $\Knwelem$:
    \[
    \Synelem \sim P(\Synelem \mid \synModel, \Knwelem).
    \]
    
    \item The syntax encoder generates the sentence \( X \) from the syntax–knowledge pair \( (\Synelem, \Knwelem) \). 
    We assume that the syntactic element $\Synelem$ has already captured
    the randomness in the syntax, and there is a one-to-one mapping between
    the sentence \( X \) and the syntax–knowledge pair \( (\Synelem, \Knwelem) \).
    Note that we do not assume the one-to-one mapping to be explicitly known or learnable. For the purpose of our theoretical analysis, it is sufficient to assume the existence of such a mapping, which ensures identifiability of the observed sentence with respect to its underlying syntax and knowledge elements.
\end{enumerate}

By independently repeating steps 2, 3, 4 for \( N \) times, we obtain a corpus \( X_{1:N} \) consisting of $N$ i.i.d. sentences generated from the Syntax–Knowledge model.

We now illustrate how our data model generates data through a simple 
bioS dataset example. Note that our data model is not limited to such examples.
\begin{example}
A concrete example is the generation of the bioS dataset designed in
\cite{allen2024physics}. For example, a knowledge element $\Knwelem$  encodes the factual knowledge about an individual: \textit{[Person: Alice], [Event: Birth], [Date: January 1, 2000]}. 
A syntax element $\Synelem$ corresponding to a sentence template.
Conditioned on the knowledge element $\Knwelem$, the syntax model chooses a syntax element $\Synelem$, such as ``[Subject] was born on [Date]''.  
By composing $(\Synelem, \Knwelem)$, the resulting sentence is: ``Alice was born on January 1, 2000.''
Alternatively, the syntax model may select a different syntax element, ``[Subject] entered the world on [Date]'', yielding the sentence: ``Alice entered the world on January 1, 2000.''
Note that the choice of syntax element is dependent on the knowledge element; for instance, if the knowledge element pertains to an animal, a different syntax element would be chosen.
\end{example}
\begin{nips}
    \begin{figure}
        \centering
        \includegraphics[width=0.5\linewidth]{data_model.pdf}
        \caption{An illustration of the hierarchical Syntax-Knowledge data model}
        \label{fig:data_model}
    \end{figure}
\end{nips}

\section{Explaining Scaling Laws}
\label{sec: scaling law}
In \Cref{sec:baye data law}, we established a data scaling law under the Bayesian setting for our Syntax-Knowledge model, showing that the optimal Bayesian redundancy decreases according to a power law with respect to data size, with an exponent larger than \(-1\).  Given our earlier results from Section~\ref{sec:baye data law}, we know the syntax model is learned at a significantly faster rate (also observed empirically), and thus the  scaling law is primarily driven by the knowledge model. For simplicity, we ignore the syntax model in this section and focus exclusively on the knowledge model. In this section, we derive a similar power-law behavior under a slightly different set of assumptions. The key differentiating assumption is that knowledge is represented in a question-answering format, which ensures the identifiability of the mixture knowledge model. These assumptions greatly simplify the theoretical proof and allows us to derive both upper and lower bounds on redundancy, as long as the model satisfies Assumption~\ref{ass:data law} (without requiring it to be an optimal Bayesian predictor). The central insight of this section is that the empirically observed scaling laws primarily stem from the power-law-distributed structure in the data. As a result, the assumptions made here do not affect the validity of our main conclusion.
We also provide the omitted details from the model scaling law part (\Cref{sec:model_scaling_law}) in the main text.

We continue modeling the knowledge model \( \knwModel = \sum_{i=1}^{\infty} p_i \delta_{\phi_i} \) as an infinite mixture model but now make the following assumption on the mixing probabilities \( p_i \). 

\begin{assumption}
\label{appass:powerlaw}
For the mixture model \( \knwModel = \sum_{i=1}^{\infty} p_i \delta_{\phi_i} \), the mixing probabilities \( p_i \) 
follow a power-law distribution:
$
p_i = \zeta(1/\alpha)^{-1} \, i^{-1/\alpha},
$
where \( \zeta(1/\alpha) = \sum_{i=1}^{\infty} i^{-1/\alpha} \) is the Riemann zeta function.

Note that the choice of exponent \(1/\alpha\) aligns with the asymptotic behavior of mixing weights in the Pitman–Yor Process \( \mathrm{PYP}(\alpha, \beta, \basemeasure) \) (see Lemma~\ref{lem:PYCRP fre}). 

\end{assumption}

In this section, we represent knowledge as question-answer pairs. We assume knowledge is given by pairs \( (\question, \answer) \), where \( \question \) denotes the question description, and \( \answer \) is the answer of the question. 
Each knowledge cluster corresponds to a set of question descriptions \( \questionset_i \), and generates knowledge pairs \( (\question, \answer) \sim P_{\phi_i} \) where \( \question \in \questionset_i \).  
We assume that the sets of question descriptions corresponding to different knowledge clusters are pairwise disjoint, i.e., \( \questionset_i \cap \questionset_j = \emptyset \) for \( i \ne j \). 

Here, we are only concerned with whether the model can provide the corresponding answer given a specific question. Consequently, the redundancy of the model with respect to the \( i \)-th knowledge cluster is defined as follows:

\begin{definition}
The redundancy of the model \( \Model \) associated with the \( i \)-th knowledge cluster and conditioned on knowledge being drawn from this cluster, is defined as
\[
\ferror^{(i)} := \mathbb{E}_{q \sim P_{\phi_i}} \left[ \DKL\left( P_{\phi_i}(\answer \mid \question=q) \,\Vert\, P_{\Model}(\answer \mid \question=q) \right) \right],
\]
where \( P_{\phi_i} \) denotes the data distribution induced by the cluster-specific parameter \( \phi_i \), and \( P_{\Model} \) is the predictive distribution defined by the model \( \Model \).

The total expected redundancy of the model \( \Model \) under the prior over knowledge clusters is then given by:
\begin{align*}
\testerror 
&:= \sum_{i=1}^{\infty} p_i \mathbb{E}_{\phi_i \sim \basemeasure}\left[\ferror^{(i)}\right].
\end{align*}

\end{definition}

\subsection{Data Scaling Laws}
\label{sec:data scaling}
In this section, we derive the data scaling law under the assumption that there is no model capacity constraint. There are two key differences between this section and \Cref{sec:baye data law}. First, we analyze the redundancy of a potentially suboptimal model under certain assumptions, whereas \Cref{sec:baye data law} focuses on the optimal case. Second, we study the test redundancy, in contrast to the Bayesian redundancy considered in \Cref{sec:baye data law}, which corresponds to the averaged cumulative redundancy.
Moreover, unlike \Cref{sec:baye data law}, which provides only an upper bound, this section establishes both upper and lower bounds.
The effect of dataset size on \( \ferror^{(i)} \) primarily arises from the number of occurrences of the question description \( q\in \questionset_i \) in the training data. Accordingly, we express \( \ferror^{(i)} \) as $\dataerror(t_i)$, where $t_i$ is the number of such occurrences
in the training data. 
The expected redundancy under data size constraint is then given by
\[ \testdataerror(N) = \mathbb{E}_{t_i}\left[\sum_{i=1}^{\infty} p_i \mathbb{E}_{\phi_i \sim \basemeasure}\left[\dataerror^{(i)}(t_i)\right] \right],\]
where \( t_i \) denotes a random variable representing the number of occurrences of the \( i \)-th knowledge cluster in a training dataset of size \( N \).

Next, we present the assumptions on \( \dataerror^{(i)} \) and derive the upper and lower bound for \( \testdataerror \) under these assumptions.

\begin{assumption}
\label{ass:data law}
(\uppercase\expandafter{\romannumeral 1})
The expected redundancy of unseen cluster of knowledge (i.e., $t_i=0$) exceeds a certain constant $c_1$, i.e., $\mathbb{E}_{\phi_i\sim \basemeasure}\left[\dataerror^{(i)}(0)\right]>c_1$ for all $k\in \mathcal{N}^+$.

(\uppercase\expandafter{\romannumeral 2})
For all cluster of knowledge, there exists a constant $c_2$ such that 
 $\mathbb{E}_{\phi_i\sim \basemeasure}\left[\dataerror^{(i)}(t)\right]\le \dfrac{c_2}{t+1}$ for all $i,t\in \mathcal{N}^+$.
\end{assumption}

Assumption (\uppercase\expandafter{\romannumeral 1}) posits that the model exhibits constant redundancy when encountering unseen knowledge, which we consider reasonable. We now turn to the validity of Assumption (\uppercase\expandafter{\romannumeral 2}). 
During language model training, the objective is to minimize the negative log-likelihood, which is equivalent to maximizing the likelihood function. When the model is viewed as a maximum likelihood estimator (MLE), the asymptotic normality of the MLE implies that the redundancy, in the asymptotic regime, is of the same order as specified in Assumption (\uppercase\expandafter{\romannumeral 2}). 
\begin{lemma}[Asymptotic Normality of MLE]
Let \( X_1, X_2, \dots, X_n \) be $n$ i.i.d. samples drawn from a distribution \( P_{\theta^*} \), and let \( \hat{\theta}_n = \arg \max_{\theta} \ell(\theta) \) denote the maximum likelihood estimator. Then, as \( n \to \infty \), the MLE satisfies
\[
\sqrt{n}(\hat{\theta}_n - \theta^*) \xrightarrow{d} \mathcal{N}(0, J(\theta^*)^{-1}),
\]
where \( J(\theta^*) \) is the Fisher information matrix evaluated at \( \theta^* \). 
\end{lemma}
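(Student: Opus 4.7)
The statement is the classical asymptotic normality of the MLE, so my plan is to follow the standard Cramér/Le Cam route via a one-step Taylor expansion of the score equation. I will assume the usual regularity conditions: that $\theta \mapsto \log P_\theta(x)$ is twice continuously differentiable in a neighborhood of $\theta^*$, that differentiation under the integral sign is permitted (so $\mathbb{E}_{\theta^*}[\nabla_\theta \log P_{\theta^*}(X)] = 0$ and the Fisher information $J(\theta^*) = \mathbb{E}_{\theta^*}[-\nabla_\theta^2 \log P_{\theta^*}(X)]$ is well-defined and invertible), and that $\hat\theta_n \xrightarrow{p} \theta^*$ (MLE consistency), which itself follows from a uniform law of large numbers plus identifiability.

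First, I would define the score $s_n(\theta) := \sum_{i=1}^n \nabla_\theta \log P_\theta(X_i)$ and note that the MLE satisfies $s_n(\hat\theta_n) = 0$ with high probability. A first-order Taylor expansion of $s_n$ around $\theta^*$ gives
\begin{equation*}
0 = s_n(\hat\theta_n) = s_n(\theta^*) + H_n(\tilde\theta_n)(\hat\theta_n - \theta^*),
\end{equation*}
where $H_n(\theta) = \nabla_\theta s_n(\theta)$ is the observed Hessian of the log-likelihood and $\tilde\theta_n$ lies on the segment between $\hat\theta_n$ and $\theta^*$. Rearranging yields
\begin{equation*}
\sqrt{n}(\hat\theta_n - \theta^*) = \left[-\tfrac{1}{n} H_n(\tilde\theta_n)\right]^{-1} \cdot \tfrac{1}{\sqrt{n}} s_n(\theta^*),
\end{equation*}
provided the inverse exists eventually (which will follow from the next step).

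Next, I would analyze the two factors separately. For the score term, each $\nabla_\theta \log P_{\theta^*}(X_i)$ has mean zero and covariance $J(\theta^*)$, so the classical multivariate CLT gives $\tfrac{1}{\sqrt n} s_n(\theta^*) \xrightarrow{d} \mathcal{N}(0, J(\theta^*))$. For the Hessian term, the law of large numbers applied at $\theta^*$ gives $-\tfrac{1}{n} H_n(\theta^*) \xrightarrow{p} J(\theta^*)$, and I would then upgrade this to convergence at $\tilde\theta_n$ using consistency of $\hat\theta_n$ together with a local uniform continuity bound on the Hessian (a standard stochastic equicontinuity argument, or a dominating-function assumption on $\nabla_\theta^2 \log P_\theta$). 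Combining via Slutsky's theorem and the continuous mapping theorem (for matrix inversion) yields
\begin{equation*}
\sqrt{n}(\hat\theta_n - \theta^*) \xrightarrow{d} J(\theta^*)^{-1} \cdot \mathcal{N}(0, J(\theta^*)) = \mathcal{N}\!\left(0, J(\theta^*)^{-1}\right),
\end{equation*}
as desired.

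The main obstacle is the step that controls the Hessian at the random intermediate point $\tilde\theta_n$ rather than at the fixed $\theta^*$: this requires both MLE consistency and some form of local uniform convergence of $\tfrac{1}{n}H_n(\theta)$ in a neighborhood of $\theta^*$. Everything else (CLT, LLN, Slutsky, continuous mapping) is routine. Since in our application this lemma is only invoked to justify the $O(1/t)$ rate in Assumption~\ref{ass:data law}(II), the proposal above is sufficient: after obtaining the asymptotic distribution, the redundancy $\mathbb{E}[\DKL(P_{\theta^*}\|P_{\hat\theta_n})]$ can be expanded to second order around $\theta^*$ and combined with $\mathbb{E}\|\hat\theta_n - \theta^*\|^2 = O(1/n)$ to recover the claimed $c_2/(t+1)$ scaling.
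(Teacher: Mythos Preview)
Your proposal is correct and follows the standard Cram\'er route (score expansion, CLT for the score, LLN for the observed information, Slutsky). The paper itself does not prove this lemma at all: it simply states it as a classical fact and then combines it with Lemma~\ref{lem:KL and MSE} to justify the $O(1/t)$ order in Assumption~\ref{ass:data law}(II), so there is nothing to compare against and your sketch is more than sufficient.
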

Combining this conclusion with the result in \Cref{lem:KL and MSE}, we conclude that, under the asymptotic regime, the redundancy of the MLE matches the order assumed in Assumption (\uppercase\expandafter{\romannumeral 2}). We assume that this order holds in all cases. Note that the assumption is stated with $1/(t+1)$ instead of $1/t$ to include the case where $t = 0$.



\begin{theorem}
\label{thm: data scaling lower bound}
Under (\uppercase\expandafter{\romannumeral 1}) in \Cref{ass:data law}, the total expected redundancy $\testdataerror$ satisfies
    \begin{align*}
    \testdataerror(N) = \Omega(N^{-1+\alpha}).
\end{align*}
Under (\uppercase\expandafter{\romannumeral 2}) in \Cref{ass:data law}, the total expected redundancy $\testdataerror$ satisfies
    \begin{align*}
    \testdataerror(N) = O(N^{-1+\alpha}).
\end{align*}
\end{theorem}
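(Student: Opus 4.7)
The plan is to split the infinite sum defining $\testdataerror(N)$ at the \emph{critical index} $i^\star$ where the expected number of occurrences transitions from large to small, i.e., where $N p_{i^\star} \asymp 1$. Under Assumption~\ref{appass:powerlaw}, $p_i = \zeta(1/\alpha)^{-1} i^{-1/\alpha}$, so this threshold is $i^\star \asymp N^{\alpha}$. This split captures the natural dichotomy: for $i \ll i^\star$ the cluster is well-represented ($t_i$ concentrates around $N p_i \gg 1$), while for $i \gg i^\star$ it is seen rarely or not at all ($\Pr[t_i = 0] = \Theta(1)$). The tail estimate
\[
\sum_{i \ge K} i^{-1/\alpha} \asymp \frac{\alpha}{1-\alpha}\, K^{1-1/\alpha}, \qquad K \to \infty,
\]
(valid since $1/\alpha > 1$) will convert both split sums into the target rate $N^{\alpha - 1}$.

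For the lower bound (part~I), I would discard the head of the sum and lower-bound the tail using only the event $\{t_i = 0\}$ together with Assumption~\ref{ass:data law}(I):
\[
\testdataerror(N) \;\ge\; c_1 \sum_{i=1}^{\infty} p_i\, \Pr[t_i = 0] \;=\; c_1 \sum_{i=1}^{\infty} p_i (1-p_i)^N.
\]
Restricting to $i \ge i^\star$, where $N p_i \le 1$, the elementary inequality $(1-p)^N \ge e^{-2Np}$ (for $p \le 1/2$) gives $(1-p_i)^N \ge e^{-2}$, hence $\testdataerror(N) \ge c_1 e^{-2} \sum_{i \ge i^\star} p_i$, which by the tail estimate above is $\Omega(N^{\alpha-1})$.

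For the upper bound (part~II), I would apply Assumption~\ref{ass:data law}(II) and invoke the standard binomial identity $\mathbb{E}[1/(X+1)] = (1 - (1-p)^{N+1})/((N+1)p)$ for $X \sim \mathrm{Bin}(N, p)$, which yields
\[
\mathbb{E}\!\left[\tfrac{1}{t_i + 1}\right] \;\le\; \min\!\left\{1,\; \tfrac{1}{(N+1) p_i}\right\}.
\]
Splitting at $i^\star$:
\[
\testdataerror(N) \;\le\; c_2 \!\!\sum_{i \le i^\star}\!\! \frac{1}{N+1} \;+\; c_2 \!\!\sum_{i > i^\star}\!\! p_i \;\le\; \frac{c_2\, i^\star}{N+1} + O(N^{\alpha - 1}) \;=\; O(N^{\alpha - 1}),
\]
where the head contributes $O(i^\star / N) = O(N^{\alpha - 1})$ and the tail again uses the tail estimate.

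The steps are individually routine; the main technical care lies in (i) choosing $i^\star$ so that the constants in the two regimes match cleanly (requiring a uniform lower bound on $\Pr[t_i = 0]$ on one side and a uniform upper bound on $\mathbb{E}[1/(t_i+1)]$ on the other), and (ii) handling the regime where $p_i$ is not small (small $i$), for which the inequality $(1-p)^N \ge e^{-2Np}$ needs the side condition $p \le 1/2$; this is automatic once $i^\star$ is taken sufficiently large, so the remaining finitely many indices can be absorbed into the constant. I expect no deeper obstacle since the argument is purely a Binomial tail/head split calibrated against the power-law weights.
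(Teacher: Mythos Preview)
Your proposal is correct and matches the paper's approach: both split at $i^\star \asymp N^{\alpha}$, lower-bound via the event $\{t_i=0\}$ on the tail, and upper-bound the head via the binomial identity yielding $p_i\,\mathbb{E}[1/(t_i+1)]\le 1/(N+1)$. Your handling of the upper-bound tail (simply bounding $\mathbb{E}[1/(t_i+1)]\le 1$) is in fact cleaner than the paper's, which instead expands the binomial sum and uses $\binom{N}{l}\le N^l/l!$ together with $p_k^l \le N^{-l}$ to reach the same $O(N^{\alpha-1})$ conclusion.
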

\begin{proof}[Proof of \Cref{thm: data scaling lower bound}]
Denote $f_i(t_i)=\mathbb{E}_{\phi_i\sim \basemeasure}[\dataerror^{(i)}(t_i)]$.

We begin by proving the first part of the theorem.
    \begin{align*}
        \testdataerror(N)&=\sum\limits_{k=1}^{\infty} p_k \mathbb{E}_{t_k}[f_k(t_k)]
        \\&\ge \sum\limits_{k> N^{\alpha}} p_k\mathbb{E}_{t_k}[f_k(t_k)] 
        \\& \overset{(1)}{=} \sum\limits_{k> N^{\alpha}}\sum\limits_{l=0}^N p_k^{l+1} (1-p_k)^{N-l}\binom{N}{l}f_k(l) \\
        &\overset{(2)}{\ge} \sum\limits_{k> N^{\alpha}} p_k (1-p_k)^{N} f_k(0)\\
        &\overset{(3)}{\ge} \dfrac{c_1}{\zeta(1/\alpha)}\sum\limits_{k> N^{\alpha}} \dfrac{1}{k^{1/\alpha}} \left(1-\dfrac{1}{\zeta(1/\alpha)N}\right)^{N} \\
        &\ge \dfrac{c_1}{2e\zeta(1/\alpha)}\sum\limits_{k> N^{\alpha}} \dfrac{1}{k^{1/\alpha}} \\
        &=  \Omega(N^{-1+\alpha}).
    \end{align*}
(1) is obtained by expanding the expectation; (2) corresponds to taking the term with $l = 0$; and (3) is derived by substituting the value of $p_k=\dfrac{1}{k^{1/\alpha}\zeta(1/\alpha)}$.


    

    Next, we prove the second part. For all $k\in \mathcal{N}^+$, we know that 

    \begin{align*}
        p_k\mathbb{E}_{t_k}[f_k(t_k)] &\overset{(1)}{=} p_k\sum\limits_{l=0}^N p_k^l (1-p_k)^{N-l}\binom{N}{l}f_k(l) \\
        &= \sum\limits_{l=0}^N p_k^{l+1} (1-p_k)^{N-l}\dfrac{l+1}{N+1}\binom{N+1}{l+1}f_k(l) \\
        &\le \dfrac{\max\limits_{l} (l+1)f_k(l)}{N+1}\sum\limits_{l=0}^N p_k^{l+1} (1-p_k)^{N-l}\binom{N+1}{l+1} \\
        &\overset{(2)}{\le} \dfrac{\max\limits_{l} (l+1)f_k(l)}{N+1}\\ &\le \dfrac{c_2}{N+1}.
    \end{align*}
    (1) is obtained by expanding the expectation, and (2) is derived by utilizing 
    \\
    $\sum\limits_{l=0}^N p_k^{l+1} (1-p_k)^{N-l}\binom{N+1}{l+1}\le \sum\limits_{l=-1}^N p_k^{l+1} (1-p_k)^{N-l}\binom{N+1}{l+1}=1.$

    Therefore,
    \begin{align*}
        \sum\limits_{k\le N^{\alpha}}p_k\mathbb{E}_{t_k}[f_k(t_k)] \le  \dfrac{c_2}{N+1} N^{\alpha} =O(N^{-1+\alpha}).
    \end{align*}
    For $k>N^{\alpha}$, we have 
    \begin{align*}
        \sum\limits_{k> N^{\alpha}}p_k\mathbb{E}_{t_k}[f_k(t_k)] &= \sum\limits_{k> N^{\alpha}}\sum\limits_{l=0}^N p_k^{l+1} (1-p_k)^{N-l}\binom{N}{l}f_k(l) \\
        &\overset{(1)}{\le}\ \sum\limits_{k> N^{\alpha}}\sum\limits_{l=0}^N \left(\dfrac{1}{\zeta(1/\alpha)k^{1/\alpha}}\right)^{l+1} \binom{N}{l}f_k(l) \\
        &\overset{(2)}{\le} \sum\limits_{k> N^{\alpha}}\sum\limits_{l=0}^N \left(\dfrac{1}{k^{1/\alpha}}\right)\left(\dfrac{1}{N}\right)^l \binom{N}{l}f_k(l) \\
        &\overset{(3)}{\le}\ \sum\limits_{k> N^{\alpha}}\sum\limits_{l=0}^N \left(\dfrac{1}{k^{1/\alpha}}\right)\dfrac{f_k(l)}{l!}  
        \\&= \sum\limits_{l=0}^N \dfrac{f_k(l)}{l!} \sum\limits_{k> N^{\alpha}}\dfrac{1}{k^{1/\alpha}}
        \\ &\le c_2\sum\limits_{l=0}^N \dfrac{1}{(l+1)!} \sum\limits_{k> N^{1/\alpha}}\dfrac{1}{k^{1/\alpha}}
        \\ &\le c_2e\sum\limits_{k> N^{\alpha}}\dfrac{1}{k^{1/\alpha}} 
        = O(N^{-1+\alpha}).
    \end{align*}
    (1) is obtained by substituting $p_k=\dfrac{1}{k^{1/\alpha}\zeta(1/\alpha)}$ and using the inequality $(1 - p_k)^{N-l} \le 1$; (2) follows from the fact that $\zeta(1/\alpha) > 1$ and $k > N^{\alpha}$; (3) uses the bound $\binom{N}{l} \le \dfrac{N^l}{l!}$.

    Therefore, we can see that
    \begin{align*}
   \testdataerror(N)=\mathbb{E}_{k,t_k}
   \left[\sum\limits_{k=1}^{\infty}p_kf_k(t_k) \right]\le O(N^{-1+\alpha}).
    \end{align*}
This completes the proof.   
\end{proof}

\subsection{Model Scaling Laws}
\label{appsec:model scaling}

In this section, we consider the optimal redundancy achievable by an omniscient model \( \Model^*_C \) under a model capacity constraint. (That is, the model \( \Model^*_C \) has access to the true data distribution \( \dataModel \).)  
Due to the finite capacity, the model \( \Model^*_C \) must apply lossy compression to each \( \phi_i \), resulting in a corresponding redundancy \( \modelerror^{(i)}(m_i) \), where  \( m_i \) denotes the constraint of the mutual information between
the model and $\phi_i$ (one may think it as the memory allocated to compress \( \phi_i \)).
The minimal achievable redundancy under a given memory constraint conditioned on a given cluster of knowledge is characterized by the  distortion-rate function:
\[
\ratedistortion_i(R) = \min_{\mathbb{I}(\phi_i; \Model^*_C) \le R} \mathbb{E}_{\phi_i\sim \basemeasure}\left[\mathbb{E}_{q \sim P_{\phi_i}} \left[ \DKL\left( P_{\phi_i}(\answer \mid \question=q) \,\Vert\, P_{\Model^*_C}(\answer \mid \question=q) \right) \right]\right],
\]
where \( \mathbb{I}(\phi_i; \Model^*_C) \) represents the information retained about \( \phi_i \) after compression.  

Furthermore, the omniscient model \( \Model^*_C \) allocates memory to each \( \phi_i \) in a manner that minimizes the expected redundancy. Formally, the memory allocation corresponds to the solution of the following optimization problem:
\begin{align}
    \text{minimize} \quad & \mathbb{E}_i[\ratedistortion_i(m_i)] = \sum_{i=1}^{\infty} p_i \ratedistortion_i(m_i),   \nonumber \\
    \text{subject to} \quad & \mathbb{I}(\Phi_0; \Model^*_C) \le C, \quad m_i = \mathbb{I}(\phi_i; \Model^*_C) \ge 0 \ \text{for all } i \in \mathbb{N}^+, \nonumber
\end{align}
where \( \Phi_0 = (\phi_1, \phi_2, \ldots) \) and $\mathbb{I}(\Phi_0; \Model^*_C) \le C$ represents the model capacity constraint.

If we further assume that \( \phi_i \perp \Phi_{-i} \mid \Model^*_\capacity \), i.e., \( \phi_i \) is conditionally independent of the remaining cluster parameters given the model \( \Model^*_\capacity \),
where 
\( \Phi_{-i} = (\phi_{1},\ldots, \phi_{i-1}, \phi_{i+1}, \ldots) \).
In other words, conditional on the model \( \Model^*_\capacity \), knowing \(\phi_i\) does not provide any additional information about \(\phi_{j}\) for \(j \neq i\). 
Under this assumption, the model capacity constraint can be decomposed as follows:
\begin{align*}
    C\ge\mathbb{I}(\Phi_0; \Model^*_C) 
    &= \mathbb{I}(\Phi_{-1}; \Model^*_C) + \mathbb{I}(\phi_1; \Model^*_C \mid \Phi_{-1}) \\
    &= \mathbb{I}(\Phi_{-1}; \Model^*_C) + \mathbb{I}(\phi_1; \Model^*_C) \\
    &= \mathbb{I}(\Phi_{-2}; \Model^*_C) + \mathbb{I}(\phi_2; \Model^*_C) + \mathbb{I}(\phi_1; \Model^*_C) \\
    &= \sum_{i=1}^{\infty} \mathbb{I}(\phi_i; \Model^*_C).
\end{align*}

Therefore, the optimization problem can be rewritten as:
\begin{align}
    \text{minimize} \quad & \sum_{i=1}^{\infty} p_i \ratedistortion_i(m_i), \label{app:model scaling} \\
    \text{subject to} \quad & \sum_{i=1}^{\infty} m_i \le C, \quad m_i \ge 0 \ \text{for all } i \in \mathbb{N}^+. \nonumber
\end{align}

Let \( \testmodelerror(C) \) denote the optimal value of the problem in \eqref{app:model scaling}, representing the minimal achievable redundancy under the model capacity constraint.

In general, the distortion-rate function is hard to express analytically. 
 Here, we make the following assumptions.
 
\begin{assumption}
\label{appass:model scaling}
We assume that the distortion-rate function \( \ratedistortion_i(R) \) satisfies
\begin{itemize}
    \item there exists positive constant $c_3,c_4$ such that for all $i\in \mathcal{N}^+,R\le c_3$, the distortion-rate function $\ratedistortion_i(R)\ge c_4$,
    \item there exists positive constant $c_{\max},b_{\max}$ such that for all $i\in \mathcal{N}^+$, the distortion rate function $\ratedistortion_i(R) \le c_{\max} b_{\max}^{-R}.$
\end{itemize}
\end{assumption}

The first assumption states that when the rate is small, the distortion cannot drop below a threshold, which is a natural property of rate-distortion theory for general sources.

The second assumptions can be formally justified within the framework of rate-distortion theory when the distortion function is the mean squared error (MSE) \citep{cover2006elements}. Although the distortion function in our setting is not MSE but rather the KL divergence between the probability distributions induced by the parameters, \Cref{lem:KL and MSE} shows that the two distortion measures exhibit similar behavior in a local neighborhood.

\begin{itemize}

    \item \textbf{Distortion Rate Function of Gaussian Distribution:}
    Let $X \sim \mathcal{N}(0, \sigma^2)$ be a zero-mean Gaussian source, and let the distortion measure be the mean squared error (MSE).
The distortion-rate function $D(R)$ is defined as the minimum achievable distortion under a given rate $R$:

\[
D(R) = \min_{p(\hat{x}|x): \mathbb{I}(X; \hat{X}) \leq R} \mathbb{E}[(X - \hat{X})^2]
\]
For the Gaussian source with MSE distortion, the closed-form solution is:

\[
D(R) = \sigma^2 \cdot 2^{-2R}, \quad R \geq 0
\]
\item \textbf{Distortion-Rate Upper Bound via Maximum Entropy of Gaussian Distribution}  
Among all real-valued random variables with fixed variance, the Gaussian distribution achieves the maximum differential entropy:
\[
h(X) \leq \frac{1}{2} \log(2\pi e \sigma^2),
\]
with equality if and only if \( X \sim \mathcal{N}(0, \sigma^2) \). This implies that under a fixed distortion constraint and mean squared error (MSE) as the distortion measure, the Gaussian source requires the highest rate among all sources with the same variance.

Therefore, the distortion-rate function of any real-valued source with variance \( \sigma^2 \) must satisfy:
\[
D(R) \leq \sigma^2 \cdot 2^{-2R}, \quad R \geq 0,
\]
with equality achieved only when the source is Gaussian. This provides a universal lower bound on achievable distortion under MSE for a given rate \( R \).

\end{itemize}


\begin{theorem}[Resteatement of \Cref{thm:model scaling}]
Under \Cref{ass:model scaling} and \Cref{ass:powerlaw}, the optimal value of the optimization problem under the given model size constraint satisfies:
\[\testmodelerror(\capacity) = \Theta(\capacity^{-1/\alpha+1}).\] 
Moreover, if we further assume that $D_k(R)=a_kb_k^{-R}$ for some constant $b_{\min}\le b_k\le b_{\max},a_{\min}\le a_k\le a_{\max}$,  
the contribution of the $k$-th cluster is
\[
p_k\ratedistortion_k(m_k^*)=\Theta(\min\{k^{-1/\alpha},C^{-1/\alpha}\}).
\]
where $m_k^*$ is the solution of the optimization problem.\eqref{app:model scaling}.
\end{theorem}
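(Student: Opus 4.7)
}

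The plan is to treat \eqref{app:model scaling} as a convex program in the $m_i$ and prove matching upper and lower bounds of order $C^{1-1/\alpha}$, then extract the per-cluster contribution from KKT stationarity in the exponential case. Throughout I will use the power-law identities $p_i = c_\alpha i^{-1/\alpha}$ with $c_\alpha = \zeta(1/\alpha)^{-1}$, the tail estimate $\sum_{i>k} p_i = \Theta(k^{1-1/\alpha})$, and the Stirling-type identity $\sum_{i\le k}\log(k/i) = k + O(\log k)$. Both bounds hinge on choosing a threshold $k^* = \Theta(C)$ separating ``actively compressed'' clusters from ``unstored'' ones.

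For the \textbf{upper bound}, I exhibit the feasible allocation $m_i = (\alpha\log b_{\max})^{-1}\log(k^*/i)$ for $i\le k^*$ and $m_i=0$ otherwise. The budget $\sum_{i\le k^*} m_i = \Theta(k^*)/(\alpha\log b_{\max})$ equals $C$ once $k^*\asymp \alpha C\log b_{\max}$. The exponential upper bound in Assumption~\ref{appass:model scaling} then gives $\ratedistortion_i(m_i)\le c_{\max}(i/k^*)^{1/\alpha}$, so $p_i\ratedistortion_i(m_i)\le c_{\max}c_\alpha k^{*-1/\alpha}$ uniformly for $i\le k^*$, and the head contributes $O(k^{*\,1-1/\alpha})$. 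The tail contributes at most $c_{\max}\sum_{i>k^*}p_i=O(k^{*\,1-1/\alpha})$. Both are $O(C^{1-1/\alpha})$.

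For the \textbf{lower bound}, the floor in Assumption~\ref{appass:model scaling} forces any feasible $(m_i)$ to have $|\{i:m_i>c_3\}|\le C/c_3$. Taking $k^* = \lceil 4C/c_3\rceil$, at least $k^*/2$ indices in $\{1,\ldots,k^*\}$ therefore satisfy $m_i\le c_3$ and hence $\ratedistortion_i(m_i)\ge c_4$. The adversary's best move is to concentrate the ``unstored'' indices on those with the smallest $p_i$ in the range, but even then $\sum_{i=k^*/2+1}^{k^*} p_i = \Theta(k^{*\,1-1/\alpha})$, so the objective is at least $c_4\cdot\Theta(k^{*\,1-1/\alpha})=\Omega(C^{1-1/\alpha})$, matching the upper bound and establishing $\testmodelerror(C)=\Theta(C^{1-1/\alpha})$.

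For the \textbf{refined per-cluster estimate} with $\ratedistortion_k(R)=a_kb_k^{-R}$, I will write the KKT stationarity for the strictly convex program: at an interior optimum $p_k a_k(\log b_k)b_k^{-m_k^*}=\lambda^*$, giving $p_k\ratedistortion_k(m_k^*)=\lambda^*/\log b_k=\Theta(\lambda^*)$ for every active cluster, while $m_k^*>0$ is equivalent to $k$ lying below a threshold $k^*\asymp (\lambda^*)^{-\alpha}$. Substituting $m_k^* = (\log b_k)^{-1}\log(p_ka_k\log b_k/\lambda^*)$ into the budget equation and applying the Stirling identity collapses the sum to $C = \Theta(k^*)$, hence $k^*=\Theta(C)$ and $\lambda^*=\Theta(C^{-1/\alpha})$; for $k>k^*$ the contribution is simply $p_k a_k = \Theta(k^{-1/\alpha})$. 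Combining the two regimes gives $p_k\ratedistortion_k(m_k^*)=\Theta(\min\{k^{-1/\alpha},C^{-1/\alpha}\})$. The main obstacle will be aligning the two definitions of $k^*$ (the cut-off from the feasibility construction and the KKT threshold) so that their constants cancel cleanly and the bounds $b_{\min}\le b_k\le b_{\max}$, $a_{\min}\le a_k\le a_{\max}$ enter only through multiplicative constants; once that bookkeeping is done, everything reduces to routine power-sum estimates.
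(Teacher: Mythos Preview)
Your proposal is correct and follows essentially the same route as the paper's proof: the lower bound via the pigeonhole argument on the floor $\ratedistortion_i(R)\ge c_4$ for $R\le c_3$, and the per-cluster contribution via KKT stationarity under the exponential form $\ratedistortion_k(R)=a_kb_k^{-R}$, with the budget identity $\sum_{k\le k^*}\log(k^*/k)=\Theta(k^*)$ pinning down $k^*=\Theta(C)$ and $\lambda^*=\Theta(C^{-1/\alpha})$. The one place you streamline the paper is the upper bound for the first claim: rather than deriving it from the KKT optimum of the exponential family and then invoking $\ratedistortion_k(R)\le c_{\max}b_{\max}^{-R}$, you directly exhibit the feasible allocation $m_i=(\alpha\log b_{\max})^{-1}\log(k^*/i)$ and read off the objective, which is slightly cleaner and avoids re-running the Lagrangian machinery twice.
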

\begin{proof}[Proof of \Cref{thm:model scaling}]
We first provide a lower bound for the optimal value.
By \Cref{ass:model scaling}, we have \( D_k(m_k) \ge c_4 \) whenever \( m_k \le c_3 \).
Since \( \sum\limits_{k=1}^{\infty} m_k \le C \), there can be at most \( \lfloor C / c_3 \rfloor \) indices \( k \) such that \( D_k(m_k) < c_4 \).
Due to the monotonicity of $p_k$, we know that 
\begin{align}
    \sum\limits_{k=1}^{\infty}p_kD_k(m_k)\ge \sum\limits_{k>\lfloor C / c_3 \rfloor}^{\infty}p_kc_4=\Omega(C^{-1/\alpha+1}).
    \label{eq:model scaling Omega}
\end{align}




Next, we solve the optimization problem \eqref{appsec:model scaling} under assumption $D_k(R)=a_kb_k^{-R}$ using KKT condition.
First consider the Lagrangian dual:
\begin{align*}
    \mathcal{L}(k, \lambda, \mu) = \sum_{k=1}^{\infty} p_k \ratedistortion_k(m_k) + \lambda \left(\sum_{k=1}^{\infty} m_k - C\right) - \sum_{k=1}^{\infty} \mu_k m_k. 
\end{align*}
We list the KKT conditions below:
\begin{align}
\label{stationary}
    & \text{Stationarity:} \quad p_k \ratedistortion_k'(m_k^*) + \lambda - \mu_k = 0, \quad \forall k \geq 1. \\
    & \text{Primal feasibility:} \quad \sum_{k=1}^{\infty} m_k^* \leq C, \quad m_k^* \geq 0, \quad \forall k \geq 1. \\
    & \text{Dual feasibility:} \quad \lambda \geq 0, \quad \mu_k \geq 0, \quad \forall k \geq 1. \\
    & \text{Complementary slackness:} \quad \lambda \left( \sum_{k=1}^{\infty} m_k^* - C\right) = 0, \quad \mu_k m_k^* = 0, \quad \forall k \geq 1.
\end{align}
Due to monotonicity, we know that $\sum\limits_{k=1}^{\infty} m_k^* = C$. 
By the Stationary condition \eqref{stationary}, we know that
\begin{align*}
   \ratedistortion'_k(m_k^*)= \dfrac{\mu_k-\lambda}{p_k}.
\end{align*}
Due to the convexity of $\ratedistortion_k$\citep{cover2006elements}, we know that $\ratedistortion_k'$ is monotonically increasing. Therefore, the inverse of $\ratedistortion_k'$ exists, and we denote it as $g_k$.


For $m_k^*>0$, we have $\mu_k=0$, thus we know that 
\[m_k^*=\max\left\{g_k\left(\dfrac{-\lambda}{p_k}\right),0\right\}.\]

Next, we estimate the value of $\lambda$. 
Since we have assumed that $D_k(R)=a_kb_k^{-R}$ for some $b_{\min}\le b_k\le b_{\max},a_{\min}\le a_k\le a_{\max}$, so there exists $r_1',r_2',d_1,d_2$ such that 
\[
-r_1'\ln(d_1x)\le g_k(-x)\le -r_2'\ln(d_2x).
\]
Then, we know that \[
\max\left\{0,r_1\ln \dfrac{d_1k^{-1/\alpha}}{\lambda}\right\} \le m_k^*\le  \max\left\{0,r_2\ln \dfrac{d_2k^{-1/\alpha}}{\lambda}\right\}.
\] 
where $r_1=r_1'\ln \zeta(1/\alpha), r_2=r_2'\ln\zeta(1/\alpha)$. 

Assume $l_1$ is the maximum integer such that $d_1l_1^{-1/\alpha}> \lambda$. 
We know that $d_1(l_1+1)^{-1/\alpha}\le \lambda< d_1l_1^{-1/\alpha}.$ Therefore, we have
\begin{align*}
    C  & \ge \sum\limits_{k=1}^{\infty} \max\left\{0,r_1\ln \dfrac{d_1k^{-1/\alpha}}{\lambda}\right\}
    =r_1 \sum\limits_{k=1}^{l_1} \ln\dfrac{k^{-1/\alpha}}{\lambda} \\
    & \ge r_1\sum\limits_{k=1}^{l_1} \ln \dfrac{k^{-1/\alpha}}{l_1^{-1/\alpha}}
     =\dfrac{r_1}{\alpha}
     \left(\ln l_1^{l_1} - \ln l_1!\right).
\end{align*}
By Stirling's formula, we know that $\ln l_1^{l_1} - \ln l_1!=\Theta(l_1)$. So we know that 
\[
C\ge \Theta(l_1)=\Theta(\lambda^{-1/\alpha}).
\]
Similarly, assume $l_2$ is the maximum integer such that $d_2l_2^{-1/\alpha}> \lambda$. 
We know that $d_2(l_2+1)^{-1/\alpha}\le \lambda< d_2l_2^{-1/\alpha}.$ Therefore, we have
\begin{align*}
    C  & \le \sum\limits_{k=1}^{\infty} \max\left\{0,r_2\ln \dfrac{d_2k^{-1/\alpha}}{\lambda}\right\}
    =r_2 \sum\limits_{k=1}^{l_2} \ln\dfrac{k^{-1/\alpha}}{\lambda} \\
    & \le r_2\sum\limits_{k=1}^{l_2} \ln \dfrac{k^{-1/\alpha}}{(l_2+1)^{-1/\alpha}}
     =\dfrac{r_2}{\alpha}
     \left(\ln (l_2+1)^{l_2+1} - \ln l_2!\right).
\end{align*}
By Stirling's formula, we know that $\ln l_2^{l_2} - \ln l_2!=\Theta(l_2)$. So we know that 
\[
C\ge \Theta(l_2)=\Theta(\lambda^{-1/\alpha}).
\]
Since we know that $\Theta(l_1)=\Theta(\lambda^{-1/\alpha})=\Theta(l_2)$, we have that
\[
C=\Theta(l_1)=\Theta(l_2)=\Theta(\lambda^{-1/\alpha}).
\]
For $k\le l_1$, we have that
$$
m_k^*\ge \max\left\{0,r_1\ln \dfrac{d_1k^{-1/\alpha}}{\lambda}\right\}\ge r_1\ln \dfrac{d_1k^{-1/\alpha}}{\lambda}>0.
$$
For $k> l_2$, we have that
$$
m_k^*\le \max\left\{0,r_2\ln \dfrac{d_2k^{-1/\alpha}}{\lambda}\right\}=0.
$$
Thus $m_k^*=0$ for $k > l_2$.
Hence, we have that  \[p_k\ratedistortion_k(m_k^*)=\Theta(\min\{k^{-1/\alpha},C^{-1/\alpha}\}).\]

Finally, we prove that $\testmodelerror(C)\ge \Theta(M^{-1/\alpha+1})$. 
Since we have assumed that $D_k(R)=a_kb_k^{-R}$ for some $b_{\min}\le b_k\le b_{\max},a_{\min}\le a_k\le a_{\max}$, we know that there exists constant $d_3,d_4$ such that 
\[d_3\ratedistortion_k'(m^*_k)\le\ratedistortion_k(m^*_k)\le d_4\ratedistortion_k'(m^*_k)\]
Thus we know that 
\begin{align}
\label{eq:1}
    \sum\limits_{k=1}^{\infty}p_k \ratedistortion_k(m_k^*) \notag
 & \ge d_3\sum\limits_{k\le l_2}\lambda + c_3\sum\limits_{k>l_2} p_k\notag \\
 & = \Theta(\lambda l_2) + \Theta(l_2^{-1/\alpha+1}) = \Theta(M^{-1/\alpha+1}).
\end{align}
Similarly, 
\begin{align}
\label{eq:2}
    \sum\limits_{k=1}^{\infty}p_k \ratedistortion_k(m_k^*)
 &\le d_4\sum\limits_{k\le l_1}\lambda + c_4\sum\limits_{k>l_1} p_k
 \\&= \Theta(\lambda l_1) + \Theta(l_1^{-1/\alpha+1}) = \Theta(C^{-1/\alpha+1})
\end{align}
Combining \Cref{eq:1} and \Cref{eq:2} together, we have that 
\[ 
\sum\limits_{k=1}^{\infty}p_k \ratedistortion_k(m_k^*)=\Theta(C^{-1/\alpha+1}).
\]
Note that the above solution is carried out under a strong assumption $D_k(R)=a_kb_k^{-R}.$
Thus under \Cref{ass:model scaling}, we have $\testmodelerror(C)\le \Theta(C^{-1/\alpha+1})$. Combining this and \eqref{eq:model scaling Omega}, we have that 
\[
\testmodelerror(C)= \Theta(C^{-1/\alpha+1}).
\]
\end{proof}

\section{Fine-Tuning}
\label{sec:instruction}

Fine‐tuning an LLM is typically used for two broad purposes: (1) instruction‐following or (2) knowledge injection. Here, we focus on a standard instruction fine‐tuning scenario. Consider, for instance, in our experiment, a model that has been pretrained on a bioS (multi+permute) dataset \citep{allen2024physics}
and has already learned factual knowledge about specific individuals. During fine‐tuning, our goal is to fine-tune the model to be able to answer the questions about these individuals, and thus the fine‐tuning dataset consists of question–answer pairs with facts drawn from the same distribution as the pretraining data.

We view the generation of pretraining data and instruction fine-tuning data as a two-stage data generation process. 
The first stage, corresponding to the generation of pretraining data, follows the data model introduced in \Cref{app:datageneration}. 
In the second stage, i.e., the instruction fine-tuning stage, the knowledge model and the generation process of knowledge elements in the  data remain unchanged; however, the syntactic elements are produced by a different syntax model (e.g., the question–answer format), which we denote by \( \instructionModel \).

Let \( X_{1:N} \) denote the pretraining corpus, which is generated by
$(\synModel,\knwModel)$ and \( X_{N+1:N+n} \) the instruction fine-tuning corpus, generated by $(\instructionModel,\knwModel)$. 
The full corpus is denoted by \( X_{1:N+n} \).
Following the same reasoning as in Section~\ref{sec:baye data law}, 
the Bayesian redundancy of the full corpus \( X_{1:N+n} \) 
is given by the mutual information  
$$
\mathbb{I}(X_{1:N+n}; \synModel, \instructionModel, \knwModel). 
$$ 
In the following, we present a (somewhat heuristic) decomposition of the redundancy 
over the full corpus. 
The $\approx$ in the derivation below requires certain independence 
assumption, which may not hold exactly in real world,
and bounding the approximation error is left as a future work.

As in our data generation process, there is a one-to-one mapping between
a sentence $X$ and its syntax-knowledge element pair $(\Synelem,\Knwelem)$.
Hence, using the chain rule of mutual information, we can write
$$
    \mathbb{I}(X_{1:N+n}; \synModel, \instructionModel, \knwModel) \notag
    = \mathbb{I}(\Knwelem_{1:N+n}; \synModel, \instructionModel, \knwModel)
    +\mathbb{I}(\Synelem_{1:N+n}; \synModel, \instructionModel, \knwModel\mid\Knwelem_{1:N+n}) 
$$
Also note that the first and second terms in the RHS can be decomposed as
$$
\mathbb{I}(\Knwelem_{1:N+n}; \synModel, \instructionModel, \knwModel)
=\mathbb{I}(\Knwelem_{1:N+n}; \knwModel)+\mathbb{I}(\Knwelem_{1:N+n}; \synModel,\instructionModel\mid\knwModel),
$$
$$
\mathbb{I}(\Synelem_{1:N+n}; \synModel, \instructionModel, \knwModel\mid\Knwelem_{1:N+n}) 
=
\mathbb{I}(\Synelem_{1:N+n}; \synModel, \instructionModel\mid\Knwelem_{1:N+n})+
\mathbb{I}(\Synelem_{1:N+n}; \knwModel\mid\synModel, \instructionModel,\Knwelem_{1:N+n}). 
$$
Note that
$\mathbb{I}(\Knwelem_{1:N+n}; \synModel,\instructionModel\mid\knwModel)=0$
since the knowledge elements are generated independently of the syntax model,
and 
$\mathbb{I}(\Synelem_{1:N+n}; \knwModel\mid\synModel, \instructionModel,\Knwelem_{1:N+n})=0$
since $\knwModel$ -- $\Knwelem$ -- $\Synelem$ form a Markov chain.
Hence, we can see that
\begin{align}
    \mathbb{I}(X_{1:N+n}; \synModel, \instructionModel, \knwModel)
     =& \mathbb{I}(\Knwelem_{1:N+n}; \knwModel)+\mathbb{I}(\Synelem_{1:N+n}; \synModel,\instructionModel\mid\Knwelem_{1:N+n}) \notag
    \\ 
    \smash{\overset{\scriptstyle (1)}{\approx}}& \mathbb{I}(\Knwelem_{1:N+n}; \knwModel)+\mathbb{I}(\Synelem_{1:N+n}; \synModel,\instructionModel)  \notag
    \\ 
    =& \mathbb{I}(\Knwelem_{1:N+n}; \knwModel)+\mathbb{I}(\Synelem_{1:N}; \synModel,\instructionModel)+\mathbb{I}(\Synelem_{N+1:N+n}; \synModel,\instructionModel\mid\Synelem_{1:N}) \notag
    \\ 
    =& \mathbb{I}(\Knwelem_{1:N+n}; \knwModel)+\mathbb{I}(\Synelem_{1:N}; \synModel)+\mathbb{I}(\Synelem_{1:N}; \instructionModel\mid\synModel) \notag \\
    +&\mathbb{I}(\Synelem_{N+1:N+n}; \instructionModel\mid\Synelem_{1:N})+\mathbb{I}(\Synelem_{N+1:N+n}; \synModel\mid\instructionModel,\Synelem_{1:N})  \notag
    \\ 
    \smash{\overset{\scriptscriptstyle (2)}{\approx}}& \mathbb{I}(\Knwelem_{1:N+n}; \knwModel)+\mathbb{I}(\Synelem_{1:N}; \synModel)+\mathbb{I}(\Synelem_{N+1:N+n}; \instructionModel) \notag
\end{align}
In the above, approximation $\smash{\overset{\scriptscriptstyle (1)}{\approx}}$ relies on the assumption \( \Synelem_{1:N+n} \perp \Knwelem_{1:N+n} \),
and $\smash{\overset{\scriptscriptstyle (2)}{\approx}}$ requires \( \Synelem_{N+1:N+n} \perp \Synelem_{1:N} \) and \( \Synelem_{N+1:N+n} \perp \synModel \mid \instructionModel, \Synelem_{1:N} \), 
both of which can be induced by the same assumption \( \Synelem_{1:N+n} \perp \Knwelem_{1:N+n} \).
Under the same assumption, the redundancy of the pretraining corpus can also be decomposed as follows:
\begin{align*}
    \mathbb{I}(X_{1:N}; \synModel, \knwModel) 
     =& \mathbb{I}(\Knwelem_{1:N}; \synModel, \knwModel)+\mathbb{I}(\Synelem_{1:N}; \synModel, \knwModel\mid\Knwelem_{1:N}) \notag
    \\
    =& \mathbb{I}(\Knwelem_{1:N}; \knwModel)+\mathbb{I}(\Synelem_{1:N}; \synModel\mid\Knwelem_{1:N})
    \\ 
    \approx & \mathbb{I}(\Knwelem_{1:N}; \knwModel)+\mathbb{I}(\Synelem_{1:N}; \synModel).  
\end{align*}
The redundancy of the instruction fine-tuning corpus can be written as the difference between the redundancy of the full corpus and the redundancy of the pretraining corpus:
\begin{align}
    &\mathbb{I}(X_{1:N+n}; \synModel, \instructionModel, \knwModel)-\mathbb{I}(X_{1:N}; \synModel, \knwModel) \notag
     \approx \underbrace{\mathbb{I}(\Knwelem_{1:N+n}; \knwModel)-\mathbb{I}(\Knwelem_{1:N}; \knwModel)}_{\redundency_{\text{knw}}}+\underbrace{\mathbb{I}(\Synelem_{N+1:N+n}; \instructionModel)}_{\redundency_{\text{ins}}}
\end{align}
The two terms in the above equation 
correspond to the redundancy of knowledge elements $\redundency_{\text{knw}}$, and the redundancy of syntactic elements $\redundency_{\text{ins}}$ in the instruction fine-tuning corpus, respectively.



Following the same arguments in the proof of Theorem~\ref{thm:data_scaling_Mutual_Information}, we can see that
\( \mathbb{I}(\Knwelem; \knwModel) = \Tilde{O}(N^{\alpha}) \),
and hence for simplicity, we further assume that \( \mathbb{I}(\Knwelem; \knwModel) = cN^{\alpha} \) for some constant \( c > 0 \).
Then,
the average redundancy of the instruction fine-tuning corpus satisfies:
\begin{align*}
  &  \dfrac{1}{n}\redundency_{\text{knw}}=\mathbb{I}(\Knwelem_{1:N+n}; \knwModel)-\mathbb{I}(\Knwelem_{1:N}; \knwModel) = \dfrac{(N+n)^{\alpha}-N^{\alpha}}{n}\overset{(1)}{=}O(N^{\alpha-1}),
 \\   
 & \dfrac{1}{n}\redundency_{\text{ins}}=\mathbb{I}(S_{N+1:N+n}; \instructionModel)= \Tilde{O}(n^{-1}).
\end{align*}
where $\overset{(1)}{=}$ relies on the natural assumption that the size of pretrain corpus $N$ is much larger than the size of instruction fine-tuning corpus $n$.
Similar to Theorem~\ref{thm:data_scaling_Mutual_Information}, the primary effect of fine‐tuning in this scenario is that the model learns the new syntax to reduce the fine‐tuning loss (i.e., redundancy) with fast rate $\Tilde{O}(n^{-1})$, hence requiring relative fewer samples, while allowing the model to retain the factual knowledge it has already acquired
($N$ is large and the redundancy for the knowledge elements
$O(N^{\alpha-1})$ is quite small).

On the other hand, consider the setting where our objective is to {\em inject 
new knowledge} via fine‐tuning.
Suppose the fine‐tuning data uses a drastically different syntax or format from that used in pretraining, as well as completely new knowledge.
Now, the Bayesian redundancy is equal to
\begin{align}
\mathbb{I}(X_{1:N+n}; \synModel, \instructionModel, \knwModel, \knwModelNew)
\label{eq:miins}
\end{align}
where $\knwModelNew$ is the new knowledge model.
If the new syntax component is very different from the original one,
and the knowledge component is also quite new (i.e.,
nearly independent from the pretraining), the mutual information
\eqref{eq:miins} can be approximately decomposed into 
$$
\mathbb{I}(X_{1:N+n}; \synModel, \instructionModel, \knwModel, \knwModelNew)
\approx
\mathbb{I}(X_{1:N}; \synModel,  \knwModel)
+\mathbb{I}(X_{N+1:N+n}; \instructionModel, \knwModelNew). 
$$
The first term corresponds to the redundancy of the pretraining phase,
and the second term corresponds to finetuning and the redundancy (per sentence) can be bounded
as $\widetilde{O}(c_1/n+c_2/n^{1-\alpha})$ according to the same argument
in Theorem~\ref{thm:data_scaling_Mutual_Information}. As $n$ is much smaller
than $N$, we can see that the fine-tuning would experience a significant perplexity shift (even the entropy, which is the irreducible part of the loss, may also shift).
Moreover, the additional syntax learning step not only slows down the learning, but also risks occupying the model's limited capacity, leading to forgetting of the pretrained knowledge, especially when the model's capacity is constrained.
See the experimental results in Appendix~\ref{app:additionalexp}.
Therefore, two practical recommendations are in order (consistent with good practices in prior empirical works \citep{zhang2025unveiling, grattafiori2024llama, yang2024qwen2}).

\begin{itemize}
\item Knowledge Injection: When injecting new knowledge during fine-tuning, avoid drastically different formats or syntaxes that deviate significantly from the original pretraining data, especially in the capacity constrained setting. By preserving familiar structures, the model can focus on absorbing new facts rather than first learning an unfamiliar syntax. Moreover, 
it is beneficial to adopt a gradual approach that mixes the pretrained data with a portion of new knowledge during finetuning, which
helps mitigate large perplexity shifts and makes the optimization more stable \citep{guptacontinual,ibrahimsimple}.

\item Instruction Fine‐Tuning. For instruction fine‐tuning (e.g., adapting a model to question–answer formats), use a knowledge set that closely aligns with the data distribution seen during pretraining. 
This ensures that most of the fine‐tuning effort is spent on learning new syntax or style. Moreover, if we use completely a new set of knowledge in the SFT stage, the model only focuses on compressing the SFT dataset, which slows down the learning and may lead to forgetting of pretrained knowledge as well.
\end{itemize}

\section{Experimental Setting and Additional Experimental Results}
\label{app:experiment}

\subsection{Experiment Setting}

Following the experimental setting from \citep{AllenZhu-icml2024-tutorial, allen2024physics}, we generate profiles for 400,000 individuals. Each profile contains six attributes: date of birth, birth city, university, major, employer, and employer city. These attributes are used to populate a diverse set of templates (each type of information/instruction has 50 different templates), forming both pretraining and instruction fine-tuning datasets, as shown in Table~\ref{tab:datasets}. In the instruction tuning, we use data instances with odd indices (e.g., 1, 3, 5, 7, ...) as the training set and those with even indices (e.g., 2, 4, 6, 8, ...) as the test set.

\begin{table}[h]
    \centering
    \caption{Examples of Pretraining and Instruction Fine-Tuning Data}
    \begin{tabular}{c p{4cm}}
        \toprule
        \textbf{Dataset Type} & \multicolumn{1}{c}{\textbf{Example}} \\
        \hline
        Pretraining & ``Gracie Tessa Howell was born in Camden, NJ. He studied Biomedical Engineering and worked at UnitedHealth Group. He entered the world on April 15, 2081, and is employed in Minnetonka. He is an alumnus/alumna of Buena Vista College.'' \\
        \hline
        Instruction Fine-Tuning & ``Q: What area of study did Gracie Tessa Howell focus on? A: Biomedical Engineering'' \\
        \bottomrule
    \end{tabular}
    
    \label{tab:datasets}
\end{table}

As discussed in our theoretical section, the occurrence frequency of individuals in the pretraining/instruction fine-tuning dataset follows a
power-law distribution, formulated as Equation~\ref{eq:powerlaw}:

\begin{equation}
    P(i) = \frac{(i + \text{bias})^{-a}}{\sum_{j=1}^{N} (j + \text{bias})^{-a}}
\label{eq:powerlaw}
\end{equation}
where $i$ denotes the index of the individual, $N$ is the total number of individuals, and $a$ is the exponent parameter. Note that the frequency of data occurrence generated by a power-law distribution exhibits asymptotic behavior similar to that produced by the 
 $\PYCRP(1/a,\beta)$ (See \Cref{lem:PYCRP fre}).

We set $\text{bias} = 1000$ and vary the parameter $a$ over $\{0, 1.05, 1.20, 1.35, 1.50\}$. If not specified, we choose $a=1.20$. The final pretraining dataset contains 1.45B tokens (tokenized using the GPT-2 tokenizer), and the instruction fine-tuning dataset consists of 258M tokens. Pretraining is conducted for 4 epochs, totaling 5.8B tokens processed—almost twice the amount suggested by the Chinchilla scaling law.
\paragraph{Model Configuration}

We conduct experiments with RoPE-encoded GPT-like models \cite{modded_nanogpt_2024} of various sizes. The model configurations are detailed in Table~\ref{tab:model_sizes}.

\paragraph{Training Procedure}

For training, we use a sequence length of 512 and batch size of 128. We apply a warmup ratio of 0.05 and a warmdown ratio of 0.9. Pretraining runs for 4 epoch with a learning rate of 0.0003, while fine-tuning runs for 1 epochs with a reduced learning rate of 0.00003. We employ weight decay of 0.1 and bf16 precision. To enhance parallelism, multiple sequences are packed into 512-token sequences, but cross-sequence attention is masked out.

\subsection{Experiment Results}
\label{app:additionalexp}

\paragraph{Data Heterogeneity} \label{para:data_hetero}
\begin{figure}[htbp]
\centering
\begin{minipage}{0.48\linewidth}
  \centering
  \includegraphics[width=\linewidth]{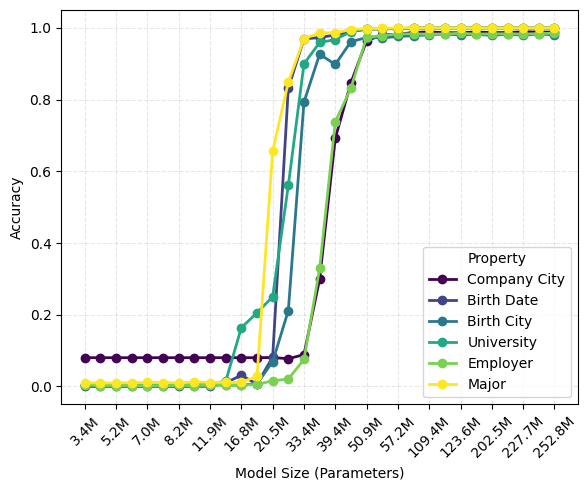}
\end{minipage}
\hfill
\begin{minipage}{0.48\linewidth}
  \centering
  \includegraphics[width=\linewidth,scale=0.95]{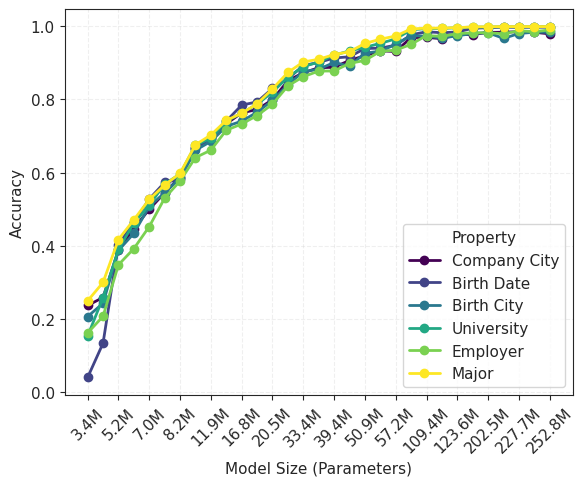}
\end{minipage}
\caption{
Accuracy of different properties across varying model sizes. The left panel shows the results under the uniform setting, while the right panel corresponds to the power-law setting. In both cases, model accuracy across properties exhibits heterogeneity—i.e., models prioritize certain properties over others. However, the trend differs: under the power-law setting, accuracy improves gradually across all properties as model size increases, whereas the uniform setting displays a sharp phase transition—accuracy remains low until a critical model size is reached, after which it rapidly improves.
}
\label{fig:property_capacity_frequency}
\end{figure}

We evaluate the accuracy of various properties under two different data distributions—uniform and power-law—to examine the heterogeneous nature of model learning. For each setting, we train models of varying sizes and measure their accuracy across multiple properties, revealing how capacity constraints influence what types of knowledge are learned first.

As shown in Figure~\ref{fig:property_capacity_frequency}, both settings demonstrate that models with limited capacity exhibit selective learning behavior—some properties are prioritized over others. Properties with lower entropy, such as Major, are easier to learn, as they contain less variability and thus can be compressed more efficiently by small models. In contrast, high-entropy properties require more capacity to capture and generalize. A notable example is the property \textit{company city}. In both settings, this attribute starts with a non-trivial accuracy of approximately 7\%, even when other properties remain close to zero. This is due to New York appearing with 7\% frequency in the dataset. Small models, unable to generalize meaningfully, default to outputting the most frequent token—an effect more visible in the uniform setting where each person appears the same number of times in the pretraining corpus, i.e. no person is inherently easier. 

However, the dynamics of this heterogeneity differ across the two distributions. Under the \textbf{power-law setting} (right), model accuracy for each property increases gradually as capacity grows. This suggests a smooth transition in learning, where dominant and compressible properties are acquired first, followed by rarer or more complex ones as capacity permits. In contrast, the \textbf{uniform setting} (left) reveals a more abrupt phase transition: most properties remain near-zero in accuracy until the model reaches a critical capacity threshold, after which multiple properties are rapidly acquired. This implies that, for capacity-constrained models, structuring knowledge injection with a skewed distribution—rather than uniform—may ensure partial acquisition of salient knowledge, rather than risking a complete failure to learn under uniform allocation.

\paragraph{Instruction Fine-Tuning}
\label{para:ift}

We investigate how different instruction tuning strategies affect knowledge retention and acquisition under varying model capacities. We use uniform distribution in this experiment. Based on a shared pretrained model, we inject new knowledge using both Supervised Fine-Tuning (SFT) and Continued Pretraining (CPT), mixing in 80\% of the original pretraining data to mitigate catastrophic forgetting following \cite{chen2020recall}. The underlying pretrained model is trained up to 50\%, 90\%, 100\%, and 120\% of its maximum capacity.

\begin{figure}[htbp]
  \centering
  \includegraphics[width=0.6\linewidth,scale=0.95]{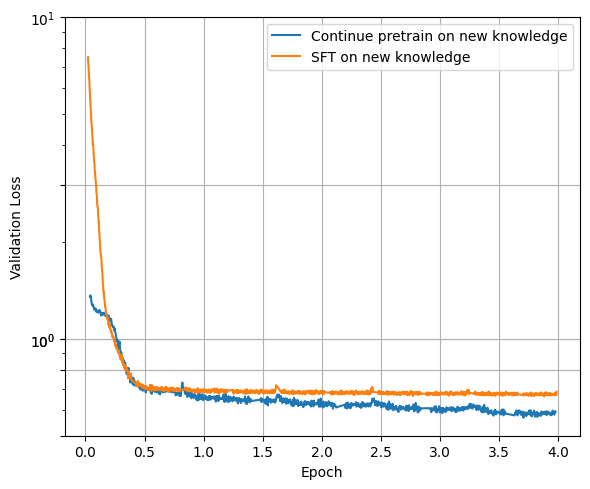}
  \caption{
    Training loss comparison between SFT and CPT. Due to format differences, SFT initially exhibits a higher loss.
  }
  \label{fig:sft_loss}
\end{figure}

\begin{table}[h]
  \centering
  \caption{
    Accuracy on old and new knowledge after instruction tuning. New knowledge is learned equally well by both methods, but old knowledge is better retained with CPT, especially when the model is near or beyond capacity.
  }
  \label{tab:sft_cpt_accuracy}
  \begin{tabular}{lcccc}
    \toprule
    & \multicolumn{4}{c}{Accuracy on Old Knowledge (\%)} \\
    \cmidrule(lr){2-5}
    & 50\% & 90\% & 100\% & 120\% \\
    \midrule
    SFT (on old) & 100.0 & 93.25 & 89.9 & 78.5 \\
    SFT (on new) & 100.0 & 100.0 & 100.0 & 100.0 \\
    CPT (on old) & 99.6  & 96.5  & 94.5  & 85.1 \\
    CPT (on new) & 100.0 & 100.0 & 100.0 & 100.0 \\
    \bottomrule
  \end{tabular}
\end{table}
While both SFT and CPT achieve perfect accuracy on new knowledge across all settings, they differ in how well they preserve prior knowledge. As shown in Table~\ref{tab:sft_cpt_accuracy}, SFT suffers from more severe forgetting, particularly in models trained at or beyond capacity (100\% and 120\%), where old accuracy drops significantly. This degradation is likely due to format-induced syntactic overhead in SFT, which consumes model capacity and leads to abrupt loss spikes (Figure~\ref{fig:sft_loss}). In contrast, CPT introduces new knowledge more seamlessly, resulting in more stable loss and better retention of previously learned content. Interestingly, for under-capacity models (e.g., 50\%), the difference is negligible—suggesting that forgetting primarily emerges when the model’s capacity is saturated.

\begin{nips}
    \begin{table}[ht]
    \centering
    \caption{Model Configurations with Parameter Counts}
    \begin{tabular}{lcccc}
        \toprule
        \textbf{Model Size} & \textbf{Layers} & \textbf{Heads} & \textbf{Emb Dim} & \textbf{Params (M)} \\
        \hline
        6xs2  & 4  & 4  & 64   & 3.4  \\
        6xs   & 3  & 4  & 80   & 4.3  \\
        5xs   & 3  & 4  & 96   & 5.2  \\
        5xs2  & 3  & 4  & 112  & 6.1  \\
        5xs1  & 3  & 4  & 128  & 7.0  \\
        4xs   & 4  & 4  & 128  & 7.2  \\
        4xs2 & 4  & 4  & 144  & 8.2  \\
        4xs1  & 4  & 4  & 192  & 11.4 \\
        3xs  & 5  & 4  & 192  & 11.9 \\
        3xs2 & 5  & 4  & 224  & 14.3 \\
        3xs1 & 5  & 4  & 256  & 16.8 \\
        xxs  & 6  & 4  & 256  & 17.6 \\
        xxs3 & 6  & 4  & 288  & 20.5 \\
        xxs2 & 6  & 8  & 352  & 26.6 \\
        xxs4 & 6  & 8  & 416  & 33.4 \\
        xxs1 & 6  & 8  & 448  & 37.0 \\
        xs   & 7  & 8  & 448  & 39.4 \\
        xs3  & 8  & 8  & 448  & 41.8 \\
        xs2  & 8  & 8  & 512  & 50.9 \\
        xs1  & 9  & 8  & 512  & 54.1 \\
        s    & 10 & 8  & 512  & 57.2 \\
        s3   & 10 & 8  & 704  & 94.9 \\
        s2   & 10 & 12 & 768  & 109.4 \\
        s1   & 11 & 12 & 768  & 116.5 \\
        base & 12 & 12 & 768  & 123.6 \\
        m5   & 12 & 16 & 896  & 160.7 \\
        m4   & 12 & 16 & 1024 & 202.5 \\
        m3   & 13 & 16 & 1024 & 215.1 \\
        m2   & 14 & 16 & 1024 & 227.7 \\
        m1   & 15 & 16 & 1024 & 240.3 \\
        m    & 16 & 16 & 1024 & 252.8 \\
        \bottomrule
    \end{tabular}
    \label{tab:model_sizes}
\end{table}
\end{nips}

\begin{arxiv}

\begin{table}[ht]
    \centering
    \scalebox{0.8}{
    \begin{tabular}{lcccc}
        \toprule
        \textbf{Model Size} & \textbf{Layers} & \textbf{Heads} & \textbf{Emb Dim} & \textbf{Params (M)} \\
        \hline
        6xs2  & 4  & 4  & 64   & 3.4  \\
        6xs   & 3  & 4  & 80   & 4.3  \\
        5xs   & 3  & 4  & 96   & 5.2  \\
        5xs2  & 3  & 4  & 112  & 6.1  \\
        5xs1  & 3  & 4  & 128  & 7.0  \\
        4xs   & 4  & 4  & 128  & 7.2  \\
        4xs2 & 4  & 4  & 144  & 8.2  \\
        4xs1  & 4  & 4  & 192  & 11.4 \\
        3xs  & 5  & 4  & 192  & 11.9 \\
        3xs2 & 5  & 4  & 224  & 14.3 \\
        3xs1 & 5  & 4  & 256  & 16.8 \\
        xxs  & 6  & 4  & 256  & 17.6 \\
        xxs3 & 6  & 4  & 288  & 20.5 \\
        xxs2 & 6  & 8  & 352  & 26.6 \\
        xxs4 & 6  & 8  & 416  & 33.4 \\
        xxs1 & 6  & 8  & 448  & 37.0 \\
        xs   & 7  & 8  & 448  & 39.4 \\
        xs3  & 8  & 8  & 448  & 41.8 \\
        xs2  & 8  & 8  & 512  & 50.9 \\
        xs1  & 9  & 8  & 512  & 54.1 \\
        s    & 10 & 8  & 512  & 57.2 \\
        s3   & 10 & 8  & 704  & 94.9 \\
        s2   & 10 & 12 & 768  & 109.4 \\
        s1   & 11 & 12 & 768  & 116.5 \\
        base & 12 & 12 & 768  & 123.6 \\
        m5   & 12 & 16 & 896  & 160.7 \\
        m4   & 12 & 16 & 1024 & 202.5 \\
        m3   & 13 & 16 & 1024 & 215.1 \\
        m2   & 14 & 16 & 1024 & 227.7 \\
        m1   & 15 & 16 & 1024 & 240.3 \\
        m    & 16 & 16 & 1024 & 252.8 \\
        \bottomrule
    \end{tabular}
    }
    \caption{Model Configurations with Parameter Counts}
    \label{tab:model_sizes}
\end{table}
\end{arxiv}

\section{Omitted Proofs}
\label{app:Omitted Proofs}
\subsection{Properties of Pitman-Yor Mixture Model}
\label{subsec:PitmanYor}
\begin{lemma}[Theorem 4.3 in \cite{pitman2006combinatorial}]
\label{thm:PYP rank fre} 
The mixing weights $p=(p_1,p_2,\cdots)\sim\PYCRP(\alpha,\beta)$ can be represented as a stick-breaking process:
\[p_j=W_j\prod\limits_{i=1}^{j-1}(1-W_i),\]
where $W_i$ are independent and $W_i \sim \text{Beta}(1 - \alpha, \beta + i \alpha)$.
\end{lemma}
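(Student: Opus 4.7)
The plan is to prove the stick-breaking representation by exploiting the preferential-attachment dynamics of $\PYCRP(\alpha,\beta)$ together with a ``deletion of a class'' self-similarity argument, iterated indefinitely. The key observation is that the weights $p_j$ are indexed in the order of appearance of tables, which coincides with a size-biased ordering of the underlying atoms; this is precisely the ordering under which the Pitman--Yor recursion is cleanest.

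First, I would analyze $p_1 = \lim_{n \to \infty} N_1/n$, the limiting fraction of customers seated at the very first table. Conditional on the first customer opening table $1$, the subsequent dynamics restricted to the dichotomy ``join table $1$ vs.\ do anything else'' behave like a two-color Pólya-type urn: at step $n$ the probability of joining table $1$ is $(N_1 - \alpha)/(n-1+\beta)$ and the probability of not joining table $1$ is $(n - 1 - N_1 + \beta + \alpha)/(n-1+\beta)$. A standard martingale/de Finetti argument on this generalized Pólya urn identifies the almost-sure limit $N_1/n \to W_1$ with $W_1 \sim \mathrm{Beta}(1-\alpha,\, \beta+\alpha)$, which matches the first stick-breaking factor.

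Second, I would establish the crucial self-similarity property: conditional on $W_1$, if we strip away all customers seated at table $1$ and re-index the remaining tables in order of first appearance, then the induced process on the remaining $1-W_1$ fraction of mass is itself a $\PYCRP(\alpha, \beta+\alpha)$. The heuristic is that the ``start a new table'' probability $(\beta + \alpha K)/(n-1+\beta)$ after deletion effectively updates $\beta \mapsto \beta + \alpha$ because one occupied table has been removed along with its contribution of $\alpha$ to the numerator of the ``existing table'' sum. Making this rigorous requires checking that the deleted subsequence, after rescaling by $1/(1-W_1)$, obeys exactly the $\PYCRP(\alpha,\beta+\alpha)$ seating probabilities. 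Applying the same Pólya-urn limit to the next surviving table then yields $W_2 \sim \mathrm{Beta}(1-\alpha, \beta + 2\alpha)$, independent of $W_1$, and iterating gives $W_i \sim \mathrm{Beta}(1-\alpha, \beta+i\alpha)$ independently, which is exactly the claimed stick-breaking representation $p_j = W_j \prod_{i<j}(1-W_i)$.

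The main obstacle I expect is the deletion/self-similarity step: verifying that conditioning on the limiting weight of the first table and removing it genuinely produces an independent $\PYCRP$ with shifted parameter $\beta+\alpha$, and that successive $W_i$'s so constructed are mutually independent. One clean way around this is to avoid the ``finite-$n$ then take limits'' route altogether and instead work directly with the exchangeable partition probability function (EPPF) of the PY two-parameter family, from which the GEM$(\alpha,\beta)$ distribution of the size-biased ranked frequencies can be read off by marginalizing, but this essentially trades one nontrivial calculation for another and is the technical heart of Pitman's original argument.
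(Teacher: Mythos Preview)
The paper does not prove this lemma at all: it is stated with an explicit citation to Theorem~4.3 of \cite{pitman2006combinatorial} and used as a black box (only to feed into the computation of $\mathbb{E}[\Len_n]$ in the next lemma). There is therefore no ``paper's own proof'' to compare against.

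Your sketch is a correct outline of the standard size-biased / deletion argument that underlies the GEM$(\alpha,\beta)$ representation, and the parameter identifications $W_1\sim\mathrm{Beta}(1-\alpha,\beta+\alpha)$ and the shift $\beta\mapsto\beta+\alpha$ upon removal of the first table are right. If you actually want to write out a self-contained proof, the step you flag as the main obstacle --- that deleting the first table and rescaling yields an independent $\PYCRP(\alpha,\beta+\alpha)$ --- is indeed the nontrivial part; the cleanest route is, as you note, to go through the EPPF and compute the size-biased pick directly, which is precisely Pitman's argument. But for the purposes of this paper no proof is required.
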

\begin{lemma}[Theorem 3.13 in \cite{pitman2006combinatorial}]
\label{lem:PYCRP fre}
Let \( p = (p_1, p_2, \dots) \sim \mathrm{PYCRP}(\alpha, \beta) \) be the sequence of mixing weights drawn from a Pitman–Yor process. Then, the following limit almost surely exists:
\[
S_{\alpha,\beta} = \lim_{i \to \infty} i^{1/\alpha} p_i.
\]
\end{lemma}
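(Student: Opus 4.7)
The plan is to establish the almost-sure existence of $S_{\alpha,\beta} = \lim_{i \to \infty} i^{1/\alpha} p_i$ for the decreasingly ranked weights of the Pitman-Yor process by routing through the classical subordinator representation of $\PYP(\alpha, \beta, \basemeasure)$ and applying LePage's explicit formula for the ranked jumps of a stable subordinator.

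First, I would invoke the Perman-Pitman-Yor representation: for $\beta > -\alpha$, the law of the ranked PYP weights $(p_i)_{i \geq 1}$ is absolutely continuous with respect to the law of $(\Delta_{(i)}/T_1)_{i \geq 1}$, where $\Delta_{(1)} \geq \Delta_{(2)} \geq \cdots$ are the ranked jumps on $[0, 1]$ of an $\alpha$-stable subordinator and $T_1 = \sum_{j} \Delta_{(j)}$, with Radon-Nikodym density proportional to $T_1^{-\beta}$ (suitably interpreted for $\beta \in (-\alpha, 0]$). Since almost-sure convergence is preserved under equivalent changes of measure, it suffices to prove the a.s.\ limit under the untilted stable-subordinator law.

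Next, I would apply LePage's representation, which gives the closed form
\[
\Delta_{(i)} = (c_\alpha \, \Gamma_i)^{-1/\alpha}, \qquad \Gamma_i = E_1 + \cdots + E_i,
\]
with $(E_j)_{j \geq 1}$ i.i.d.\ $\mathrm{Exp}(1)$ and $c_\alpha = \alpha / \Gamma(1 - \alpha)$. By the strong law of large numbers, $\Gamma_i / i \to 1$ almost surely, hence
\[
i^{1/\alpha} \Delta_{(i)} = (c_\alpha \Gamma_i / i)^{-1/\alpha} \longrightarrow c_\alpha^{-1/\alpha} \quad \text{a.s.}
\]
Dividing by the a.s.-finite, positive $T_1$ yields $i^{1/\alpha}(\Delta_{(i)}/T_1) \to c_\alpha^{-1/\alpha}/T_1$ almost surely under the stable law, and by the measure-equivalence above the same limit persists (as a strictly positive, a.s.-finite random variable $S_{\alpha,\beta}$) under the Pitman-Yor law.

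The main obstacle is verifying the tilt that converts the normalized stable-subordinator jumps into the Pitman-Yor law, in particular the integrability of $T_1^{-\beta}$ and the matching of the resulting distribution over the full admissible range $\beta \in (-\alpha, \infty)$; this is the substantive structural ingredient, originally due to Perman, Pitman and Yor. A minor but important caveat is that the asymptotic $i^{1/\alpha} p_i \to S_{\alpha,\beta}$ genuinely requires the decreasingly ranked ordering: under the stick-breaking representation of \Cref{thm:PYP rank fre}, a direct expansion yields $j^{1/\alpha} p_j = j W_j \cdot e^{R_\infty + o(1)}$ with $W_j \sim \mathrm{Beta}(1-\alpha, \beta + j\alpha)$, where $j W_j$ converges only in distribution (to a Gamma with shape $1-\alpha$) and not almost surely, so the stick-breaking route cannot produce an a.s.\ statement and the subordinator/LePage path is essentially forced.
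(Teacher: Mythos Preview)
The paper does not supply a proof of this lemma; it is quoted verbatim as Theorem~3.13 of Pitman's \emph{Combinatorial Stochastic Processes} and used as a black box. So there is no in-paper argument to compare against.

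Your proposed route is the standard one and is essentially how Pitman establishes the result: represent $\mathrm{PD}(\alpha,0)$ as the normalized ranked jumps of an $\alpha$-stable subordinator, use the LePage/Ferguson--Klass inverse-L\'evy-measure representation $\Delta_{(i)} = \rho^{-1}(\Gamma_i)$ together with the strong law $\Gamma_i/i \to 1$ to get $i^{1/\alpha}\Delta_{(i)} \to \text{const}$ a.s., divide by $T_1$, and then transfer to general $\beta$ via the Perman--Pitman--Yor change of measure with density proportional to $T_1^{-\beta}$ (equivalently $S_\alpha^{\beta}$), which is equivalent to the base law and hence preserves almost-sure statements. Two minor remarks: the normalizing constant you write as $c_\alpha = \alpha/\Gamma(1-\alpha)$ and the exact placement of $c_\alpha$ inside the power depend on the normalization convention for the stable L\'evy measure, so double-check the sign of the exponent on $c_\alpha$ (it does not affect existence of the limit, only its value); and your caveat about the ranked versus size-biased ordering is exactly right---the paper's own $p_i$ are introduced in arrival/stick-breaking order (their \Cref{thm:PYP rank fre}), whereas the a.s.\ limit genuinely requires the decreasing rearrangement, so the statement as used in the paper implicitly refers to the ranked sequence.
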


\begin{lemma}
\label{lem:L_j}
Denote the remaining stick length of the stick-breaking process in \Cref{thm:PYP rank fre} as \[\Len_j:=\prod_{i=1}^j(1-W_i)=\sum\limits_{i=j+1}^{\infty}p_i.\]
Then the expectation of the remaining stick length satisfies:
    \[\mathbb{E}[\Len_n]=\Theta(n^{1-1/\alpha}).\]
\end{lemma}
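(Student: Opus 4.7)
The plan is to reduce everything to a closed-form product via independence of the $W_i$, then recognize the product as a ratio of Gamma functions, and finally apply Stirling's asymptotic to extract the exponent $1-1/\alpha$.

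First, since the $W_i$ are independent, $\mathbb{E}[\Len_n]=\prod_{i=1}^n\mathbb{E}[1-W_i]$. Because $W_i\sim\mathrm{Beta}(1-\alpha,\beta+i\alpha)$, we have $\mathbb{E}[1-W_i]=\frac{\beta+i\alpha}{(1-\alpha)+(\beta+i\alpha)}=\frac{\beta+i\alpha}{1+\beta+(i-1)\alpha}$. Dividing numerator and denominator by $\alpha$ rewrites the product as
\[
\mathbb{E}[\Len_n]=\prod_{i=1}^n\frac{i+a}{i+b},\qquad a:=\beta/\alpha,\ b:=a+\tfrac{1-\alpha}{\alpha}=a+\tfrac{1}{\alpha}-1,
\]
so in particular $b-a=\tfrac{1}{\alpha}-1>0$.

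Next I would use the identity $\prod_{i=1}^n(i+a)=\Gamma(n+a+1)/\Gamma(a+1)$ (and similarly for $b$) to obtain
\[
\mathbb{E}[\Len_n]=\frac{\Gamma(b+1)}{\Gamma(a+1)}\cdot\frac{\Gamma(n+a+1)}{\Gamma(n+b+1)}.
\]
The prefactor $\Gamma(b+1)/\Gamma(a+1)$ is a positive constant depending only on $\alpha,\beta$. For the $n$-dependent factor, Stirling's formula (or equivalently the standard asymptotic $\Gamma(n+x)/\Gamma(n+y)\sim n^{x-y}$ as $n\to\infty$) gives $\Gamma(n+a+1)/\Gamma(n+b+1)\sim n^{a-b}=n^{1-1/\alpha}$. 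Combining the two factors yields $\mathbb{E}[\Len_n]=\Theta(n^{1-1/\alpha})$, as desired.

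I do not expect a serious obstacle here: the argument is mostly a routine calculation, and the only thing to be careful about is that the parameters $a,b$ satisfy $a+1>0$ and $b+1>0$ so that the Gamma functions are well-defined and Stirling's asymptotic applies. These positivity conditions follow directly from the standing assumptions $0\le\alpha<1$ and $\beta>-\alpha$ on the Pitman--Yor parameters, since then $\beta/\alpha>-1$ (handling $\alpha=0$ separately if needed, where the claim degenerates appropriately). The constants hidden in the $\Theta(\cdot)$ notation are $\Gamma(b+1)/\Gamma(a+1)$ up to $(1+o(1))$ correction from Stirling.
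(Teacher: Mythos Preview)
Your proposal is correct and follows essentially the same approach as the paper: compute $\mathbb{E}[1-W_i]$ from the Beta mean, exploit independence to factor the expectation, rewrite the resulting product as a ratio of Gamma functions, and then apply Stirling's asymptotic to extract the exponent $1-1/\alpha$. The paper's computation is notationally identical up to your substitution $a=\beta/\alpha$, $b=(1+\beta)/\alpha-1$, and its constant $C=\Gamma((1+\beta)/\alpha)/\Gamma(1+\beta/\alpha)$ coincides with your $\Gamma(b+1)/\Gamma(a+1)$.
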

\begin{proof}
Since $W_i \sim \text{Beta}(1 - \alpha, \beta + i \alpha)$, we know that \[\mathbb{E}[W_i]=\frac{\beta + i\alpha}{1 + \beta + (i-1)\alpha}.\]
Moreover, $W_i$ are independent. Thus we have
\begin{align*}
\mathbb{E}[\Len_n] 
&= \prod_{i=1}^{n} \frac{\beta + i\alpha}{1 + \beta + (i-1)\alpha} 
= \frac{\prod_{i=1}^{n} (\beta + i\alpha)}{\prod_{i=1}^{n} (1 + \beta + (i-1)\alpha)} \\
&= \frac{\alpha^n \cdot \prod_{i=1}^{n} \left( \frac{\beta}{\alpha} + i \right)}{\alpha^n \cdot \prod_{i=0}^{n-1} \left( \frac{1 + \beta}{\alpha} + i \right)} 
= \frac{\Gamma\left(n + 1 + \frac{\beta}{\alpha} \right)}{\Gamma\left(1 + \frac{\beta}{\alpha} \right)} \cdot \frac{\Gamma\left( \frac{1 + \beta}{\alpha} \right)}{\Gamma\left( n + \frac{1 + \beta}{\alpha} \right)} \\
&= C \cdot \frac{\Gamma\left(n + 1 + \frac{\beta}{\alpha} \right)}{\Gamma\left( n + \frac{1 + \beta}{\alpha} \right)}, \quad \text{where } C = \frac{\Gamma\left( \frac{1 + \beta}{\alpha} \right)}{\Gamma\left(1 + \frac{\beta}{\alpha} \right)}.
\end{align*}
Then by Stirling's approximation, we know that that $\mathbb{E}(\Len_j)=\Theta(n^{1-1/\alpha}).$
\end{proof}
\subsection{KL-divergence of Mixtures}
\begin{lemma}
\label{lem:mixture KL}
Let $\phi=\sum\limits_{i=1}^{n}p_i\delta_{\phi_i},\theta=\sum\limits_{i=1}^{n}q_i\delta_{\theta_i}$
be two mixtures. Then we have
\[\DKL(P_{\phi}(x)||P_{\theta}(x))\le \sum\limits_{i=1}^np_i\log \dfrac{p_i}{q_i}+\sum\limits_{i=1}^np_i \DKL(P_{\phi_i}(x)||P_{\theta_i}(x)).\]
\end{lemma}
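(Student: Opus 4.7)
The plan is to prove the inequality by lifting both mixtures to joint distributions over $(x,Z)$, where $Z$ is the latent component index, and then applying the chain rule for KL divergence together with the data-processing (marginalization) inequality.

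Concretely, first I would define two joint distributions on $(x,i)\in \mathcal{X}\times\{1,\dots,n\}$ by
\begin{equation*}
P(x,i) = p_i\, P_{\phi_i}(x), \qquad Q(x,i) = q_i\, P_{\theta_i}(x).
\end{equation*}
By construction, the $x$-marginals of $P$ and $Q$ are exactly $P_\phi(x)=\sum_i p_i P_{\phi_i}(x)$ and $P_\theta(x)=\sum_i q_i P_{\theta_i}(x)$, respectively. The marginal of $P$ in the $i$-coordinate is $(p_1,\dots,p_n)$, and conditional on $Z=i$ the $x$-distribution under $P$ is $P_{\phi_i}$ (similarly for $Q$).

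Next, I would apply the chain rule for KL divergence to these joint distributions:
\begin{equation*}
\DKL(P(x,i)\,\|\,Q(x,i)) = \DKL(P(i)\,\|\,Q(i)) + \sum_{i=1}^n p_i\, \DKL(P(x\mid i)\,\|\,Q(x\mid i)).
\end{equation*}
Plugging in the definitions, the right-hand side becomes exactly $\sum_i p_i \log(p_i/q_i) + \sum_i p_i \DKL(P_{\phi_i}\,\|\,P_{\theta_i})$, which is the target upper bound.

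Finally, I would use the data-processing inequality (equivalently, the log-sum inequality), which says that marginalizing out $i$ can only decrease KL divergence:
\begin{equation*}
\DKL(P_\phi\,\|\,P_\theta) \;=\; \DKL\!\left(\textstyle\sum_i P(x,i)\,\Big\|\,\sum_i Q(x,i)\right) \;\le\; \DKL(P(x,i)\,\|\,Q(x,i)).
\end{equation*}
Chaining this with the previous identity yields the claimed inequality. There is no real obstacle here; the only point to be careful about is the direction of the data-processing step, which is what produces an inequality rather than an equality (equality would fail in general because the two mixtures are, in principle, indexed differently through $\{\phi_i\}$ versus $\{\theta_i\}$).
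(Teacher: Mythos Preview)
Your proposal is correct and is essentially the same argument as the paper's proof: the paper applies the log-sum inequality directly to $\sum_x(\sum_i p_iP_{\phi_i}(x))\log\frac{\sum_i p_iP_{\phi_i}(x)}{\sum_i q_iP_{\theta_i}(x)}$, which is exactly your data-processing step, and the resulting expansion into the two sums is your chain-rule step. The only difference is packaging---you invoke named properties (chain rule, data processing) where the paper writes out the one-line log-sum computation---so there is no substantive divergence.
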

\begin{proof}
By the log-sum inequality, we know that 
\begin{align*}
    \DKL(P_{\phi}(x)||P_{\theta}(x)) &= \sum\limits_{x} \left(\sum_{i=1}^n p_i P_{\phi_i}(x) \right) \log\dfrac{ \sum_{i=1}^n p_i P_{\phi_i}(x) }{ \sum_{i=1}^nP_{\theta_i}(x)}
    \\& \le \sum\limits_{x} \sum_{i=1}^n \left(p_i P_{\phi_i}(x)  \log\dfrac{ p_iP_{\phi_i}(x) }{ q_iP_{\theta_i}(x)}\right)
    \\&= \sum\limits_{i=1}^n p_i\log\dfrac{p_i}{q_i}+\sum\limits_{i=1}^np_i \DKL(P_{\phi_i}(x)||P_{\theta_i}(x))
\end{align*}
\end{proof}

\subsection{KL-divergence and Fisher Information}
\begin{definition}[Fisher Information]
Let $X$ be a random variable with probability density function (PDF) $P_{\theta}(x)$. The Fisher information is given by:
\begin{align*}
    J(\theta) = \mathbb{E}_X \left[ \left( \frac{\partial}{\partial \theta} \log P_{\theta}(X) \right)^2 \right]=- \mathbb{E}_X \left[  \frac{\partial^2}{\partial \theta^2} \log P_{\theta}(X) \right].
\end{align*}
\end{definition}
\begin{lemma}[Exercise 11.7 in \cite{cover2006elements}]
\label{lem:KL and MSE}
For a parametric family $\{P_\theta(x)\}$ that
\[
\lim_{\theta' \to \theta} \frac{\DKL(P_\theta \| P_{\theta'})}{(\theta - \theta')^TJ(\theta)(\theta-\theta')}  = \frac{1}{2}.
\]
\end{lemma}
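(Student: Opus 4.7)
The plan is the classical second-order Taylor expansion of the log-likelihood, which is the standard route to identifying the Fisher information as the local quadratic form of KL. Write
\[
\DKL(P_\theta\|P_{\theta'}) \;=\; \E_{X\sim P_\theta}\bigl[\log P_\theta(X)-\log P_{\theta'}(X)\bigr],
\]
and expand $\log P_{\theta'}(x)$ around $\theta'=\theta$ in the variable $h:=\theta'-\theta$:
\[
\log P_{\theta'}(x) \;=\; \log P_\theta(x) + \nabla_\theta\log P_\theta(x)^{T} h + \tfrac{1}{2}\, h^{T}\nabla^{2}_\theta\log P_\theta(x)\, h + r(x,h),
\]
where $r(x,h)=o(\|h\|^2)$ uniformly (under the usual smoothness assumptions of the family). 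Taking expectation under $P_\theta$ eliminates the zeroth-order term, and the first-order term vanishes by the standard score identity $\E_{X\sim P_\theta}[\nabla_\theta\log P_\theta(X)]=0$ (obtained by differentiating $\int P_\theta(x)\,\dd x = 1$ and exchanging differentiation with integration). What remains is
\[
\DKL(P_\theta\|P_{\theta'}) \;=\; -\tfrac{1}{2}\, h^{T}\,\E_{X\sim P_\theta}\!\bigl[\nabla^{2}_\theta\log P_\theta(X)\bigr]\,h \;-\; \E_{X\sim P_\theta}[r(X,h)].
\]
By the second expression in the definition of $J(\theta)$, the matrix $-\E_{X\sim P_\theta}[\nabla^{2}_\theta\log P_\theta(X)]$ is exactly $J(\theta)$, so the leading term is $\tfrac{1}{2}\, h^{T} J(\theta) h$.

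To finish, divide by $h^{T} J(\theta) h$ and take $h\to 0$. Writing $h=\|h\|\,u$ with $\|u\|=1$, the main term contributes $\tfrac{1}{2}$ exactly, while the remainder contributes
\[
\frac{\E_{X\sim P_\theta}[r(X,h)]}{h^{T}J(\theta)h} \;=\; \frac{o(\|h\|^2)}{\|h\|^2\, u^{T}J(\theta)u} \;\longrightarrow\; 0,
\]
provided $J(\theta)\succ 0$ so that the denominator is bounded below by $\lambda_{\min}(J(\theta))\,\|h\|^{2}>0$. This yields the claimed limit $1/2$.

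The main obstacle is purely technical: justifying (i) the interchange of differentiation and integration that gives the vanishing score identity, and (ii) a uniform $o(\|h\|^2)$ control of the remainder after taking expectation (so that $\E_{X\sim P_\theta}[r(X,h)]=o(\|h\|^2)$, not merely pointwise). Both follow from standard regularity hypotheses on parametric families (measurability, twice continuous differentiability of $\theta\mapsto \log P_\theta(x)$, and a dominating integrable envelope for $\nabla^{2}_\theta\log P_\theta$ in a neighborhood of $\theta$), which we assume implicitly on $\{P_\theta\}$. Under these, the argument goes through and the lemma holds.
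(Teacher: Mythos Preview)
Your proposal is correct and follows essentially the same route as the paper: a second-order Taylor expansion of $\log P_{\theta'}(x)$ around $\theta'=\theta$, then identifying the quadratic term with $J(\theta)$ via its Hessian characterization. The only cosmetic difference is that the paper argues the first-order term vanishes because the KL divergence attains its minimum at $\theta'=\theta$, whereas you invoke the score identity directly; your treatment of the remainder and the positive-definiteness of $J(\theta)$ is in fact more explicit than the paper's.
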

\begin{proof}
We provide a proof for completeness. We perform a second-order Taylor expansion of the KL divergence with respect to its second argument \( \theta' \), around the  first term \( \theta \). 

\begin{align*}
    D_{\text{KL}}(P_{\theta}||P_{\theta'}) &= \mathbb{E}_{x\sim P_{\theta}}\left[\log \dfrac{P_{\theta}(x)}{P_{\theta'}(x)}\right]
    \\ &=-\mathbb{E}_{x\sim P_{\theta}}\left[(\theta-\theta')\nabla_{\theta} \log P_{\theta} + \dfrac{1}{2}(\theta-\theta')^T\nabla^2_{\theta} \log P_{\theta}(\theta-\theta') \right] + o(|\theta-\theta'|^2)
    \\ & = -\mathbb{E}_{x\sim P_{\theta}}\left[(\theta-\theta')\nabla_{\theta} \log P_{\theta}\right] - \dfrac{1}{2}\mathbb{E}_{x\sim P_{\theta}}\left[(\theta-\theta')^T\nabla^2_{\theta} \log P_{\theta}(\theta-\theta') \right] + o(|\theta-\theta'|^2)
    \\ & = -(\theta-\theta')\mathbb{E}_{x\sim P_{\theta}}\left[\nabla_{\theta} \log P_{\theta}\right] - \dfrac{1}{2}(\theta-\theta')^T\mathbb{E}_{x\sim P_{\theta}}\left[\nabla^2_{\theta} \log P_{\theta} \right](\theta-\theta') + o(|\theta-\theta'|^2)
    \\ & = -(\theta-\theta')\mathbb{E}_{x\sim P_{\theta}}\left[\nabla_{\theta} \log P_{\theta}\right] + \dfrac{1}{2}(\theta-\theta')^TJ(\theta)(\theta-\theta') + o(|\theta-\theta'|^2)
\end{align*}

The first-order term $(\theta-\theta')\mathbb{E}_{x\sim P_{\theta}}\left[\nabla_{\theta} \log P_{\theta}\right]=0$ since the KL divergence attains its minimum value of zero at \( \theta' = \theta \).


Thus we know that 
\[
\lim_{\theta' \to \theta} \frac{\DKL(P_\theta \| P_{\theta'})}{(\theta - \theta')^TJ(\theta)(\theta-\theta')}  = \frac{1}{2}.
\]

\end{proof}
\begin{remark}
\label{justify KL and MSE}
The conclusion of this lemma indicates that the  \Cref{ass: mutual information} is essentially analogous to requiring the Fisher Information of the distribution family to be bounded. Such assumptions are common in related literature. For instance, Conditions 1 and 4 in \citep{clarke1990information} impose similar requirements. Notably, \citep{clarke1990information} also points out that these assumptions are widely satisfied within a class of exponential family distributions. 
\end{remark}

\subsection{\texorpdfstring{Proofs of \Cref{sec:baye data law}}{Proofs of \ref{sec:baye data law}}}
\begin{theorem}[Restatement of \Cref{thm:data_scaling_Mutual_Information}]
Under the Bayesian sequential prediction framework and \Cref{ass: mutual information}, the averaged optimal Bayesian redundancy (per sentence) of the hierarchical data model $\dataModel$  satisfies:
    \begin{align*}
    \inf_{\Model}\dfrac{1}{N}\redundency_N(\Model,\datafamily)=\dfrac{1}{N}\mathbb{I}(X_{1:N}; \dataModel) = 
    \widetilde{O}\left(\frac{\knwdimension}{N^{1-\alpha}}+\frac{ n_s\syndimension}{N}\right).
    \end{align*}
\end{theorem}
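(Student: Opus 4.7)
The plan is to apply \Cref{lem:baye red}, which equates $\inf_{\Model} \redundency_N(\Model, \datafamily)$ with the mutual information $\mathbb{I}(X_{1:N}; \dataModel)$, and to upper bound this mutual information by exhibiting a specific mixture code $Q$ obtained by discretizing the prior over $(\synModel, \knwModel)$. Because $\synModel$ and $\knwModel$ are independent under the prior, I would take $Q$ to be a product of two independent discretized mixture codes, and then use the chain rule $\mathbb{I}(X_{1:N};\synModel,\knwModel) \le \mathbb{I}(X_{1:N};\knwModel) + \mathbb{I}(X_{1:N};\synModel\mid\knwModel)$ to split the analysis into a knowledge contribution and a syntax contribution. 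In both parts, the two ingredients will be (i) a covering of the relevant parameter set and (ii) \Cref{ass: mutual information}, which converts squared Euclidean distance in parameter space into KL penalty per sample.

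\textbf{Bounding the syntax contribution.} The syntax parameters $\synModel = (\synModel^{(1)},\dots,\synModel^{(\numsyntax)})$ live in a fixed, finite-dimensional space $\synfamily^{\numsyntax}$. This is the classical parametric regime (Clarke--Barron). For each of the $\numsyntax$ components I would build an $\epsilon$-net of $\synfamily\subset\mathbb{R}^{\syndimension}$ at resolution $\epsilon\sim 1/\sqrt{N}$, so that by \Cref{ass: mutual information} each sample incurs KL at most $\synLip\epsilon^2 = O(\synLip/N)$, and use a uniform prior on the $O((1/\epsilon)^{\syndimension})$ grid points. The total per-component cost is $\tfrac{\syndimension}{2}\log N + O(\synLip)$ bits, and summing over the $\numsyntax$ components gives $\widetilde O(\numsyntax\,\syndimension)$ bits of Bayesian redundancy, i.e., $\widetilde O(\numsyntax\,\syndimension/N)$ per sentence.

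\textbf{Bounding the knowledge contribution.} The main difficulty is nonparametric. The crucial structural fact is that under $\PYP(\alpha,\beta,\basemeasure)$, the number $K_N$ of \emph{distinct} clusters activated by $N$ i.i.d.\ draws satisfies $\E[K_N] = \widetilde O(N^\alpha)$, with the unused tail mass $\Len_{K_N}$ also being $\widetilde O(N^{\alpha-1})$ by \Cref{lem:L_j}. Conditioning on the cluster assignments, I would (a) pay a partition-encoding cost of $\widetilde O(K_N)$ bits, since by exchangeability the CRP seating probabilities can themselves be used as the encoding distribution; (b) for each activated cluster $i$ with $n_i$ customers, discretize $\phi_i\in\knwfamily$ on an $\epsilon_i$-net at $\epsilon_i\sim 1/\sqrt{n_i}$, which by \Cref{ass: mutual information} and \Cref{lem:mixture KL} contributes $\tfrac{\knwdimension}{2}\log n_i + O(\knwLip)$ bits; (c) bound the contribution of the infinite tail of unseen atoms by bounding the KL between two PYP mixtures using \Cref{lem:mixture KL}, absorbing it into the tail mass $\Len_{K_N} = \widetilde O(N^{\alpha-1})$. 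Summing the per-cluster costs, using $\sum_i n_i = N$ together with $K_N = \widetilde O(N^\alpha)$ and Jensen's inequality on $\log n_i$, the total is $\widetilde O(\knwdimension\,N^\alpha)$ bits, yielding $\widetilde O(\knwdimension/N^{1-\alpha})$ per sentence after taking expectation over the partition.

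\textbf{Main obstacle.} The syntax bound is essentially textbook parametric Bayesian redundancy; the real work is in the knowledge part, and specifically in making three things match: (i) the partition-encoding cost must be kept to $\widetilde O(N^\alpha)$, requiring the exchangeable-CRP structure rather than a naive enumeration of partitions; (ii) the infinite tail of unused atoms in the PYP must be absorbed, which is where \Cref{lem:L_j} enters as a clean bound on the residual stick length; and (iii) clusters with very small $n_i$ require the discretization resolution to be chosen cluster-wise to avoid blowing up the per-cluster bit cost, while still keeping the per-sample KL penalty $O(1/N)$. The exponent $1-\alpha$ in the data scaling rate emerges directly from the $K_N = \widetilde O(N^\alpha)$ PYP asymptotics, making the choice of a Pitman--Yor prior the essential structural ingredient rather than a mere modelling convenience.
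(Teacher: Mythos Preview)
Your high-level strategy matches the paper's: build a finite covering/discretization of the parameter space, use \Cref{ass: mutual information} to convert net resolution into per-sample KL, and invoke the PYP asymptotics so that the effective number of knowledge parameters scales as $\widetilde O(N^\alpha)$. The paper, however, executes this differently in three respects, and your version leaves one step underjustified.

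First, the paper does not condition on the random partition or use $K_N$. Instead it truncates \emph{deterministically} at $m=\lfloor N^\alpha\rfloor$ atoms and builds an explicit finite set $\mathcal{Q}$ of candidate distributions by (i) discretizing the truncated weight vector $(p_1,\dots,p_m,\Len_m)$ on a simplex grid, (ii) discretizing each of the $m$ atoms $\phi_i$ on a single fixed $\epsilon$-net of $\knwfamily$ (not adaptive), and (iii) replacing the entire tail $\sum_{i>m}p_i\delta_{\phi_i}$ by the uniform distribution on the finite support $\knwset$, which costs $N\,\E[\Len_m]\log|\knwset|$ via \Cref{lem:mixture KL} and \Cref{lem:L_j}. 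The mixture $Q_0=|\mathcal{Q}|^{-1}\sum_{Q\in\mathcal{Q}}Q^N$ is then compared directly to $P_{\dataModel}^N$ with no latent assignments. Your adaptive per-cluster resolution $\epsilon_i\sim 1/\sqrt{n_i}$ is more refined but unnecessary; the paper takes a single $\epsilon=1/N$ throughout and still hits the target.

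Second, your claim that ``by exchangeability the CRP seating probabilities can themselves be used as the encoding distribution'' with cost $\widetilde O(K_N)$ skips the real work. Using the CRP as the code for $Z_{1:N}$ is indeed Bayes-optimal, but the resulting redundancy is exactly $\mathbb I(Z_{1:N};p)$, and exchangeability alone does not bound that quantity. Showing $\mathbb I(Z_{1:N};p)=\widetilde O(N^\alpha)$ requires essentially the same truncation-plus-simplex-covering argument the paper carries out; you have relocated this step, not eliminated it.

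Third, your tail discussion is internally inconsistent: once you condition on the full partition $Z_{1:N}$, every sample lands in an activated cluster and there is no ``infinite tail of unseen atoms'' to absorb. A tail term only appears if you truncate the atom list \emph{before} seeing the data, as the paper does; in that case what closes the bound is the finiteness of $\knwset$ (paying $\log|\knwset|$ per tail sample), an assumption your sketch never invokes.
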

\begin{proof} 
We use the {\em index of resolvability} approach 
similar to \citep{barron1991minimum} to prove the theorem, by constructing a covering of the parameter space in terms of 
KL-divergence. 
However, we cannot directly apply the conclusion, as we are unable to explicitly construct a KL-divergence covering for $\dataprobability$. Instead, we can construct a set of probability distributions $\mathbb{Q}$ such that, for every specific $\dataModel \in \datafamily$, there exists a $Q \in \mathbb{Q}$ satisfying $\DKL(\dataModel \,\|\, Q) \leq f(\dataModel)$, where $f(\dataModel)$ is a value dependent on $\dataModel$.

Define the KL-covering number of the probability family $\mathcal{P}$ and the $L_2$ norm covering number of the parameter space $\Phi$ as:
\[
N_{\text{kl}}(\epsilon, \mathcal{P}) := \inf \left\{ n \in \mathbb{N} \mid \exists Q_i, i = 1, \ldots, n, \sup_{P \in \mathcal{P}} \min_i D_{\text{kl}}(P \| Q_i) \leq \epsilon^2 \right\}.
\]



\[
N_{L_2}(\epsilon, \Phi) := \inf \left\{ n \in \mathbb{N} \mid \exists \phi_i, i = 1, \ldots, n, \sup_{\phi \in \Phi} \min_i \|\phi-\phi_i\|_2\le \epsilon \right\}.
\]

We first construct KL-divergence coverings for the probability families $\synprobability$ and $\knwprobability$, as well as a KL-divergence covering for the $n$-dimensional discrete simplex. We subsequently construct a point-wise approximate covering set $\mathbb{Q}$ for $\dataprobability$ based on these coverings.
By \Cref{ass: mutual information}, we know that 
$$
N_{\text{kl}}(\epsilon, P_{\synfamily})\le N_{L_2}(\epsilon/\synLip, \synfamily) \quad \text{ and }\quad 
N_{\text{kl}}(\epsilon, P_{\knwfamily})\le N_{L_2}(\epsilon/\knwLip, \knwfamily).
$$
Then we consider the $L_2$ norm covering number of the parameter space $\synfamily$. 
We simply divide each coordinate into \( \lceil \frac{2\synLip\sqrt{\syndimension}}{ \epsilon} \rceil \) equal parts. It is easy to see that this results in an \( L_2 \) covering of the parameter space with a resolution of approximately \( \frac{\epsilon}{\synLip} \).
 Thus we know that 
\[
N_{\text{kl}}(\epsilon, P_{\boldsymbol{\synfamily}})\le N_{L_2}(\epsilon/\synLip, \synfamily)\le \left( \dfrac{2\synLip\sqrt{\syndimension}}{\epsilon} \right) ^{\syndimension}.
\]
Similarly, we know that 
\[
N_{\text{kl}}(\epsilon, P_{\boldsymbol{\knwfamily}})\le \left( \dfrac{2\knwLip\sqrt{\knwdimension}}{\epsilon} \right) ^{\knwdimension}.
\]
Next, we consider a KL-covering of the \( n \)-dimensional simplex.

Denote $\mathcal{P}_n=\{(p_1,\cdots,p_n);p_i\ge 0 ,\sum\limits_{i=1}^np_i=1\}$.
Consider the discretization of the simplex $\mathcal{Q}_{n,\epsilon}=\{(q_1,\cdots,q_n); \lfloor n/\epsilon^2 \rfloor q_i  \in \mathcal{N}^+,\sum\limits_{i=1}^n q_i=1\}$. 

We prove that $\mathcal{Q}_{n,\epsilon}$ is an $\epsilon^2$-KL-covering 
Then for any $(p_1,p_2,\cdots,p_n)\in \mathcal{P}_n$, there exists $(q_1,\cdots,q_n)\in \mathcal{Q}_{n,\epsilon}$ such that $|p_i-q_i|\le \dfrac{\epsilon^2}{n}$.
Therefore, we know that \begin{align*}
    \sum\limits_{i=1}^n p_i\log \dfrac{p_i}{q_i}&\le \sum\limits_{i=1}^n p_i\dfrac{p_i-q_i}{q_i}
   =\sum\limits_{i=1}^n \dfrac{(p_i-q_i)^2}{q_i}
    \le \sum\limits_{i=1}^n \dfrac{\epsilon^2}{n} \le \epsilon^2.
\end{align*}
Note that a simple upper bound for $|\mathcal{Q}_{n,\epsilon}|$ is $|\mathcal{Q}_{n,\epsilon}|\le \lfloor n/\epsilon^2\rfloor^n$.

For notational simplicity, we denote
\[
\syncover = \left(\frac{2\synLip\sqrt{\syndimension}}{\epsilon}\right)^{\syndimension}, \quad 
\knwcover = \left(\frac{2\knwLip\sqrt{\knwdimension}}{\epsilon}\right)^{\knwdimension},
\]
as upper bounds on the $\epsilon-$KL-covering numbers of $\synprobability$ and $\knwprobability$, respectively.

Assume that distributions 
\[
\{Q_{\text{syn}}^{(1)}(\Synelem), \dots, Q_{\text{syn}}^{(\syncover)}(\Synelem)\}
\] 
form an $\epsilon$-KL-covering of $\mathcal{P}_{\synfamily}$, and distributions 
\[
\{Q_{\text{knw}}^{(1)}(\Knwelem), \dots, Q_{\text{knw}}^{(\knwcover)}(\Knwelem)\}
\] 
form an $\epsilon$-KL-covering of $\mathcal{P}_{\knwfamily}$.
Let $m = \lfloor N^{\alpha} \rfloor$ denote the truncation point used subsequently in the truncated estimation of $\knwprobability$.
Then we construct the covering of $\dataprobability$.
Denote by $\mathcal{Q}$ the set of joint distributions over knowledge element $\Knwelem$ and syntax element $\Synelem$.
A distribution $Q(\Knwelem,\Synelem)\in \mathcal{Q}$ if and only if it satisfies all of the following conditions:

\begin{itemize}
    \item There exist $q = (q_1,\dots,q_{m+1})\in \mathcal{Q}_{m+1,\epsilon}$ and indices $k_1,\dots,k_{m}$, such that
    \[
    Q(\Knwelem)=\sum\limits_{i=1}^{m} q_i Q_{\text{knw}}^{(k_i)}+q_{m+1}Q_u,
    \]
    where $Q_u$ is the uniform distribution over the support of knowledge element $\knwset$.

    \item There exist indices $s_1,\dots,s_{n_s}$, such that for each $i\in\{1,\dots,n_s\}$,
    \[
    Q(\Synelem\mid \Knwelem\in \mathbb{K}_i)=Q_{\text{syn}}^{(s_i)},
    \]
    where $\mathbb{K}_i$ is the set of knowledge corresponding to the syntax model $\synModel^{(i)}$.
\end{itemize}

Next we prove that, for any $P_{\dataModel}\in \mathcal{P}_{\datafamily}$, we have that
\[
\min_{Q\in \mathcal{Q}}\DKL(P_{\dataModel}(\Knwelem,\Synelem)||Q(\Knwelem,\Synelem))\le \Len_m\log \mathbb{|K|}+3\epsilon^2,
\]
where $\Len_j=\sum\limits_{i=j+1}^{\infty} p_j$ is the remaining stick length of $\knwModel$. (See \Cref{lem:L_j}.)


We rewrite $\knwModel$ as:
\[
\knwModel=\sum\limits_{i=1}^m p_i\delta_{\phi_i}+\Len_m\delta_{\phi^{\prime}},
\]
where $\phi'=\dfrac{1}{\Len_m}\sum\limits_{i=m+1}^{\infty}p_i\delta_{\phi_i}$.
By the definition of $\mathcal{Q}$, there exists a distribution $Q^*\in\mathcal{Q}$ such that
\[
Q^*(\Knwelem)=\sum\limits_{i=1}^{m} q^*_i Q_{\text{knw}}^{(k^*_i)}+q_{m+1}^* Q_u,\quad
Q^*(\Synelem\mid \Knwelem\in \mathbb{K}_i)=Q_{\text{syn}}^{(s^*_i)},
\]
and $Q^*$ satisfies the following three conditions:
\begin{itemize}
    \item $\DKL(P_{\phi_i}(\Knwelem)||Q_{\text{knw}}^{(k^*_i)}(\Knwelem))\le \epsilon^2$ for all $1\le i\le m$.
    \item $\sum\limits_{i=1}^{m}p_i\log \dfrac{p_i}{q^*_i}+\Len_m\log\dfrac{\Len_m}{q^*_{m+1}}\le \epsilon^2.$
    \item $\DKL(P_{\synModel^{(i)}}(\Synelem)||Q_{\text{syn}}^{(s^*_i)}(\Synelem))\le \epsilon^2$ for all $1\le i\le n_s$.
\end{itemize}



Then by the chain rule of KL-divergence and \Cref{lem:mixture KL}, we know that 
\begin{align*}
    \DKL(P_{\dataModel}(\Knwelem,\Synelem)||Q^*(\Knwelem,\Synelem))=&\DKL(P_{\knwModel}(\Knwelem)||Q^*(\Knwelem))+ \mathbb{E}_K\left[\DKL\left(P_{\synModel}(\Synelem|\Knwelem)||Q^*(\Synelem|\Knwelem)\right)\right]
    \\ 
    \le& \sum\limits_{i=1}^m p_i\log \dfrac{p_i}{q^*_i}+\Len_m\log\dfrac{\Len_m}{q^*_{m+1}}+\sum\limits_{i=1}^mp_i\DKL(P_{\phi_i}(\Knwelem)||Q_{\text{knw}}^{(k^*_i)}(\Knwelem)) \\
   + &\Len_m\DKL(P_{\phi'}(\Knwelem)||Q_u(\Knwelem)) 
    + \sum\limits_{i=1}^{n_s}P(\Knwelem\in \mathbb{K}_i)\DKL(P_{\synModel^{(i)}}(\Synelem)||Q_{\text{syn}}^{(s^*_i)}(\Synelem))
    \\  
    \le & \epsilon^2+ \sum\limits_{i=1}^m p_i\epsilon^2 + \Len_m\log |\mathbb{K}|+\sum\limits_{i=1}^{n_s}P(\Knwelem\in \mathbb{K}_i)\epsilon^2
    \\ \le & 3\epsilon^2+\Len_m\log |\mathbb{K}| 
\end{align*}
Subsequently, we derive an upper bound for the associated covering number. We know that $|\mathcal{Q}|\le |\mathcal{Q}_{m+1,\epsilon}|\knwcover^m\syncover^{n_s}$. That is to say
\begin{align*}
    \log |\mathcal{Q}|&\le \log |\mathcal{Q}_{m+1,\epsilon}|+m\log \knwcover+n_s\log\syncover
    \\ &\le 2 m\log \dfrac{m}{\epsilon^2}+m\knwdimension\log\dfrac{\knwLip\knwdimension}{\epsilon}+n_s\syndimension\log\dfrac{\synLip\syndimension}{\epsilon}.
\end{align*}
Finally, we prove the conclusion in this theorem.
Consider $Q_0=\dfrac{1}{|\mathcal{Q}|}\sum\limits_{Q\in \mathcal{Q}}Q^N$.
We can bound the redundancy as follows:
\begin{align*}
\inf_{\Model} \redundency_N(\Model,\datafamily)
   \le & \redundency_N(Q_0,\datafamily)
   =\int_{\datafamily} \mathcal{P}(\dataModel)\DKL(P^N_{\dataModel}||Q_0) d\dataModel \\ =& \int_{\datafamily} \mathcal{P}(\dataModel)\mathbb{E}\left[\log \dfrac{ P^N_{\dataModel}}{\sum\limits_{Q\in \mathcal{Q}}Q^N}+\log |\mathcal{Q}|\right]  d\dataModel 
    \\ \le& \log |\mathcal{Q}|+ \int_{\datafamily} \mathcal{P}(\dataModel)\mathbb{E}\left[\log \dfrac{ P_{\dataModel}^N}{\max\limits_{Q\in \mathcal{Q}} 
    Q^N}\right]d\dataModel
    \\ \le& \log |\mathcal{Q}|+ N\int_{\datafamily} \mathcal{P}(\dataModel)\min_{Q\in \mathcal{Q}}\mathbb{E}\left[\log \dfrac{ P_{\dataModel}}{
    Q}\right]d\dataModel
    \\ =& \log |\mathcal{Q}|+ N\int_{\datafamily} \mathcal{P}(\dataModel)\min_{Q\in \mathcal{Q}}\DKL(P_{\dataModel}||Q)d\dataModel
    \\ \le & \log |\mathcal{Q}|+ N\int_{\datafamily} (\Len_m\log|\mathbb{K}|+3\epsilon^2) d\dataModel
    \\ = & \log |\mathcal{Q}|+ 3N\epsilon^2 + N\mathbb{E}[\Len_m]\log |\mathbb{K}|
\end{align*}
Plugging in the bound of $|\mathcal{Q}|$
and $\mathbb{K}$, we can see the above is 
upper bounded by
\[N\mathbb{E}[\Len_m]\log |\mathbb{K}| + 3N\epsilon^2 +  2m\log \dfrac{m}{\epsilon^2}+2m\knwdimension\log\dfrac{\knwLip\knwdimension}{\epsilon}+n_s\syndimension\log\dfrac{\synLip\syndimension}{\epsilon}.
\]
Choosing $\epsilon=N^{-1}$ and applying the result in \Cref{lem:L_j}, we know that 
\begin{align*}
    \inf_{\Model} \redundency_N(\Model,\datafamily)&\le O(N^{\alpha}(\log |\knwset|+\knwdimension(\log N+\log \knwLip+\log \knwdimension)))
\\ &+O(n_s\syndimension(\log \synLip+\log \syndimension+\log N)).
\end{align*}
Therefore, we know that 
\begin{align*}
    \inf_{\Model}\dfrac{1}{N}\redundency_N(\Model,\datafamily)=\dfrac{1}{N}\mathbb{I}(X_{1:N}; \dataModel) = 
    \widetilde{O}\left(\frac{\knwdimension}{N^{1-\alpha}}+\frac{ n_s\syndimension}{N}\right).
\end{align*}
This completes the proof.
\end{proof}

\end{document}